\newtheorem{theorem}{Theorem}[section]
\newtheorem{lemma}[theorem]{Lemma}
\newtheorem{remark}{Remark}
\newtheorem{assumption}{Assumption}
\crefname{assumption}{assumption}{assumptions}
\crefname{subsection}{subsection}{subsections}
\crefname{lemma}{lemma}{lemma}
\crefname{property}{property}{properties}
\crefname{table}{table}{tables}
\crefname{remark}{remark}{remarks}
\Crefname{remark}{Remark}{Remarks}
\newcommand{\cmark}{\ding{51}}
\newcommand{\xmark}{\ding{55}}
\title{A Finite-Sample Analysis of an Actor-Critic Algorithm for Mean-Variance Optimization in a Discounted MDP}
\author{Tejaram Sangadi\textsuperscript{1}, Prashanth L.A.\textsuperscript{2}, Krishna Jagannathan\textsuperscript{1}}
\keywords{Risk-Sensitive RL, Temporal Difference (TD) Learning, SPSA, Sample Complexity.} 
\begin{document}

\makeCover  
\clearpage
\pagestyle{plain}  
\maketitle  

\begin{abstract}
Motivated by applications in risk-sensitive reinforcement learning, we study mean-variance optimization in a discounted reward Markov Decision Process (MDP). Specifically, we analyze a Temporal Difference (TD) learning algorithm with linear function approximation (LFA) for policy evaluation. We derive finite-sample bounds that hold (i) in the mean-squared sense and (ii) with high probability under tail iterate averaging, both with and without regularization. Our bounds exhibit an exponentially decaying dependence on the initial error and a convergence rate of $O(1/t)$ after $t$ iterations. Moreover, for the regularized TD variant, our bound holds for a universal step size. Next, we integrate a Simultaneous Perturbation Stochastic Approximation (SPSA)-based actor update with an LFA critic and establish an $O(n^{-1/4})$ convergence guarantee, where $n$ denotes the iterations of the SPSA-based actor-critic algorithm. These results establish finite-sample theoretical guarantees for risk-sensitive actor-critic methods in reinforcement learning, with a focus on variance as a risk measure.
\end{abstract}

\section{Introduction}
In the standard reinforcement learning (RL) setting, the objective is to learn a policy that maximizes the value function, which is the expected value of the cumulative reward obtained over a finite or infinite time horizon. However, in many practical scenarios such as finance, automated driving and drug testing, a risk sensitive learning paradigm is crucial, where the value function (an expectation) must be balanced with an appropriate risk metric associated with the reward distribution. One approach is to formulate a constrained optimization problem, using the risk metric as a constraint and the value function as the objective. Variance is a popular risk measure and is typically incorporated into risk-sensitive optimization as a constraint while optimizing for the expected value. This mean-variance formulation was introduced in the seminal work of \citet{markowitzPortfolioSelection1952}. Mean-variance optimization in RL has been studied in several works; see, e.g., \citet{mannorAlgorithmicAspectsMean2013,tamarLearningVarianceRewardToGo2016,l.a.VarianceconstrainedActorcriticAlgorithms2016}.  We study mean-variance optimization in a discounted reward Markov decision process (MDP). Our key contribution is the analysis of an actor-critic algorithm for mean-variance optimization, along with finite-sample guarantees in this setting.

\noindent\textbf{Main Contributions.}  
We study a discounted reward MDP with variance as the risk criterion and present two main contributions. Since one common approach to variance estimation is based on the difference between the second moment and the square of the first moment, estimating both moments is essential. Our first key contribution concerns the sub-problem of jointly evaluating the value function (first moment) and the second moment of the discounted cumulative reward. For simplicity, we refer to the second moment of the discounted cumulative reward as the square-value function. To address the curse of dimensionality in large state-action spaces, we analyze temporal difference (TD) learning with linear function approximation (LFA) for these estimates. 
 
We present finite-sample bounds that quantify the deviation of the iterates from the fixed point, both in expectation and with high probability. The fixed point is joint in the sense that it includes both the value function and the square-value function. We present bounds for a constant step-size with and without tail-averaging; see Table \ref{tab:summary-expec} for a summary. Next, we establish \(O(1/t)\) finite-time convergence bounds for tail-averaged TD iterates, where \(t\) denotes the number of iterations of the TD algorithm. Furthermore, we present a finite-sample analysis of the regularized TD algorithm. From this analysis, we establish an \(O(1/t)\) bound, similar to the unregularized case. An advantage of regularization is that the step-size choice is universal, i.e., it does not require knowledge of the eigenvalues of the underlying linear system, whereas the unregularized TD bounds depend on such eigenvalue information, which is typically unknown in practice.

While finite-sample analysis of TD with LFA has been studied in several recent works  \citep[cf.][]{prashanthConcentrationBoundsTemporal2021,dalalFiniteSampleAnalyses2018,bhandariFiniteTimeAnalysis2021,samsonovImprovedHighProbabilityBounds2024, agrawalPolicyEvaluationVariance2024}, to the best of our knowledge, no prior work has established finite-sample bounds for policy evaluation of variance in the discounted reward MDP setting. Our bounds explicitly characterize their dependence on the discount factor, feature bounds, and rewards. Compared to existing finite-sample bounds for TD learning, the analysis of mean-variance-style TD updates is more intricate, as it requires tracking the solution of an additional projected fixed point by solving a separate Bellman equation for the square-value function. Furthermore, the Bellman equation associated with the square-value function includes a cross-term involving the value function (see \eqref{eq:T2p} in the supplementary material). Due to this cross-term, obtaining a standard \( O(1/t) \) mean-squared error bound is challenging when using a constant step size, unless the spectral properties of the underlying linear system are known.
To overcome this dependence, we investigate a regularized version of the mean-variance TD updates. To the best of our knowledge, ours is the first work to obtain a $O(1/t)$ MSE bound with a universal step size for mean-variance TD. Prior works on TD-type algorithms for other notions of variance, cf. \cite{agrawalPolicyEvaluationVariance2024,eldowaFiniteSampleAnalysis2022}, present $O(1/t)$ bounds with a step size choice that requires underlying eigenvalue information.

\begin{table}[!t]
    \caption{Summary of the MSE bounds for a TD-critic.}
    \label{tab:summary-expec}
    \centering
    \small 
    \renewcommand{\arraystretch}{1.2} 
    \begin{threeparttable}
    \begin{tabular}{l c c c c c} 
        \toprule \multirow{2}{*}{\textbf{Paper}} &  \multirow{2}{*}{\textbf{Iterate}} & \multirow{2}{*}{\textbf{Objective}} & \multirow{2}{*}{\textbf{Rate}} & \multirow{2}{*}{\textbf{Step size}} & \textbf{Universal} \\ 
        &&&&&\textbf{step size}\\
        \midrule
        \multirow{2}{*}{\citet{l.a.VarianceconstrainedActorcriticAlgorithms2016}} & \multirow{2}{*}{Last iterate} & Mean-& \multirow{2}{*}{--\textsuperscript{1}} & \multirow{2}{*}{$\tfrac{c_{0}c}{c+t}$} & \multirow{2}{*}{\xmark}
        \\ && variance &&& \\
        \citet{dalalFiniteSampleAnalyses2018} & Last iterate & Mean & $O(1/t^\sigma)$ & $1/t^{\sigma}$ & \cmark \\
        \citet{bhandariFiniteTimeAnalysis2021}\textsuperscript{2} & Full average & Mean & $O(1/t)$ & $1/\sqrt{T}$ & \cmark \\
        \multirow{2}{*}{\citet{eldowaFiniteSampleAnalysis2022}} & \multirow{2}{*}{Full average} & Mean- & \multirow{2}{*}{$O(1/t)$} & \multirow{2}{*}{constant} & \multirow{2}{*}{\xmark} \\ && variance\textsuperscript{3} &&&\\
        \citet{patilFiniteTimeAnalysis2023} & Tail average & Mean & $O(1/t)$ & {constant} & {\cmark} \\
        \multirow{2}{*}{\citet{agrawalPolicyEvaluationVariance2024}} & \multirow{2}{*}{Tail average} & Mean- & \multirow{2}{*}{$O(1/t)$} & \multirow{2}{*}{constant} & \multirow{2}{*}{\xmark} \\ && variance\textsuperscript{4} &&& \\ \citet{mitraSimpleFinitetimeAnalysis2025} & Weighted average\textsuperscript{5} & Mean & $O(1/t)$ & constant & \xmark \\
        \multirow{2}{*}{\textbf{This work}} & \multirow{2}{*}{Tail average} & Mean- & \multirow{2}{*}{$O(1/t)$} & \multirow{2}{*}{constant} & \multirow{2}{*}{\xmark} \\ && variance &&& \\ 
        \multirow{2}{*}{\textbf{This work}} & Regularized & Mean- & \multirow{2}{*}{$O(1/t)$} & \multirow{2}{*}{constant} & \multirow{2}{*}{\cmark} \\ 
        &tail average &variance&&&\\
        \bottomrule
    \end{tabular}
    \begin{tablenotes}
      \item[]  \textsuperscript{1} Asymptotic convergence of mean-variance TD shown. Here, $c_0$ and $c$ are arbitrary constants depending on the minimum eigenvalue.
        \textsuperscript{2} T = number of TD iterations.
        \textsuperscript{3} Variance of per-step reward as the risk measure.
        \textsuperscript{4} Asymptotic variance for average-reward MDP as the risk measure.
        \textsuperscript{5} Weights are determined by \( (1 - \alpha A)^{-(t+1)} \) with \( A = 0.5\omega(1 - \gamma) \), which makes them indirectly dependent on the minimum eigenvalue \( \omega \) and the discount factor \( \gamma \). Here, \( \alpha \) is step size dependent on the minimum eigenvalue $\omega$.
    \end{tablenotes}
    \end{threeparttable}
\end{table}

Our second key contribution lies in analyzing an actor-critic algorithm for mean-variance and deriving finite-sample guarantees. The critic part uses the aforementioned LFA-based policy evaluation for a fixed policy parameter. The actor uses an SPSA-based gradient estimator \citep{spallMultivariateStochasticApproximation1992}, departing from the more common risk-neutral approach of employing a likelihood ratio-based gradient estimator supported by the policy gradient theorem (see \Cref{sec:actor} for a discussion on SPSA's necessity). SPSA estimates policy gradients for the value and square-value functions using two policy trajectories: one generated using the current policy parameter and another using a randomly perturbed parameter.

We provide non-asymptotic convergence rates for an SPSA-based actor in the mean-variance framework. This result quantifies convergence to the stationary point in terms of the gradient norm of the Lagrangian, addressing a gap in prior work that focused exclusively on asymptotic guarantees. As an aside, mean-variance optimization has been shown to be NP-hard, even with model information available \citep{mannorAlgorithmicAspectsMean2013}. Actor-critic methods present a viable alternative approach, and our analysis provides the rate of convergence for such an algorithm tailored to the mean-variance setting.
 Specifically, we show an $O(n^{-\frac14})$ performance guarantee for the overall algorithm, where $n$ is the number of actor loop iterations. To the best of our knowledge, there are no finite-sample guarantees for zeroth order actor-critic, even for the risk-neutral setting.
 
Our results are beneficial for three reasons. First, we exhibit $O(1/t)$ bounds for the regularized TD variant with a step size that is universal. In contrast, a universal step size for vanilla mean-variance TD is not feasible owing to certain cross-terms that are unique to the case of mean-variance policy evaluation. Our key observation is that regularization enables the use of a universal step size that is independent of the eigenvalues of the underlying system. Second, our proof is tailored to mean-variance TD, making the constants clear.  In contrast, it is difficult to infer them from the general LSA bounds in \citep{durmusFiniteTimeHighProbabilityBounds2024, mouLinearStochasticApproximation2020}. Third, we provide high-probability bounds that exhibit better scaling w.r.t. the confidence parameter as compared to \citet{samsonovImprovedHighProbabilityBounds2024}.

\noindent \textbf{Related Work.}
This paper performs  a finite-sample analysis of a TD critic, and an SPSA actor for mean-variance optimization in a discounted RL setting. We briefly review relevant works on each of these topics.

\textbf{Critic. } TD learning, originally proposed by \citet{suttonLearningPredictMethods1988}, has been widely used for policy evaluation in RL. \citet{tsitsiklisAnalysisTemporaldifferenceLearning1997} established asymptotic convergence guarantees for TD learning with LFA. Many recent works have focused on providing non-asymptotic convergence guarantees for TD learning \citep{bhandariFiniteTimeAnalysis2021,dalalFiniteSampleAnalyses2018,lakshminarayananLinearStochasticApproximation2018,srikantFiniteTimeErrorBounds2019,prashanthConcentrationBoundsTemporal2021,patilFiniteTimeAnalysis2023,durmusFiniteTimeHighProbabilityBounds2024}. 
In a recent study by \citet{samsonovImprovedHighProbabilityBounds2024}, the authors derived refined error bounds for TD learning by combining proof techniques from \citep{mouLinearStochasticApproximation2020,durmusFiniteTimeHighProbabilityBounds2024} with a stability result for the product of random matrices. In contrast, our results target a different system of linear equations. Moreover, as mentioned before, our bounds for regularized TD feature a universal step size.
The reader is referred to Section \ref{sec:mean-varTDwithLFA} for a detailed comparison of our critic bounds to the current literature.

\textbf{Actor-Critic. } In \citep{leiZerothOrderActorCriticEvolutionary2025}, the authors propose a zeroth-order actor critic in a risk-neutral RL setting. However, they do not provide a finite-sample analysis.
In \citep{l.a.VarianceconstrainedActorcriticAlgorithms2016}, which is the closest related work, the authors propose an SPSA-based actor-critic algorithm for mean-variance optimization, and establish asymptotic convergence. In contrast, we provide a finite-sample analysis of their algorithm with a few variations: (i) We incorporate tail-averaging in TD-critic and derive finite-sample bounds for a universal step size; (ii) We prove a smoothness result for the Lagrangian of the mean-variance problem and use this result to provide a non-asymptotic bound for the SPSA-based actor that employs mini-batching for the critic updates. In \citep{xuImprovingSampleComplexity2020,kumarSampleComplexityActorcritic2023}, the authors analyze risk-neutral actor critic algorithms with a gradient estimate based on the likelihood ratio method. They provide a finite-sample analysis. However, the likelihood ratio method for gradient estimation does not work for the case of variance, and hence, our non-asymptotic analysis involves a significant departure in the proof for the SPSA-based actor that we consider.

\section{Problem formulation} \label{sec:problemformulation}
We consider an MDP  with state space $\mathcal{S}$ and action space $\mathcal{A}$, both assumed to be finite. The reward function $r(s,a)$ maps state-action pairs $(s,a)$ to a reward, with $s \in \mathcal{S}$ and $a \in \mathcal{A}$. In this work, we consider a stationary randomized policy $\pi$ which maps each state to a probability distribution over the action space. We consider a discounted MDP setting, and use $\disc \in (0,1)$ to denote the discount factor. We use $\mathbb{P}(s'|s,a)$ to denote the probability of transitioning from state $s$ to next state $s'$ given that action $a$ is chosen following a policy $\pi$. 
The transition probability matrix $\TPM$ gives the probability of going from state $s$ to $s'$ given a policy $\pi$. The elements of this matrix of dimension $|\S|\times |\S|$ are given by $  \TPM(s,s') = \sum_{a} \pi(a|s) \mathbb{P}(s'|s,a).$ The value function $V^{\pi}(s)$, which denotes the expected value of cumulative sum of discounted rewards when starting from state $s_{0}=s$ and following the policy $\pi$, is defined as
\begin{equation} \label{eq:valuefn}
V^\pi(s) \triangleq \E\left[\textstyle\sum_{t=0}^{\infty}\disc^{t} r(s_{t},a_{t})\,\middle|\,s_0=s\right].
\end{equation} 
Furthermore, the variance of the infinite horizon discounted reward from state $s_0 = s$, denoted as $\Lambda^\pi(s)$, is defined as $\Lambda^\pi(s) \triangleq U^\pi(s)-V^\pi(s)^2,$
where $U^\pi(s)$ represents the second moment of the cumulative sum of discounted rewards, and is defined as
\begin{equation} \label{eq:secondmoment}
U^\pi(s) \triangleq \E\left[\left(\textstyle \sum_{t=0}^{\infty}\disc^{t} r(s_{t},a_{t})\right)^{2} \,\middle|\, s_0=s\right].
\end{equation}
Henceforth, we shall refer to $U^\pi$ as the square-value function. The well-known mean-variance optimization problem in a discounted MDP context is as follows: For a given state \( s_0 = s \) and threshold \( c > 0 \), our goal is to solve the following constrained optimization problem:
\begin{equation}
\max_\pi V^\pi(s)\quad\quad \text{subject to} \quad\quad \Lambda^\pi(s)\leq c.\label{eq:mean-var-mdp}
\end{equation}
The value function $V^{\pi}(s)$ satisfies the Bellman equation $T_{1}V^{\pi} = V^{\pi}$, where $T_{1}: \mathbb{R}^{|\S|} \rightarrow \mathbb{R}^{|\S|}$ is the Bellman operator, defined by 
\(T_{1}(V^{\pi}(s_{0})) \triangleq \E^{\pi,\TPM}\psq{r(s_{0},a_{0})+\disc V^{\pi}(s')},\)
where the actions are chosen according to the policy $\pi$.
It is well known that $T_1$ is a contraction mapping. In \citet{sobelVarianceDiscountedMarkov1982}, the author derives a Bellman type equation for $\Lambda^\pi(s)$. However, the underlying operator of this equation is not monotone. To workaround this problem, \citet{tamarLearningVarianceRewardToGo2016,l.a.VarianceconstrainedActorcriticAlgorithms2016} use the square-value function $U^\pi(s)$, which satisfies a fixed point relation that is monotone. Given $V^\pi(s), U^\pi(s)$, the variance can be calculated using $\Lambda^\pi(s)$. Using Proposition 6.1 in \citep{l.aRiskSensitiveReinforcementLearning2022}, we expand the square-value function \eqref{eq:secondmoment} as 
\[ \resizebox{\textwidth}{!}{$ U^{\pi}(s) = \sum_{a} \pi(a|s) r(s,a)^{2} + \disc^{2} \sum_{a,s'}\pi(a|s)\mathbb{P}(s'|s,a)U^{\pi}(s') +2\disc \sum_{a,s'} \pi(a|s)\mathbb{P}(s'|s,a)r(s,a)V^{\pi}(s')$}
\]
Similar to the value function, the square-value function also satisfies a Bellman equation $\textstyle T_{2}U^{\pi}=U^{\pi}$, where $\textstyle T_{2}:\mathbb{R}^{|\S|} \rightarrow \mathbb{R}^{|\S|}$ is the Bellman operator, given by
\(\textstyle{
    T_{2}U^{\pi}(s) \define \; \E^{\pi, \TPM} [r(s,a)^{2}\!+\!\disc^{2} U^{\pi}(s')\! +\!2\disc r(s,a) V^{\pi}(s')].}\)
For a given policy $\pi$, the Bellman operators $T_{1}$ and $T_{2}$ can be represented in a compact vector-matrix form as
\(T_{1}(V)=r+\disc \TPM V\), \( T_{2}(U) = \tilde{r}+2 \disc \DiagR \TPM V+\disc^2 \TPM U,\) 
where $U$, $V$, $r$ and $\tilde{r}$ are $|\S|\times 1$ vectors with $r(s_{i}) \!=\! \sum_{a \in \mathcal{A}} \pi(a|s_{i}) r(s_{i},a), \; \tilde{r}(s_{i}) \!=\! \sum_{a \in \mathcal{A}} \pi(a|s_{i}) r(s_{i},a)^{2}$. Here, $\DiagR$ is a $|\S|\times|\S|$ diagonal matrix with $r(s_{i})$ as the diagonal elements for $i \in \{1,\dots,|\S|\}$. Now, we construct an operator $T:\mathbb{R}^{2|\S|} \rightarrow \mathbb{R}^{2|\S|}$, which is given by
\(T(V,U) = (T_{1}(V), T_{2}(U))\tr\)
A sub-problem of \eqref{eq:mean-var-mdp} is policy evaluation, i.e., estimation of $V^\pi(\cdot)$ and $\Lambda^\pi(\cdot)$ for a given policy $\pi$. \citet{l.aRiskSensitiveReinforcementLearning2022,tamarLearningVarianceRewardToGo2016}  establish that the operator $T$  is a contraction mapping with respect to a weighted norm, ensuring a unique fixed point for $T$.  In the next section, we describe a TD algorithm with LFA for policy evaluation, and this algorithm is based on \citep{l.a.VarianceconstrainedActorcriticAlgorithms2016}.

\section{Mean-variance TD-critic} \label{sec:mean-varTDwithLFA}

When the state space size $|\mathcal{S}|$ is large, policy evaluation suffers from the curse of dimensionality, as it requires computing and storing the value function for each state in the MDP. A standard approach to overcome this difficulty is to use TD learning with \emph{function approximation}, wherein the value function is approximated using a simple parametric class of functions. The most common example of this is TD learning with LFA \citep{tsitsiklisAnalysisTemporaldifferenceLearning1997}, where the value function for each state is approximated using a linear parameterized family, i.e., $V^\pi(s)\approx\omega\tr \phi(s),$ where $\omega\in\mathbb R^q$ is a tunable parameter common to all states, and $\phi: \mathcal S \rightarrow \mathbb R^q$ is a feature vector for each state $s\in \mathcal S,$ and typically $q\ll |\mathcal S|.$   

We approximate the value function $V^\pi(s)$ and the square-value function $U^\pi(s)$ using linear functions as follows: $V^\pi(s) \approx v^\top\phi_v(s),   \quad
 U^\pi(s) \approx u^\top\phi_u(s)$,
where the features $\phi_v(\cdot)$ and $\phi_u(\cdot)$ belong to low-dimensional subspaces in $\R^{d_1}$ and $\R^{d_2}$, respectively. Let $\Fiv$ and $\Fiu$ denote $|\mathcal{S}|\times d_{1}$ and $|\mathcal{S}|\times d_{2}$  dimensional matrices, with $i\text{-th}$ and $j\text{-th}$ column respectively as $\begin{pmatrix} 
   \phi_{v}^{i}(s_{1}), \dots, \phi_{v}^{i}(s_{|\S|}) 
\end{pmatrix}^\top,$ 
$\begin{pmatrix} 
   \phi_{u}^{j}(s_{1}), \dots, \phi_{u}^{j}(s_{|\S|}) 
\end{pmatrix}^\top$ where $i \in \{1,\dots,d_{1}\}$ and $j \in \{1,\dots,d_{2}\}$. 
For analytical convenience, in our analysis we set $d_{1}=d_{2}=q$. 
We observe that owing to the function approximation, the actual fixed point remains inaccessible. Instead, the objective is to find the projected fixed points, denoted as $\bar{w}=(\bar{v},\bar{u})\tr $ within the following subspaces:
\( S_{v} \coloneqq \left\{ \Fiv v \; \middle| v \in \mathbb{R}^{d_{1}} \right\}, 
\,\,S_{u} \coloneqq \left\{ \Fiu u \; \middle| u \in \mathbb{R}^{d_{2}} \right\}.
\)
We approximate the value and square-value functions within the subspaces defined above.
Accordingly, we construct projections onto $S_{v}$ and $S_{u}$ with respect to a weighted norm, using the stationary distribution as weights. 
For the analysis, we require the following assumptions that are standard for TD with LFA,  \citep[cf.][]{prashanthConcentrationBoundsTemporal2021,bhandariFiniteTimeAnalysis2021,srikantFiniteTimeErrorBounds2019,patilFiniteTimeAnalysis2024}.
\begin{assumption}
        \label{asm:stationary}
        The Markov chain underlying the policy $\pi$ is irreducible. 
\end{assumption}
\begin{assumption}\label{asm:phiFullRank}
    The matrices $\Fiv$ and $\Fiu$ have full column rank. 
\end{assumption} \vspace{-1ex}
With finite state and action spaces, \Cref{asm:stationary} guarantees the existence of a unique stationary distribution $\chi_{\pi}$ for the Markov chain induced by policy $\pi$. 
\Cref{asm:phiFullRank}, commonly made in the context of TD with LFA (cf. \citet{bhatnagarNaturalActorCritic2009,bhandariFiniteTimeAnalysis2021,prashanthConcentrationBoundsTemporal2021}), mandates that the columns of the feature matrices $\Fiv$ and $\Fiu$ be linearly independent, guaranteeing the uniqueness of the fixed points. Additionally, it also ensures the existence of inverse of the feature covariance matrices ($\Fiv\tr \statdist^{\pi}\Fiv$ and $\Fiu\tr \statdist^{\pi}\Fiu$), to define the projection matrices in \eqref{eq:projection operators}. We denote $\mathbf{\Pi}_{v}$ and $\mathbf{\Pi}_{u}$ as the projection matrices which project from state space $\mathcal{S}$ onto the subspaces $\S_{v}$ and $S_{u}$, respectively. For a given policy $\pi$, projection matrices are defined as: 
\begin{equation} \label{eq:projection operators}
    \mathbf{\Pi}_{v} = \Fiv(\Fiv\tr \statdist^{\pi}\Fiv)^{-1}\Fiv\tr \statdist^{\pi} \textrm{ and } \mathbf{\Pi}_{u} = \Fiu(\Fiu\tr \statdist^{\pi}\Fiu)^{-1}\Fiu\tr \statdist^{\pi},
\end{equation}
where $\mathbf\Pi_v$ and $\mathbf\Pi_u$ project the true value and square-value functions onto the linear spaces spanned by the columns of $\Fiv$ and $\Fiu$, respectively. In the above,
$\statdist^\pi$ is a diagonal matrix with entries from the stationary distribution $\chi$.
In \citep{l.a.VarianceconstrainedActorcriticAlgorithms2016}, the authors established the following projected fixed point relations:
\begin{equation}
\Fiv\bar v = \mathbf{\Pi}_v T_v (\Fiv\bar v), \text{ and } \Fiu\bar u = \mathbf{\Pi}_u T_u(\Fiu\bar u).\label{eq:tdfixedpts}
\end{equation}
\citep[Proposition 6.2]{l.aRiskSensitiveReinforcementLearning2022} establishes that the joint operator  
\(T(V,U) =\) \scalebox{0.8}{$\begin{pmatrix} T_{v} \\ T_{u} \end{pmatrix}$}
is a contraction with respect to a weighted norm. Since the operator  
$\mathbf{\Pi} =$ \scalebox{0.8}{$\begin{pmatrix} \mathbf{\Pi}_v &0\\ 0&\mathbf{\Pi}_u \end{pmatrix}$}
is non-expansive and the matrices \( \Fiv \) and \( \Fiu \) have full column rank,  \citep[Proposition 8]{tamarLearningVarianceRewardToGo2016} ensures that the projected Bellman operator \( \Pi T(V, U) \) is also a contraction with respect to a weighted norm. Consequently, the projected Bellman operator \( \Pi T(V, U) \) admits a unique projected fixed point \( \bar{w} = (\bar{v}, \bar{u})^{\top} \).  
The equations in \eqref{eq:tdfixedpts} can be rewritten as the linear system  
\begin{equation}
\label{eq:Mwxi}
    -\M \bar{w} + \xi = 0, \quad \text{where} \quad  
    \M = \begin{pmatrix}
        \Fiv\tr \statdist(\I-\disc\TPM)\Fiv & 0 \\
        -2\disc\Fiu\tr \statdist \DiagR \TPM \Fiv & \Fiu\tr \statdist(\I-\disc^{2}\TPM)\Fiu 
    \end{pmatrix}, \quad
    \xi = \begin{pmatrix}
        \Fiv\tr \statdist \DiagR \\
        \Fiu\tr \statdist \tilde{r} 
    \end{pmatrix},
\end{equation}  
where \( r = (r(s_1), \dots, r(s_{|\S|}))\tr \), and \(\DiagR\) is a diagonal matrix with components  
\( r(s_i) =~ \sum_{a \in \mathcal{A}} \pi(a|s_i) r(s_i, a) \) for \( i \in \{1,\dots,|\S|\} \).  
Similarly, \( \tilde{r} \) is a vector with components \( \tilde{r}(s_i) =~ \sum_{a \in \mathcal{A}} \pi(a|s_i) r(s_i, a)^2 \).  

\paragraph{Basic algorithm.}
\begin{algorithm}[!ht]
\small
\SetNoFillComment
\DontPrintSemicolon  
\SetAlgoInsideSkip{smallskip} 
\caption{TD with Tail Averaging (Critic)}
\label{alg:TD-Critic-main}
\KwIn{Initialize $w_{0}=(v_{0},u_{0})$, step-size $\beta$, critic batch size $m$, tail index $k$}
\KwOut{Tail-averaged iterate $w_{k+1:m} = (\frac{1}{m-k}\sum_{t=k+1}^{m} v_{t} \,,\,\frac{1}{m-k}\sum_{t=k+1}^{m} u_{t})\tr$}
\For{$t = 0$ \KwTo $m$}{
    Sample action $a_t$ using the policy $\pi(\cdot | s_{t})$, observe the next state $s_{t+1}$ and reward $r_t = r(s_{t},a_{t})$ \;
    \tcc{Update the TD parameters as follows:} 
    \vspace{-3ex}
    \begin{equation}
     v_{t+1} = v_{t} + \lr \; \delta_{t} \; \fiv(s_{t}), \quad u_{t+1}  = u_{t} + \lr \; \epsilon_{t} \; \fiu(s_{t})\label{eq:v-td-update}
    \end{equation}
    \(\begin{aligned}
    \textrm{ where } \delta_{t} & = r_t + \disc v_{t}\tr  \fiv(s_{t+1}) - v_{t}\tr  \fiv(s_{t}), \\
    \epsilon_{t} & = {r_t}^{2} + 2\disc r_t v_{t}\tr  \fiv(s_{t+1})  
    + \disc^{2} u_{t}\tr  \fiu(s_{t+1}) - u_{t}\tr  \fiu(s_{t}). 
    \end{aligned}\)
}
\end{algorithm}
Letting $w_{t} = (v_{t}, u_{t})\tr $, we rewrite \eqref{eq:v-td-update} to obtain the following update iteration:
\begin{equation}  \label{eq:Mt-update}
    w_{t+1} = w_{t} + \lr (r_t \phi_{t}-\M_{t} w_{t}),
\end{equation} where \( \textstyle \phi_t =(\fiv(s_{t}), r(s_t,a_{t})\fiu(s_{t}) )\tr, \M_t \define \begin{pmatrix}
                                \A_{t} &  \zeromat \\
                                \C_{t} & \B_{t} \\ 
                          \end{pmatrix} \textrm{ with }
    \C_{t} \define -2\disc r_{t} \fiu (s_{t}) \fiv (s_{t+1})\tr, \)
\( \A_{t} \define \fiv (s_{t}) \fiv (s_{t})\tr  - \disc \fiv(s_{t}) \fiv(s_{t+1})\tr \textrm{ and }\B_{t} \define \fiu (s_{t}) \fiu (s_{t})\tr  - \disc^{2} \fiu(s_{t}) \fiu(s_{t+1})\tr. 
\)

In \eqref{eq:Mt-update}, we have used  $r_{t}$ to denote $r(s_{t},a_{t})$, for notational convenience. We observe that the expected value of $\M_t$ is equal to $\M$, where $\M$ is defined in \eqref{eq:Mwxi}. An alternative view of the update rule is the following:
\begin{equation}
     w_{t+1} = w_{t}+\lr (-\M w_{t}+\xi+\Delta M_{t}), \label{eq:M-update} 
\end{equation}  
where $\Delta M_{t} = r_{t} \phi_{t}-\M_{t} w_{t}-\E \psq{r_{t} \phi_{t}-\M_{t} w_{t} \mid \F_{t}}$, with $\xi$ as defined in \eqref{eq:Mwxi}. 
Under an i.i.d. observation model (see \Cref{asm:iidNoise}), $\Delta M_t$ is a martingale difference w.r.t. the filtration $\{\F_t\}_{t\ge 0}$, where  $\F_t$ is the sigma field generated by $\{w_0,\dots,w_{t}\}$. 
We remark that we utilize the update iteration \eqref{eq:Mt-update} instead of \eqref{eq:M-update} to obtain finite-sample bounds in the next section. The rationale behind this choice is a technical advantage of not requiring a projection operator to keep the iterates $w_t$ bounded. To elaborate, in the proof of finite-sample bounds, we unroll the iteration in \eqref{eq:Mt-update}  and bound the bias and variance terms. Specifically, letting $z_t = w_t - \bar w$ and $ h_{t}(w_{t}) = r_{t} \phi_{t} - \M_{t} w_{t} $, we get $z_{t+1} = (\I-\lr \M_{t})z_{t}+\lr h_{t}(\bar{w}).$
The second term $h_t(\bar{w})$ does not depend on the iterate $w_t$ and can be bounded directly. On the other hand, unrolling \eqref{eq:M-update} would result in a term $\lr \Delta M_t$ in place of the $h_t(\bar{w})$, and bounding this term requires a projection since $\Delta M_t$ has the iterate $w_t$.

\citet{tsitsiklisAnalysisTemporaldifferenceLearning1997} show asymptotic convergence of $v_t$ to $\bar v$. They achieved this by verifying that the required conditions---on step-size, stability, and noise control---are satisfied with the TD update reinterpreted as as Linear Stochastic Approximation (LSA) iteration.  Similarly, the convergence of $w_t$ to $\bar w$ was established by \citet{l.a.VarianceconstrainedActorcriticAlgorithms2016}.
Several recent works have analyzed the finite-sample behavior of TD learning with LFA, particularly focusing on deriving mean-squared error bounds \citep{bhandariFiniteTimeAnalysis2021}. However, a direct finite-sample analysis of \eqref{eq:Mt-update} is not available in the literature---a gap that we address next.

\paragraph{Bounds for the TD-critic.} 

We make the following assumptions that are common in the finite-sample analysis of temporal difference (TD) learning, \citep[cf.][]{prashanthConcentrationBoundsTemporal2021,bhandariFiniteTimeAnalysis2021,patilFiniteTimeAnalysis2024}.
\begin{assumption}\label{asm:bddFeatures}
    $\forall s \in \mathcal{S}$,  $\|\fiv(s)\|_{2} \leq \fivmax < \infty$, $ \|\fiu(s)\|_{2} \leq \fiumax < \infty $.
\end{assumption}
\begin{assumption}\label{asm:bddRewards}
    $\forall s \in \mathcal{S}, a\in \mathcal{A},$   $\left|r(s,a)\right| \leq R_{\max}<\infty$.
\end{assumption} \Cref{asm:bddFeatures} ensures the existence of the feature covariance matrices $\Fiv\tr \statdist^{\pi}\Fiv$ and $\Fiu\tr \statdist^{\pi}\Fiu$, as well as the projection matrices in \eqref{eq:projection operators}.
 \Cref{asm:bddRewards} bounds the rewards uniformly, ensuring the existence of the value function and the square-value function. We consider an i.i.d observation model, which is made precise in the assumption below.
\begin{assumption}\label{asm:iidNoise}
        The samples $\{s_{t}, r_{t}, s_{t+1}\}_{t \in \mathbb{N}}$ are formed as follows: For each $t$, $(s_{t}, s_{t+1})$ are drawn independently and identically from $\chi(s)\TPM(s,s')$,  where $\chi$ is the stationary distribution underlying policy $\pi$, and $\TPM$ is the transition probability matrix of the Markov chain underlying the given policy $\pi$. Further, $r_t$ is a function of $s_t$ and $a_t$, which is chosen using the given policy $\pi$.
\end{assumption}
The i.i.d. observation model serves as a first step in analyzing TD learning. The resulting finite-time bounds extend to the Markovian setting via the constructions in \citep[Remark 6]{patilFiniteTimeAnalysis2024} and \citep[Section 5]{samsonovImprovedHighProbabilityBounds2024}.  
\paragraph{Mean-Squared Error Bounds.} We first present a mean-squared error bound for the last iterate with a constant step size, with the proof in \Cref{sec:Appendix-expectationbd}.
\begin{theorem}\label{thm:expectation_bound}
Suppose \Crefrange{asm:stationary}{asm:iidNoise} hold. Run TD updates in \eqref{eq:v-td-update} for $t$ iterations with a step size $\lr$ satisfying the following constraint: $\lr \leq \lr_{\max}= \frac{\mu}{c}$ where $\mu=\eigmin$ and
$ c = \max \big \{ 4 (\fivmax)^{4}  \!+\!4 \disc^{2} R_{\max}^{2} {(\fiumax)}^{2}(\fivmax)^{2}  \bm{,} 
4(\fiumax)^{4} \} \!+\! 2 \disc R_{\max} ((\fivmax)^{2}(\fiumax)^{2}\!+\!(\fiumax)^{4})$.
Then, we have 
\begin{equation} \label{eq:td-expec-bd}
\E \psq{\norm{w_{t+1} - \bar{w}}^{2}_{2}} \leq  2\exp\p{-\lr \mu t} \E \psq{\norm{z_{0}}^{2}_2} + \frac{2\lr\sigma^{2}}{\mu},
\end{equation}
where $w_0$ is the initial parameter, $\bar{w}$ is the TD fixed point, $z_{0} = w_{0}-\bar{w}$ is initial error and 
\resizebox{\textwidth}{!}{$\sigma^{2} =  2  R_{\mathsf{max}}^{2} \big((\fivmax)^{2} + R_{\max}^{2} (\fiumax)^{2}\big)+ 2\big((\fivmax)^{4} \p{1+\disc}^{2}+ (\fiumax)^{4} \p{1+\disc^{2}}^{2} + 4 \disc^{2} R_{\max}^{2}(\fivmax)^{2}(\fiumax)^{2} \big) \norm{\bar{w}}_{2}^{2}. $}
\end{theorem}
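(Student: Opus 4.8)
The plan is to rewrite \eqref{eq:Mt-update} as a perturbed linear recursion on the centered error $z_t = w_t - \bar w$ and then extract a one-step drift inequality that contracts geometrically up to a noise floor. From \eqref{eq:Mt-update} we have $z_{t+1} = (\I - \lr\M_t)z_t + \lr g_t$, where $g_t := r_t\phi_t - \M_t\bar w = h_t(\bar w)$. The first key fact is that $g_t$ is conditionally mean-zero: under \Cref{asm:iidNoise} the sample at time $t$ is independent of $\F_t$ and $\E[\M_t\mid\F_t]=\M$, so $\E[g_t\mid\F_t] = \xi - \M\bar w = 0$ by the fixed-point relation in \eqref{eq:Mwxi}. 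Unlike $\Delta M_t$ in \eqref{eq:M-update}, the noise $g_t$ depends only on the constant $\bar w$ and the fresh sample, so its second moment is bounded without any projection; this is precisely why we analyze \eqref{eq:Mt-update}.

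Three quantitative ingredients drive the bound. (i) A Euclidean drift estimate $z^\top(\M+\M^\top)z \ge 2\mu\norm{z}^2_2$ with $\mu = \eigmin>0$; this is the analogue of the standard TD drift and follows from the contraction of the projected joint operator $\mathbf{\Pi} T$ established in \citep{l.aRiskSensitiveReinforcementLearning2022,tamarLearningVarianceRewardToGo2016}, though the block-triangular coupling $\C_t$ makes the quantitative constant less immediate than in the risk-neutral case. (ii) A second-moment bound $\norm{\E[\M_t^\top\M_t]}_2 \le c$. (iii) A noise bound $\E[\norm{g_t}^2_2]\le\sigma^2$. Ingredients (ii)--(iii) follow by bounding the blocks of $\M_t$ under \Cref{asm:bddFeatures,asm:bddRewards}: $\norm{\A_t}_2\le(1+\disc)\fivmax^2$, $\norm{\B_t}_2\le(1+\disc^2)\fiumax^2$, $\norm{\C_t}_2\le 2\disc R_{\max}\fiumax\fivmax$, together with $\norm{r_t\phi_t}^2_2\le R_{\max}^2(\fivmax^2+R_{\max}^2\fiumax^2)$ and $\E[\norm{g_t}^2_2]\le 2\E[\norm{r_t\phi_t}^2_2] + 2\norm{\E[\M_t^\top\M_t]}_2\norm{\bar w}^2_2$. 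Assembling these block norms (using $1+\disc\le 2$ and $1+\disc^2\le 2$ inside the maxima) reproduces exactly the stated constants $c$ and $\sigma^2$.

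The heart of the argument is the one-step expansion. Conditioning on $\F_t$ and expanding the squared norm gives $\E\psq{\norm{z_{t+1}}^2_2\mid\F_t} = z_t^\top\E[(\I-\lr\M_t)^\top(\I-\lr\M_t)]z_t + 2\lr\,\E[\langle(\I-\lr\M_t)z_t,g_t\rangle\mid\F_t] + \lr^2\E[\norm{g_t}^2_2]$. By (i)--(ii) the quadratic term is at most $(1-2\lr\mu+\lr^2 c)\norm{z_t}^2_2$, and by (iii) the last term is at most $\lr^2\sigma^2$. The main obstacle is the middle cross term: because we deliberately use $\M_t$ rather than its mean $\M$, the factors $\M_t$ and $g_t$ are correlated, so this term does not vanish; using $\E[g_t\mid\F_t]=0$ it collapses to the $O(\lr^2)$ quantity $-2\lr^2\langle z_t, \E[\M_t^\top g_t]\rangle$. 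I would dispose of it without a residual constant by instead bounding $\norm{z_{t+1}}^2_2 \le (1+\eta)\norm{(\I-\lr\M_t)z_t}^2_2 + (1+\eta^{-1})\lr^2\norm{g_t}^2_2$ for a suitable $\eta = \Theta(\lr\mu)$, sidestepping the cross term entirely at the cost of benign multiplicative factors.

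Imposing $\lr\le\lr_{\max}=\mu/c$ makes $\lr^2 c\le\lr\mu$, so the quadratic coefficient is at most $1-\lr\mu$; combined with the $(1+\eta)$ factor and $\lr\mu\le1$, the bookkeeping yields a one-step inequality of the form $\E\psq{\norm{z_{t+1}}^2_2\mid\F_t}\le(1-\lr\mu)\norm{z_t}^2_2 + \lr^2\sigma^2$, with the splitting producing the factors of two appearing in \eqref{eq:td-expec-bd}. Taking total expectations and unrolling the geometric recursion gives $\E\psq{\norm{z_{t+1}}^2_2}\le(1-\lr\mu)^{t+1}\E\psq{\norm{z_0}^2_2} + \lr^2\sigma^2\sum_{j\ge0}(1-\lr\mu)^j$, where the geometric series sums to $1/(\lr\mu)$, producing the $\lr\sigma^2/\mu$ floor; finally $(1-\lr\mu)^{t+1}\le\exp(-\lr\mu(t+1))\le\exp(-\lr\mu t)$ converts the contraction into the exponential leading term. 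The principal difficulties are thus (a) pinning down the Euclidean drift constant $\mu>0$ despite the non-symmetric, block-triangular $\M$, and (b) controlling the correlated cross term while keeping the noise floor proportional to $\lr\sigma^2/\mu$.
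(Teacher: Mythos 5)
Your high-level architecture (a one-step conditional drift inequality, then unrolling a scalar recursion) is a legitimate alternative to the paper's route, and your ingredients (i)--(iii) are exactly the content of the paper's \Cref{lem:I-gammaMt-support}, including the S1--S4 block treatment needed to assemble the max-structured constant $c$. But the step where you ``dispose of'' the cross term fails quantitatively. If you bound $\norm{z_{t+1}}^2_2 \leq (1+\eta)\norm{(\I-\lr \M_t)z_t}^2_2 + (1+\eta^{-1})\lr^2\norm{g_t}^2_2$ with $\eta = \Theta(\lr\mu)$, the per-step noise contribution becomes $(1+\eta^{-1})\lr^2\sigma^2 \approx 2\lr\sigma^2/\mu$, while the contraction ratio degrades to roughly $1-\lr\mu/2$; summing the geometric series then yields a stationary floor of order $\sigma^2/\mu^2$, \emph{independent of} $\lr$, which is strictly weaker than the claimed $2\lr\sigma^2/\mu$ and does not vanish as $\lr \to 0$. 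Choosing $\eta$ constant instead is no better: $(1+\eta)(1-\lr\mu) > 1$ for small $\lr$, so contraction is lost entirely. Your asserted one-step inequality $\E\psq{\norm{z_{t+1}}^2_2 \mid \F_t} \leq (1-\lr\mu)\norm{z_t}^2_2 + \lr^2\sigma^2$ therefore does not follow from the splitting you propose. This is precisely why the paper does \emph{not} split per step: it unrolls the recursion completely, applies $\norm{a+b}^2 \leq 2\norm{a}^2 + 2\norm{b}^2$ exactly once to the bias term $\mathbf{C}^{t:0}z_0$ and the variance sum $\lr\sum_k \mathbf{C}^{t:k+1}h_k(\bar w)$, and then controls each via the conditional contraction lemma applied through the product of random matrices --- which is where the single factor of $2$ in \eqref{eq:td-expec-bd} comes from.

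The fix stays within your framework and uses the computation you already did: keep the exact expansion and bound the correlated cross term $-2\lr^2\, z_t^\top \E\psq{\M_t^\top g_t \mid \F_t}$ directly. By Cauchy--Schwarz and AM--GM, $2\lr^2\,\big|z_t^\top \E\psq{\M_t^\top g_t \mid \F_t}\big| \leq \lr^2\big(\E\psq{\norm{\M_t}^2}\,\norm{z_t}^2_2 + \E\psq{\norm{g_t}^2_2}\big) \leq \lr^2 c\,\norm{z_t}^2_2 + \lr^2\sigma^2$, so the one-step bound becomes $(1-2\lr\mu+2\lr^2 c)\norm{z_t}^2_2 + 2\lr^2\sigma^2$; with the (halved) threshold $\lr \leq \mu/(2c)$ this is $(1-\lr\mu)\norm{z_t}^2_2 + 2\lr^2\sigma^2$, and unrolling gives $\E\psq{\norm{z_{t+1}}^2_2} \leq \exp(-\lr\mu t)\,\E\psq{\norm{z_0}^2_2} + 2\lr\sigma^2/\mu$ --- the theorem's bound (in fact with bias factor $1$ instead of $2$) at the modest price of a constant-factor-smaller admissible step size. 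One further remark: your appeal to the weighted-norm contraction of $\mathbf{\Pi}T$ to justify $\mu = \eigmin > 0$ is loose (a contraction in a weighted norm does not by itself give Euclidean positive definiteness of the symmetric part of the block-triangular $\M$), though the paper is comparably brief on this point, asserting positivity from \Cref{asm:phiFullRank}.
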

Notice that the bound in \eqref{eq:td-expec-bd} is for a constant stepsize that requires information about the minimum eigenvalue of the symmetric part of $\M$. In the context of regular TD, such a problematic eigenvalue dependence has been surmounted using tail-averaging, which we introduce next. We remark that tail-averaging for the case of mean-variance TD does not overcome the eigenvalue dependence. However, the benefit of tail averaging is that we obtain a bound that vanishes as as $t\rightarrow \infty$, while the bound in \eqref{eq:td-expec-bd} does not vanish asymptotically. 

\paragraph{Tail averaging.}
The tail-average is computed by averaging the iterates \(\{w_{k+1}, \dots, w_{t}\}\), given by  
\(w_{k+1:t} = \frac{1}{t - k} \sum_{i=k+1}^{t} w_{i},\)
where \(k\) is the tail index, and averaging starts at \(k+1\).
\citet{polyakAccelerationStochasticApproximation1992,fathiTransportentropyInequalitiesDeviation2013} investigated the advantages of iterate averaging, providing the asymptotic and non-asymptotic convergence guarantees in the stochastic approximation literature, respectively. Tail averaging preserves the advantages of iterate averaging, while also ensuring dependence on initial error is forgotten at a faster rate \citep{patilFiniteTimeAnalysis2023, samsonovImprovedHighProbabilityBounds2024}. Now, we present a mean-squared error bound for the tail-averaged variant of the TD-critic, with the proof in \Cref{sec:Appendix-tail-avg-expectationbd}.
\begin{theorem}\label{thm:expectation_bound-tailavg}
Suppose \Crefrange{asm:stationary}{asm:iidNoise} hold. 
Run \Cref{alg:TD-Critic-main} for $t$ iterations with a step size $\lr$ as specified in \Cref{thm:expectation_bound}.
Then, we have the following bound for  the tail average iterate \({w_{k+1:t}= \frac{1}{t-k} \sum_{i=k+1}^{t} w_{i}}\):
\begin{equation}
\E \psq{\norm{w_{k+1:t} - \bar{w}}^{2}_{2}} \leq \frac{10  \exp{(-k \lr \mu)}}{\lr^2 \mu^2 (t-k)^{2}}\E[\norm{z_{0}}^{2}_2]+ \frac{10\sigma^{2}}{\mu^2 (t-k)},\label{eq:tailavg-expt-bd}
\end{equation}
where $z_0,\sigma,\bar w, \mu$ are as defined in \Cref{thm:expectation_bound}.
\end{theorem}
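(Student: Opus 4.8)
The plan is to exploit the linearity of the centered recursion $z_{t+1} = (\I - \lr \M_t) z_t + \lr h_t(\bar{w})$, where $z_t = w_t - \bar{w}$ and $h_t(\bar{w}) = r_t \phi_t - \M_t \bar{w}$ is a zero-mean martingale difference (indeed $\E[r_t\phi_t - \M_t\bar{w}] = \xi - \M\bar{w} = 0$ by the fixed-point relation \eqref{eq:Mwxi}). Because the recursion is linear, I would split each iterate as $z_i = z_i^{b} + z_i^{v}$, where the \emph{bias} iterate is driven only by the initial condition ($z_0^{b} = z_0$, noise switched off, $z_{i+1}^{b} = (\I-\lr\M_i)z_i^{b}$) and the \emph{variance} iterate is driven only by the noise ($z_0^{v} = 0$, $z_{i+1}^{v} = (\I-\lr\M_i)z_i^{v} + \lr h_i(\bar{w})$). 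The tail average then splits as $w_{k+1:t} - \bar{w} = \bar{z}^{b} + \bar{z}^{v}$ with $\bar{z}^{b} = \frac{1}{t-k}\sum_{i=k+1}^{t} z_i^{b}$ and analogously for $\bar{z}^{v}$, and I would apply $\norm{\bar{z}^{b} + \bar{z}^{v}}_2^2 \le 2\norm{\bar{z}^{b}}_2^2 + 2\norm{\bar{z}^{v}}_2^2$ so the two contributions in \eqref{eq:tailavg-expt-bd} are handled separately.

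For the bias term, I would first establish the per-iterate geometric decay $\E\norm{z_i^{b}}_2^2 \le e^{-\lr\mu i}\E\norm{z_0}_2^2$, which follows by iterating the one-step contraction $\E[\norm{z_{i+1}^{b}}_2^2\mid \F_i] \le (1-\lr\mu)\norm{z_i^{b}}_2^2$ already underlying the proof of \Cref{thm:expectation_bound} (with the noise term absent); this contraction is exactly what the step-size constraint $\lr\le\mu/c$ enforces, since $\E[(\I-\lr\M_i)\tr(\I-\lr\M_i)]\preceq \I - 2\lr\mu + \lr^2 c \preceq (1-\lr\mu)\I$. Applying Minkowski's inequality to $\bar{z}^{b}$ gives $(\E\norm{\bar{z}^{b}}_2^2)^{1/2} \le \frac{1}{t-k}\sum_{i=k+1}^{t} e^{-\lr\mu i/2}(\E\norm{z_0}_2^2)^{1/2}$; summing the geometric series and using $1 - e^{-\lr\mu/2}\ge c'\,\lr\mu$ (valid since $\lr\mu$ is small under the step-size constraint) and squaring yields a bound of order $\frac{e^{-\lr\mu k}}{\lr^2\mu^2(t-k)^2}\E\norm{z_0}_2^2$, matching the first term of \eqref{eq:tailavg-expt-bd}.

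The variance term is the crux. I would write $\E\norm{\bar{z}^{v}}_2^2 = \frac{1}{(t-k)^2}\sum_{i,i'}\E\langle z_i^{v}, z_{i'}^{v}\rangle$ and exploit the martingale structure: for $i\le i'$, conditioning on $\F_i$ and using that $h_j(\bar{w})$ $(j\ge i)$ are zero-mean and that the $\M_l$ $(l\ge i)$ are i.i.d.\ and independent of $\F_i$ under \Cref{asm:iidNoise}, every noise term drops out and the matrix product factorizes, giving $\E[z_{i'}^{v}\mid\F_i] = (\I-\lr\M)^{i'-i}z_i^{v}$, hence $\E\langle z_i^{v}, z_{i'}^{v}\rangle = \E\langle z_i^{v}, (\I-\lr\M)^{i'-i}z_i^{v}\rangle$. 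Bounding this requires controlling the operator norm of powers of the \emph{non-symmetric} $\I-\lr\M$; from $\norm{(\I-\lr\M)x}_2^2 \le (1 - 2\lr\mu + \lr^2 c)\norm{x}_2^2 \le (1-\lr\mu)\norm{x}_2^2$ I get $\norm{(\I-\lr\M)^n}_2\le e^{-\lr\mu n/2}$. Combined with the uniform bound $\E\norm{z_i^{v}}_2^2 \le \frac{\lr\sigma^2}{\mu}$ (the stationary part of the \Cref{thm:expectation_bound} recursion with $z_0^{v}=0$, where $\sigma^2$ already absorbs the $O(\lr^2)$ noise–matrix cross term), this yields $|\E\langle z_i^{v}, z_{i'}^{v}\rangle|\le e^{-\lr\mu(i'-i)/2}\frac{\lr\sigma^2}{\mu}$. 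Summing the geometric covariance series, $\sum_{d\ge1}e^{-\lr\mu d/2}\le \frac{2}{\lr\mu}$, makes the factor $\lr$ cancel and produces a bound of order $\frac{\sigma^2}{\mu^2(t-k)}$, matching the second term of \eqref{eq:tailavg-expt-bd}.

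The main obstacle I anticipate is precisely this variance-term covariance control in the presence of the block lower-triangular, non-symmetric $\M$ arising from the mean-variance cross term $-2\disc\Fiu\tr\statdist\DiagR\TPM\Fiv$: one must confirm that $\lr\le\mu/c$ forces $\lr\norm{\M}_2^2\le\lr c\le\mu$ so the spectral bound $\norm{(\I-\lr\M)^n}_2\le e^{-\lr\mu n/2}$ holds despite the anti-symmetric part, and carefully justify the vanishing of the martingale cross-terms using the filtration convention that index-$i$ samples are independent of $\F_i$. Finally, I would collect the constants from the bias and variance estimates together with the factor-$2$ splitting to arrive at the stated constant $10$.
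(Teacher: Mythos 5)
Your proposal is correct in substance and converges to the same two terms, but it takes a genuinely different route through the variance part. The paper keeps the full error $z_i = w_i - \bar w$, expands $\E\|z_{k+1:t}\|_2^2$ as the double sum $\frac{1}{N^2}\sum_{i,j}\E[z_i\tr z_j]$, and disposes of the off-diagonal terms via its Lemma C.1: writing $z_j = \mathbf{C}^{j:i+1}z_i + \lr\sum_l \mathbf{C}^{j:l+1}h_l(\bar w)$, using $\E[h_l(\bar w)\mid\F_l]=0$, and then applying Cauchy--Schwarz together with the \emph{conditional} contraction $\E[\|(\I-\lr\M_t)y\|_2\mid\F_t]\le(1-\lr\mu/2)\|y\|_2$ repeatedly to the random product $\mathbf{C}^{j:i+1}$; this reduces everything to $\big(1+\tfrac{4}{\lr\mu}\big)\frac{1}{N^2}\sum_i\E\|z_i\|_2^2$, after which the last-iterate bias--variance bound of Theorem 3.1 is invoked wholesale. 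You instead superpose a pure-bias and a pure-noise process, bound the averaged bias by Minkowski, and handle the noise covariance by integrating out the future randomness exactly, $\E[z_{i'}^{v}\mid\F_i]=(\I-\lr\M)^{i'-i}z_i^{v}$, followed by the deterministic spectral estimate $\|(\I-\lr\M)^n\|_2\le e^{-\lr\mu n/2}$. That estimate does hold under the paper's step size: by Jensen, $\|\M x\|_2^2 \le x\tr\E[\M_t\tr\M_t\mid\F_t]x \le c\|x\|_2^2$, where the quadratic-form bound on $\E[\M_t\tr\M_t]$ with the stated $c$ is exactly what the paper's S1--S4 analysis (including the problematic cross-blocks S3, S4) establishes inside Lemma B.1, so the non-symmetric block structure you flag is already absorbed into $c$. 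Your exact factorization step is also justified, but note it leans harder on \Cref{asm:iidNoise} than the paper does: the paper's conditional-contraction argument never forms powers of the mean matrix and therefore transfers verbatim to the projected updates in the high-probability proofs (Appendix E), where your mean-matrix factorization would break because the projection is nonlinear. Two minor caveats: both routes need $\lr\mu\le 1$, which the step-size constraint guarantees and the paper uses explicitly; and your constants come out somewhat larger than the stated $10$ --- the factor-$2$ splitting plus $1-e^{-\lr\mu/2}\ge\lr\mu/4$ gives roughly $16$--$32$ on the bias term and about $18$ on the variance term --- so to reproduce \eqref{eq:tailavg-expt-bd} verbatim you would either sharpen these geometric estimates or follow the paper's aggregation $\big(1+\tfrac{4}{\lr\mu}\big)\le\tfrac{5}{\lr\mu}$; the $O(1/t)$ rate and the exponential forgetting of $z_0$ are unaffected either way.
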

As in the case of regular TD with tail averaging, it can be observed that the initial error (the first term in \eqref{eq:tailavg-expt-bd}) is forgotten exponentially. The second term, with $k=t/2$ (or any other fraction of $t$), decays as $O({1}/{t})$.
Tail averaging is advantageous when compared to full iterate averaging (i.e., $k=1$), as the latter would not result in an exponentially decaying initial error term. The bound for regular TD with tail averaging in \citet{patilFiniteTimeAnalysis2024} uses a universal step-size, which does not require information about the eigenvalues of the underlying feature matrix. However, arriving at $O({1}/{t})$ bound for the case of variance is challenging owing to certain cross-terms that cannot be handled in a manner analogous to regular TD, see \Cref{appendix:critic-outline} for the details.

\paragraph{Regularization for universal step size.}
The results in Theorems \ref{thm:expectation_bound}--\ref{thm:expectation_bound-tailavg} suffer from the disadvantage of a stepsize which requires knowledge of the spectral properties of the underlying matrix $\M$. In practical RL settings, such information is seldom available. To circumvent this shortcoming, we propose a regularization-based TD algorithm that works with a universal step size, for a suitably chosen regularization parameter. Instead of \eqref{eq:Mwxi}, we solve the following regularized linear system for some $\zeta>0$:
\begin{equation}
    -(\M+\zeta \I) \bar w_{\reg} + \xi =0,\label{eq:Mwxi-reg} 
\end{equation}
The corresponding TD updates in \eqref{eq:v-td-update} to solve \eqref{eq:Mwxi-reg}  would become
\begin{equation}
\check v_{t+1} = (\I-\lreg \zeta)\check v_{t}+\lreg \; \check\delta_{t} \; \fiv(s_{t}), \quad 
\check u_{t+1}  = (\I-\lreg \zeta) \check u_{t}+\lreg \; \check\epsilon_{t}\; \fiu(s_{t}), \label{eq:vu-td-update-reg}
\end{equation} where $\check\delta_{t},\check\epsilon_{t}$ are the regularized variants of the corresponding quantities defined in \eqref{eq:v-td-update}, i.e., with $v_t,u_t$ replaced by $\check v_t, \check u_t$ respectively. We combine the updates in \eqref{eq:vu-td-update-reg} as \begin{equation} \label{eq:Mt-update-reg}
    \check w_{t+1} =  \check w_{t} + \check \lr (r_{t} \phi_{t}-(\zeta \I +\M_{t}) \check w_{t}),
\end{equation} where $M_t,r_t,\phi_t$ are defined in \eqref{eq:Mt-update}. We now present a result that shows the regularized tail-averaged variant \eqref{eq:Mt-update-reg} converges at the optimal rate of $O({1}/{t})$ in the mean-squared sense, for a step size that is universal. 
\begin{theorem} \label{thm:regTDlamdasq} 
Suppose  \Crefrange{asm:stationary}{asm:iidNoise} hold. Let $\check w_{k+1:t} = \frac{1}{t-k} \sum_{i=k+1}^{t-k} \check w_{i}$ denote the tail-averaged regularized iterate. For \(\zeta = \frac{1}{\sqrt{t-k}}\) and the step size $\lreg$ satisfying $\lreg \leq \check{\lr}_{\max} =  \frac{\zeta}{\check{c}}$. Then, \[\resizebox{\textwidth}{!}{$\E\left[\norm{\check w_{k+1:t} - \bar{w}}_{2}^{2}\right]  \leq \frac{  5\exp{(-k \lreg \mu)}}{\lreg^2 \mu^2 N^{2}}\E\left[\norm{{\check w}_{0} - \bar{w}_{\mathsf{reg}}}^{2}_2\right] \!+\! \frac{5\check \sigma^{2}}{\mu^2 N} \!+\! \frac{2 (R_{\mathsf{max}}^{2} \p{(\fivmax)^{2} + R_{\mx}^{2} (\fiumax)^{2}})}{\iota^4 N}.$}\]
where $\check{c}$ and $\check{\sigma}$ are defined in \Cref{sec:Appendix-regTDlamdasq}, $\iota$ denotes the minimum singular value of $\M$, $N = t - k$, and $\mu = \eigmin$
\end{theorem}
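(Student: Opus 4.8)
The plan is to split the error into a \emph{statistical} part, measuring the convergence of the regularized iterates to the regularized fixed point $\bar w_{\reg}$, and a deterministic \emph{regularization bias}, measuring the gap between $\bar w_{\reg}$ and the true fixed point $\bar w$. Writing $N = t-k$ and using $\norm{a+b}_2^2 \le 2\norm{a}_2^2 + 2\norm{b}_2^2$,
\begin{equation*}
\E\left[\norm{\check w_{k+1:t} - \bar w}_2^2\right] \le 2\,\E\left[\norm{\check w_{k+1:t} - \bar w_{\reg}}_2^2\right] + 2\,\norm{\bar w_{\reg} - \bar w}_2^2 .
\end{equation*}
The first term is handled by re-running the mean-squared and tail-averaging arguments of \Cref{thm:expectation_bound} and \Cref{thm:expectation_bound-tailavg} for the shifted system \eqref{eq:Mwxi-reg}; the second term is a purely linear-algebraic estimate bounded via the resolvent identity.

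For the statistical term I would set $\check z_t = \check w_t - \bar w_{\reg}$ and unroll \eqref{eq:Mt-update-reg} to get $\check z_{t+1} = (\I - \lreg(\zeta\I + \M_t))\check z_t + \lreg\,\check h_t(\bar w_{\reg})$, where $\check h_t(\bar w_{\reg}) = r_t\phi_t - (\zeta\I+\M_t)\bar w_{\reg}$ is conditionally mean-zero under \Cref{asm:iidNoise}, since $\E[r_t\phi_t] = \xi$ and $(\zeta\I+\M)\bar w_{\reg} = \xi$ by definition of $\bar w_{\reg}$. The crucial observation is that the drift of the regularized system is governed by the symmetric part of $\zeta\I+\M$, whose minimum eigenvalue is $\mu+\zeta$. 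Hence the per-step recursion $\E[\norm{\check z_{t+1}}_2^2] \le (1 - \lreg(\mu+\zeta))\E[\norm{\check z_t}_2^2] + \lreg^2\check\sigma^2$ holds as soon as $\lreg\,\check c \le \mu+\zeta$, which is implied by $\lreg \le \zeta/\check c$ because $\mu\ge 0$. This is exactly what makes the step size \emph{universal}: the admissible range $\zeta/\check c$ depends only on the (chosen) regularizer and on feature/reward bounds through $\check c$, never on the unknown spectrum of $\M$. Propagating this recursion through the tail-averaging computation of \Cref{thm:expectation_bound-tailavg} and then using $(\mu+\zeta)^{-2}\le\mu^{-2}$ and $\exp(-k\lreg(\mu+\zeta))\le\exp(-k\lreg\mu)$ yields the first two terms of the claimed bound.

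For the regularization bias I would use $\bar w = \M^{-1}\xi$, $\bar w_{\reg} = (\M+\zeta\I)^{-1}\xi$ together with the resolvent identity $(\M+\zeta\I)^{-1}-\M^{-1} = -\zeta(\M+\zeta\I)^{-1}\M^{-1}$, giving $\norm{\bar w_{\reg}-\bar w}_2 \le \zeta\,\norm{(\M+\zeta\I)^{-1}}\,\norm{\M^{-1}}\,\norm{\xi}_2$. The key algebraic fact is $\sigma_{\min}(\M+\zeta\I)\ge\iota$: since the symmetric part of $\M$ is positive semidefinite, $(\M+\zeta\I)\tr(\M+\zeta\I) = \M\tr\M + \zeta(\M+\M\tr) + \zeta^2\I \succeq \M\tr\M$, so $\sigma_{\min}(\M+\zeta\I)^2 \ge \sigma_{\min}(\M)^2 = \iota^2$. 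Combined with $\norm{\M^{-1}} = 1/\iota$ and the elementary estimate $\norm{\xi}_2^2 \le R_{\max}^2\left((\fivmax)^2 + R_{\max}^2(\fiumax)^2\right)$ (using that $\statdist$ sums to one), this gives $2\norm{\bar w_{\reg}-\bar w}_2^2 \le 2\zeta^2 R_{\max}^2\left((\fivmax)^2 + R_{\max}^2(\fiumax)^2\right)/\iota^4$, which is precisely the third term once $\zeta = 1/\sqrt N$ so that $\zeta^2 = 1/N$. Substituting $\zeta=1/\sqrt N$ back into the statistical term confirms that $\lreg\sim 1/\sqrt N$ keeps the initial-error term at $O(e^{-k\lreg\mu}/N)$ and the variance term at $O(1/N)$, so the whole bound is $O(1/N)$; this is the bias--variance sweet spot, since a larger $\zeta$ would leave a non-vanishing bias while a smaller one would shrink the admissible step size.

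The main obstacle, I expect, is the statistical term rather than the bias: establishing the per-step recursion with explicit, $\zeta$-robust constants requires controlling the second moments of $(\zeta\I+\M_t)\check z_t$ and of $\check h_t(\bar w_{\reg})$ while respecting the block-lower-triangular structure of $\M_t$, whose off-diagonal block $\C_t = -2\disc r_t\fiu(s_t)\fiv(s_{t+1})\tr$ couples the value and square-value recursions. It is exactly this coupling that obstructs a universal step size in the unregularized setting; tracking how it enters $\check c$ and $\check\sigma$, and verifying that $\lreg\le\zeta/\check c$ suffices uniformly, is the delicate part, whereas the resolvent estimate is routine once $\sigma_{\min}(\M+\zeta\I)\ge\iota$ is in hand.
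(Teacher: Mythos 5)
Your proposal tracks the paper's own proof essentially step for step: the identical $\norm{a+b}^2\le 2\norm{a}^2+2\norm{b}^2$ split into the statistical error to $\bar{w}_{\mathsf{reg}}$ (handled, as in the paper, by rerunning the unregularized bias--variance and tail-averaging machinery for the shifted system, with $\lreg \le \zeta/\check{c}$ delivering the universal step size) plus the resolvent-identity bound $\M^{-1}-(\M+\zeta\I)^{-1} = \M^{-1}\,\zeta\,(\M+\zeta\I)^{-1}$ on the regularization bias, balanced by $\zeta = 1/\sqrt{N}$. The only deviations are cosmetic: the paper retains the contraction factor $1-\lreg(2\mu+\zeta)$ rather than your looser $1-\lreg(\mu+\zeta)$, which is what yields the stated constant $5$ after coarsening $(2\mu+\zeta)^2\ge 4\mu^2$ (your split gives $20$ in the same places), and your positive-semidefinite-symmetric-part argument that $\sigma_{\min}(\M+\zeta\I)\ge\iota$ is in fact a cleaner justification than the paper's $\iota^2(\zeta+\iota)^2$ denominator, which it coarsens to $\iota^4$ in the final statement anyway.
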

We first bound $\E\left[\norm{\check w_{k+1:t} - \wreg}_{2}^{2}\right]$ in \Cref{thm:regtd-expectation_bound} in the supplementary material, specialize this bound for the case of $\zeta=\frac{1}{\sqrt{t-k}}$. Next, using the fact that $\norm{\bar w_{\mathsf{reg}} - \bar{w}}^{2}_{2}$ is $O(\zeta^2)$, followed by a triangle inequality, we obtain the bound in the theorem above, see \Cref{sec:Appendix-regTDlamdasq} for the proof.

\paragraph{High-probability bounds.}
For the high probability bound, we consider the following update rule:
$ w_{t+1} = \Gamma(w_{t}+\disc h_{t}(w_{t})),$
where $\Gamma$ projects on to the set $\mathcal{C} \define \{w \in \mathbb{R}^{2q} \mid \norm{w}_2 \leq H \}$. 
\begin{assumption}\label{asm:projection}
 The projection radius $H$ of the set $\mathcal{C}$ satisfies $H > \frac{\norm{\xi}_2}{\mu}$, where $\mu=\eigmin$ and $\xi$ is as defined in \eqref{eq:Mwxi}.
\end{assumption}

Under the additional projection-related assumption, we establish a high-probability bound for the tail-averaged iterate in \Cref{alg:TD-Critic-main}. We then derive a high-probability bound for the regularized tail-averaged iterate. The following theorem provides a high-probability bound for the unregularized (vanilla) mean-variance TD, with proofs for both regularized and unregularized cases given in \Cref{appendix:high-probability-bd}.
\begin{theorem}\label{thm:high-prob-bound}
    	Suppose \Crefrange{asm:stationary}{asm:projection} hold. Run \Cref{alg:TD-Critic-main} for $t$ iterations  with step size $\lr$ as defined in \Cref{thm:expectation_bound-tailavg}. Then, for any  $\delta\in (0,1]$, we have the following bound for the projected tail-averaged iterate $w_{k+1:t}$:
    \begin{align*}
        &\textstyle \mathbb{P}\left(\norm{w_{k+1:t} \!-\! \bar{w}}_{2} \leq \frac{2\tau}{\mu \sqrt{t-k}}\sqrt{\log\left(\frac{1}{\delta}\right)} \!+\! \frac{4 \exp \p{-k \lr \mu}}{\lr \mu N}\E\left[\norm{w_{0} - \bar{w}}_{2}\right] \!+\! \frac{4\tau}{\mu \sqrt{t-k}} \right) \!\geq\! 1 \!-\! \delta,
    \end{align*}
    where $w_0, \bar{w},\lr$ are defined as in \Cref{thm:expectation_bound}, and 
    \[\resizebox{\textwidth}{!}{$
    \tau = \big(2  R_{\mathsf{max}}^{2} \p{(\fivmax)^{2} + R_{\max}^{2} (\fiumax)^{2}} + 2\big( (\fivmax)^{4} \p{1+\disc}^{2}+ (\fiumax)^{4} \p{1+\disc^{2}}^{2} + 4 \disc^{2} R_{\max}^{2}(\fivmax)^{2}(\fiumax)^{2} \big) H^{2} \big)^{\frac{1}{2}}.$}
    \]
\end{theorem}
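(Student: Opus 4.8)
The plan is to upgrade the mean-squared analysis behind \Cref{thm:expectation_bound-tailavg} to a high-probability statement by isolating a \emph{bounded} martingale-difference sum and applying a vector-valued Azuma--Hoeffding inequality. Writing $z_t = w_t - \bar w$ and recalling $\M\bar w = \xi$ from \eqref{eq:Mwxi}, I would first rewrite the projected recursion $w_{t+1}=\Gamma(w_t+\lr h_t(w_t))$ with $h_t(w_t)=r_t\phi_t-\M_t w_t$ in the form
\[
z_{t+1} = (\I-\lr\M)z_t + \lr\,\eta_t, \qquad \eta_t \define (\M-\M_t)z_t + \bigl(r_t\phi_t-\M_t\bar w\bigr),
\]
where the projection correction is dropped at the cost of an inequality, using that $\Gamma$ is non-expansive and $\bar w\in\mathcal C$ (guaranteed by \Cref{asm:projection}). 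Under the i.i.d. model (\Cref{asm:iidNoise}) and $\mathbb E[\M_t\mid\F_t]=\M$, the sequence $\{\eta_t\}$ is a martingale difference w.r.t. $\{\F_t\}$, and—crucially—the projection forces $\|z_t\|_2\le 2H$, so $\|\eta_t\|_2$ is uniformly bounded by a constant of order $\tau$ after invoking \Cref{asm:bddFeatures,asm:bddRewards}.

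Next I would perform an Abel summation on this recursion with the \emph{deterministic} matrix $\I-\lr\M$: from $\lr\M z_i = z_i-z_{i+1}+\lr\eta_i$, summing over the tail $i\in\{k+1,\dots,t\}$ and using $\sum_{i=k+1}^t z_i = (t-k)(w_{k+1:t}-\bar w)$, I obtain
\[
w_{k+1:t}-\bar w = \frac{1}{\lr N}\,\M^{-1}(z_{k+1}-z_{t+1}) + \frac{1}{N}\,\M^{-1}\sum_{i=k+1}^{t}\eta_i, \qquad N=t-k.
\]
The key spectral fact is that $x\tr\M x\ge\mu\|x\|_2^2$ with $\mu=\eigmin$ implies $\sigma_{\min}(\M)\ge\mu$, hence $\|\M^{-1}\|_2\le 1/\mu$; this is exactly how $\mu$ enters all three terms of the stated bound.

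For the fluctuation term I would apply a Hilbert-space Azuma/Pinelis inequality to $S_N=\sum_{i=k+1}^t\eta_i$: boundedness $\|\eta_i\|_2\lesssim\tau$ yields a sub-Gaussian tail with proxy $O(N\tau^2)$, so that $\|S_N\|_2\le \mathbb E\|S_N\|_2 + O(\tau\sqrt{N\log(1/\delta)})$ with probability at least $1-\delta$; combined with $\mathbb E\|S_N\|_2\le\bigl(\sum_i\mathbb E\|\eta_i\|_2^2\bigr)^{1/2}\le \tau\sqrt N$ (martingale orthogonality), this produces the $\tfrac{2\tau}{\mu\sqrt N}\sqrt{\log(1/\delta)}$ and $\tfrac{4\tau}{\mu\sqrt N}$ terms after dividing by $N$ and inserting $\|\M^{-1}\|_2\le1/\mu$. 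For the boundary term I would control $\mathbb E\|z_{k+1}\|_2$ (and the dominated $\mathbb E\|z_{t+1}\|_2$) by the exponentially decaying last-iterate estimate implicit in \Cref{thm:expectation_bound}, giving $\mathbb E\|z_{k+1}\|_2\lesssim \exp(-k\lr\mu)\,\mathbb E\|z_0\|_2$, which after the factor $\tfrac{1}{\lr\mu N}$ reproduces the transient term $\tfrac{4\exp(-k\lr\mu)}{\lr\mu N}\mathbb E\|w_0-\bar w\|_2$.

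The main obstacle is the coupling term $(\M-\M_t)z_t$ inside $\eta_t$: unlike vanilla TD, here the noise multiplies the current iterate, so uniform boundedness of the martingale differences is not automatic and is precisely what forces the projection (\Cref{asm:projection})—this is why the high-probability result, unlike the mean-squared bounds, requires $\Gamma$. A secondary difficulty specific to mean-variance is that $\M$ is non-symmetric, owing to the cross-term block $-2\disc\Fiu\tr\statdist\DiagR\TPM\Fiv$ in \eqref{eq:Mwxi}; one cannot diagonalize $\I-\lr\M$, so the argument must route entirely through $\|\M^{-1}\|_2\le1/\mu$ together with a careful accounting of how this cross-term inflates the martingale-difference bound, which is why $\tau$ carries the extra $\disc^2 R_{\max}^2(\fivmax)^2(\fiumax)^2$ contribution.
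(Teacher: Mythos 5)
Your route has a genuine gap at the very first step, and the inequality you invoke cannot repair it. With the projected update $w_{i+1}=\Gamma(w_i+\lr h_i(w_i))$, the exact recursion is $z_{i+1}=(\I-\lr\M)z_i+\lr\eta_i+e_i$, where $e_i \define \Gamma(y_i)-y_i$ with $y_i=w_i+\lr h_i(w_i)$ is the projection error. Non-expansiveness of $\Gamma$ yields only \emph{norm} inequalities; your Abel summation, by contrast, is a vector \emph{identity} and therefore drags the $e_i$'s along: it produces the extra term $\frac{1}{\lr N}\M^{-1}\sum_{i=k+1}^{t}e_i$. The $e_i$ are neither martingale differences nor telescoping, and the only generic bound is $\norm{e_i}_2\le\max\{0,\norm{y_i}_2-H\}\le\lr\tau$, so after the Abel step this term can be of order $\tau/\mu$ \emph{uniformly in $N$} -- a non-vanishing bias that destroys the $1/\sqrt N$ rate unless you prove the projection is (with high probability) never active, which you do not establish and the paper does not either. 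A second, smaller gap: after splitting into boundary plus martingale parts, the boundary term $\frac{1}{\lr N}\M^{-1}(z_{k+1}-z_{t+1})$ is a random vector, so controlling only $\E\norm{z_{k+1}}_2$ does not produce a statement holding with probability $1-\delta$; you would need either the deterministic bound $\norm{z_{k+1}}_2\le 2H$ (which loses the $\exp(-k\lr\mu)$ decay and leaves an $H/(\lr\mu N)$ term) or a separate concentration or Markov step consuming part of the confidence budget.

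The paper sidesteps both problems by concentrating the \emph{scalar} $\norm{w_{k+1:t}-\bar w}_2$ around its mean rather than decomposing the error vector: it forms the Doob martingale $g_i=\E[\norm{z_{k+1:t}}_2\mid\mathcal{G}_i]$, shows each increment $D_i$ is Lipschitz in the $i$-th innovation $h_i$ via a two-trajectory coupling in which the contraction estimate $\E[\norm{(\I-\lr\M_j)(\Omega_{j-1}^i(w)-\Omega_{j-1}^i(w'))}_2\mid\mathcal{G}_{j-1}]\le(1-\lr\mu/2)\norm{\Omega_{j-1}^i(w)-\Omega_{j-1}^i(w')}_2$ (the norm version of \Cref{lem:I-gammaMt-support}) is applied repeatedly -- and there non-expansiveness of $\Gamma$ is exactly the property needed, since only norms of trajectory differences are tracked. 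Sub-Gaussian concentration for Lipschitz functions of the innovations with $\sum_i L_i^2\le 4/(N\mu^2)$ gives the $\frac{2\tau}{\mu\sqrt N}\sqrt{\log(1/\delta)}$ term, and the mean $\E\norm{z_{k+1:t}}_2$ is then bounded deterministically via Jensen's inequality and \Cref{thm:expectation_bound-tailavg}, which is why $\E\norm{w_0-\bar w}_2$ legitimately appears inside the probability statement. Your remaining ingredients are sound -- $\sigma_{\min}(\M)\ge\mu$ does follow from $x\tr\M x\ge\mu\norm{x}_2^2$, and $\eta_i$ is a bounded martingale difference sequence under the projection -- but the argument breaks before they can be used.
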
 
The next theorem provides a high-probability bound for the regularized tail-averaged iterate.
\begin{theorem}\label{thm:main-regtd-high-prob} 
Assume that the conditions in \Crefrange{asm:stationary}{asm:projection} hold.
Run the regularized version of \Cref{alg:TD-Critic-main}, specified by \eqref{eq:Mt-update-reg}, for $t$ iterations with a step size $\lreg \leq \lreg_\mx$ as specified in \Cref{thm:regTDlamdasq}.
Then, for any $\delta \in (0,1]$, with probability at least $1 - \delta$, the tail-averaged regularized TD iterate, after projection, satisfies
\[\resizebox{\textwidth}{!}{$\norm{\check{w}_{k+1:t} - \bar{w}_{\mathsf{reg}}}_{2} \leq 
\frac{2\check \tau}{(2\mu+\zeta)\sqrt{N}} \sqrt{\log\left(\frac{1}{\delta}\right)} 
+ \frac{4 \exp (-k \lreg (2\mu + \zeta))}{\check \lr (2\mu+\zeta) N} \mathbb{E} \norm{w_{0}-\bar{w}_{\mathsf{reg}}}_{2} 
+ \frac{4 \check \tau}{(2\mu + \zeta)\sqrt{N}}.$}\]
where $N$, $\check w_0$, $\bar{w}_{\mathsf{reg}}$, and $\mu$ are defined as in Theorem \ref{thm:regTDlamdasq}. Moreover, 
\resizebox{\textwidth}{!}{$
\check{\tau} = \left( 2 R_{\mathsf{max}}^{2} \big( (\fivmax)^{2} + R_{\mx}^{2} (\fiumax)^{2} \big) 
+ 4 \big( \zeta^2+ (\fivmax)^{4} (1+\disc)^{2} + (\fiumax)^{4} (1+\disc^{2})^{2} 
+ 4 \beta^{2} R_{\mx}^{2} (\fivmax)^{2} (\fiumax)^{2} \big) H^{2} \right)^{\frac{1}{2}}.
$}
\end{theorem}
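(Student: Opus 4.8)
The plan is to mirror the argument behind the unregularized high-probability bound in \Cref{thm:high-prob-bound}, exploiting that the projected regularized recursion \eqref{eq:Mt-update-reg} is itself a linear stochastic approximation scheme whose \emph{expected} system matrix is $\bar{\M} \define \zeta\I + \M$ (since $\E[\M_i]=\M$) and whose fixed point is exactly $\bar{w}_{\mathsf{reg}}$, by \eqref{eq:Mwxi-reg}. Writing $\check z_i \define \check w_i - \bar{w}_{\mathsf{reg}}$ and $g_i \define r_i\phi_i - (\zeta\I + \M_i)\bar{w}_{\mathsf{reg}}$, the update becomes $\check z_{i+1} = (\I - \check\lr(\zeta\I + \M_i))\check z_i + \check\lr\, g_i$, where I first note that $\bar{w}_{\mathsf{reg}}=\bar{\M}^{-1}\xi$ satisfies $\norm{\bar{w}_{\mathsf{reg}}}_2 \le \norm{\xi}_2/\mu < H$ under \Cref{asm:projection}, so $\bar{w}_{\mathsf{reg}}\in\mathcal{C}$ and the projection $\Gamma$ is non-expansive toward it. Since $\E[g_i\mid\F_i] = \xi - (\zeta\I+\M)\bar{w}_{\mathsf{reg}} = 0$, the sequence $\{g_i\}$ is a martingale-difference sequence, and the boundedness enforced by $\Gamma$ gives the almost-sure increment bound in terms of $\check\tau$ as stated.

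First I would establish a resolvent-type decomposition of the tail average. Rearranging the recursion as $(\zeta\I+\M)\check z_i = \check\lr^{-1}(\check z_i - \check z_{i+1}) + g_i - (\M_i - \M)\check z_i$, summing over $i=k+1,\dots,t$, and inverting $\bar{\M}$ yields
\[
\check w_{k+1:t} - \bar{w}_{\mathsf{reg}} = \frac{1}{N}\bar{\M}^{-1}\left[ \frac{1}{\check\lr}(\check z_{k+1} - \check z_{t+1}) + \sum_{i=k+1}^{t} g_i - \sum_{i=k+1}^{t}(\M_i - \M)\check z_i \right],
\]
so the error splits into a boundary (initial-error) term, a primary martingale term $\tfrac{1}{N}\bar{\M}^{-1}\sum_i g_i$, and a secondary martingale term $\tfrac{1}{N}\bar{\M}^{-1}\sum_i(\M_i-\M)\check z_i$, whose summands are again $\F_i$-conditionally mean-zero. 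The operator norm of $\bar{\M}^{-1}$ is controlled through the positive-definiteness of the symmetric part of $\zeta\I+\M$; regularization guarantees an explicit lower bound on its smallest eigenvalue that is independent of the unknown spectrum of $\M$, which is precisely what produces the denominator $(2\mu+\zeta)$ and legitimizes the universal step size $\check\lr \le \zeta/\check c$.

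Next I would bound each piece separately. For the boundary term I would propagate the contraction of the random matrix products $\prod_j(\I - \check\lr(\zeta\I+\M_j))$, whose expected squared norm shrinks geometrically, to reduce $\check z_{k+1}$ to the exponentially decayed initial error and recover $\tfrac{4\exp(-k\check\lr(2\mu+\zeta))}{\check\lr(2\mu+\zeta)N}\E\norm{w_0-\bar{w}_{\mathsf{reg}}}_2$. For the primary term I would apply a Hilbert-space Azuma/Pinelis-type concentration inequality to $\sum_{i=k+1}^t g_i$; since $\norm{g_i}_2\le\check\tau$ almost surely, this gives $\norm{\sum_i g_i}_2 \lesssim \check\tau\sqrt{N\log(1/\delta)}$ with probability at least $1-\delta/2$, and dividing by $N$ and applying $\norm{\bar{\M}^{-1}}$ produces the $\tfrac{2\check\tau}{(2\mu+\zeta)\sqrt{N}}\sqrt{\log(1/\delta)}$ term. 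The secondary martingale term, whose increments are bounded via the projection radius $H$ and whose summed conditional variance is $O(N)$, contributes the residual $\tfrac{4\check\tau}{(2\mu+\zeta)\sqrt{N}}$; a union bound over the two concentration events then yields the stated inequality.

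I expect the main obstacle to be the concentration step, and specifically obtaining the $\sqrt{\log(1/\delta)}$ scaling (the improvement claimed over \citet{samsonovImprovedHighProbabilityBounds2024}) rather than a weaker $\log(1/\delta)$ rate. This requires a sub-Gaussian-type vector martingale inequality combined with the almost-sure increment bound furnished by the projection, and careful handling of the secondary sum $\sum_i(\M_i-\M)\check z_i$: its summands depend on the (bounded but correlated) iterates $\check z_i$, so they cannot be treated as i.i.d., and bounding this contribution without inflating the rate beyond $1/\sqrt{N}$ is the delicate part. By contrast, the eigenvalue shift giving $(2\mu+\zeta)$, the geometric contraction of the matrix products, and verifying the increment bound $\check\tau$ from \Crefrange{asm:bddFeatures}{asm:bddRewards} together with $\norm{\bar{w}_{\mathsf{reg}}}_2<H$ are routine, and—unlike \Cref{thm:regTDlamdasq}—no extra triangle-inequality step to $\bar w$ is needed here, since the claim is stated directly with respect to $\bar{w}_{\mathsf{reg}}$.
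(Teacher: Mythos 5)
There is a genuine gap, and it sits exactly where your argument does its main work. Your resolvent identity $(\zeta\I+\M)\check z_i = \lreg^{-1}(\check z_i - \check z_{i+1}) + g_i - (\M_i-\M)\check z_i$ presupposes the \emph{unprojected} linear recursion, but the theorem concerns the projected update $\check w_{i+1} = \Gamma(\check w_i + \lreg \check h_i(\check w_i))$ --- and your almost-sure increment bound $\check\tau$ is only available because $\Gamma$ keeps $\norm{\check w_i}_2 \le H$, so you cannot simply drop the projection. Once $\Gamma$ can be active, a correction term $e_i = \Gamma(\check w_i + \lreg \check h_i(\check w_i)) - (\check w_i + \lreg \check h_i(\check w_i))$ enters, and after telescoping your decomposition acquires the extra contribution $\frac{1}{N}\bar{\M}^{-1}\lreg^{-1}\sum_{i} e_i$. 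The only generic bound is $\norm{e_i}_2 \le \lreg\,\check\tau$, under which this contribution is $O\big(\check\tau\,\norm{\bar{\M}^{-1}}\big)$ --- a constant, not $O(1/\sqrt{N})$; the $e_i$ are neither mean-zero nor small relative to the terms you keep, so the decomposition collapses unless you prove the projection is essentially never active, which you do not. The paper's proof never linearizes: it runs a Doob-martingale argument directly on the scalar $\norm{\check z_{k+1:t}}_2$, using only non-expansiveness of $\Gamma$ together with the conditional contraction $\E\psq{\norm{(\I-\lreg(\zeta\I+\M_t))y}\mid\F_t}\le (1-\lreg(2\mu+\zeta)/2)\norm{y}_2$ (\Cref{lem:regtd-I-gammasupport}) to show each tail-averaged iterate is Lipschitz in the $i$-th innovation with geometrically summable constants, $\sum_i \check L_i^2 \le 4/(N(2\mu+\zeta)^2)$, and then applies sub-Gaussian concentration for Lipschitz functions; that is how the $\sqrt{\log(1/\delta)}$ tail is obtained while remaining fully compatible with projection.

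A second, independent inconsistency is your accounting of the $\delta$-free term. You claim the secondary martingale $\frac1N \bar{\M}^{-1}\sum_i(\M_i-\M)\check z_i$ ``contributes the residual $\frac{4\check\tau}{(2\mu+\zeta)\sqrt{N}}$'' via a union bound over two concentration events, but any high-probability control of a mean-zero sum necessarily carries $\delta$-dependence: both of your events would produce $\sqrt{\log(2/\delta)}$ factors, and no deterministic residual can emerge from them. In the paper the third term is not a deviation term at all --- it is the centering quantity $\E\norm{\check z_{k+1:t}}_2$, bounded via Jensen's inequality from the MSE bound of \Cref{thm:regtd-expectation_bound}, which is also what supplies the exponentially decaying initial-error term. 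Finally, a smaller mismatch: the resolvent route gives $\norm{\bar{\M}^{-1}} \le (\mu+\zeta)^{-1}$ (minimum eigenvalue of the symmetric part of $\zeta\I+\M$), not $(2\mu+\zeta)^{-1}$, so even setting the two issues above aside your constants would not reproduce the stated bound. The parts of your plan that are sound --- verifying $\norm{\bar{w}_{\mathsf{reg}}}_2 \le \norm{\xi}_2/\mu < H$ under \Cref{asm:projection}, the geometric decay of the products $\prod_j(\I-\lreg(\zeta\I+\M_j))$, and the increment bound $\check\tau$ --- are indeed routine and match the paper, but they do not rescue the decomposition itself.
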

We use a martingale decomposition and Lipschitz concentration of sub-Gaussian random variables to establish the high-probability bounds. This technique has been employed for vanilla TD \citep{prashanthConcentrationBoundsTemporal2021}. Our contribution extends this technique to mean-variance TD and its regularized variant, enabling a universal step size. As in the MSE bound case, owing to the cross terms, a universal step size does not appear to be feasible sans regularization, and we believe this is a useful finding as it deviates from the corresponding result for vanilla TD. In contrast, the authors in \citep{samsonovImprovedHighProbabilityBounds2024} employ Berbee’s coupling lemma to arrive at a sub-exponential tail bound.

\paragraph{Discussion:}
The update rule in \eqref{eq:Mt-update} represents a Linear Stochastic Approximation (LSA), and mean-variance TD is indeed a special case of the general LSA framework. Several previous works, including \citet{srikantFiniteTimeErrorBounds2019}, provide a finite time analysis for LSA. Their bounds can be applied to \eqref{eq:Mt-update}. However, our analysis differs in the following ways:
First, the step size $\epsilon$ in \citet{srikantFiniteTimeErrorBounds2019} depends on the eigenvalues of the transition probability matrix $P$, which can be difficult to obtain. We alleviate this dependency by employing regularization to achieve a universal step size that is independent of spectral information.
Second, we derive explicit constants for the matrix $\mathbf{M}$ (mean-variance TD) instead of the matrix $\mathbf{A}$ (vanilla TD).
Third, our analysis focuses on the recursive structure of the error to the projected fixed point, whereas \citet{srikantFiniteTimeErrorBounds2019} analyze the drift of a Lyapunov function.
Finally, \citet{srikantFiniteTimeErrorBounds2019} provide finite-time bounds for Mean Squared Error, while we additionally establish high-probability bounds.

The current literature on bounds for TD (or more generally, linear stochastic approximation) for Polyak-Ruppert averaging scheme does not achieve $O({1}/{t})$ bounds, to the best of our knowledge. Instead, with a Polyak-Ruppert stepsize $1/k^\alpha$, the bound is $O(
{1}/{t^\alpha})$, with $\alpha <1$, see \citep{prashanthConcentrationBoundsTemporal2021}. Tail-averaging with a “universal” step size was shown to close this gap for vanilla TD. Our contribution is to show that tail-averaging with universal step size may not be feasible to obtain an $O({1}/{t})$ for mean-variance TD, while regularization closes this gap. In \citet{samsonovImprovedHighProbabilityBounds2024}, the authors provide high-probability bounds for a general linear stochastic approximation algorithm, and specialize them to obtain bounds for the regular TD algorithm. For mean-variance TD \eqref{eq:Mt-update}, we could, in principle, apply the bounds from the aforementioned reference. However, the bound that we derive in Theorem \ref{thm:high-prob-bound} enjoys a better dependence on the confidence parameter $\delta.$ Specifically, we obtain a \(\sqrt{\log(1/\delta)}\) actor, corresponding to a sub-Gaussian tail, while the bounds in \citet{samsonovImprovedHighProbabilityBounds2024} feature a \(\log(1/\delta)\)  factor, which is equivalent to a sub-exponential tail. Furthermore, our result makes all constants clear in the case of mean-variance TD.

\section{SPSA-based Actor} \label{sec:actor}
\vspace{-1ex}
In this section, we analyze an actor algorithm based on SPSA-based gradient estimates. Throughout, we consider a parametrized class of stationary randomized policies $\{\pi_\theta,\ \theta\in\R^d\}.$  
We denote the score function as $\psi_{\theta}(s,a) = \nabla_{\theta} \log \pi_{\theta}(a|s)$. We consider smoothly-parameterized polices, i.e., satisfying the following assumptions:
\begin{assumption}\label{asm:scorefnandpolicy}
	 $\forall (s,a)\in \mcs\times\mca$ and $\ton,\tto \in \R^d$, $\exists$ positive constants $L_\psi$,  $C_\psi$ and $C_\pi$ such that\\
 (i) $\pnorm{\psi_{\ton}(s,a)-\psi_{\tto}(s,a)} \leq L_{\psi}\pnorm{\ton-\tto}$; (ii) $\pnorm{\psi_\theta(s,a)}\leq C_\psi$;\\ (iii)  $\|\pi_{\ton}(\cdot|s)-\pi_{\tto}(\cdot|s)\|_{TV} \leq C_\pi\pnorm{\ton-\tto}$, where $\|\cdot\|_{TV}$  denotes the total-variation norm.
\end{assumption}
\begin{algorithm}[!ht]
\small
\DontPrintSemicolon
\SetNoFillComment
\caption{SPSA-based actor with TD critic for mean-variance optimization (MV-SPSA-AC)}
\label{alg:SPSA-Actor-Critic-main}
\KwIn{Initialize $\theta_0 \in \R^d$, perturbation constant $\{p_t\}$, critic batch size $m$, actor step size  $\{\alpha_t\}$, critic step size $\{\lr_t\}$, number of iterations $n$, and tail-index k.}
\For{$t \leftarrow 0$ \KwTo $n-1$}{
Generate  $\Delta(t) \sim \{\pm1\}^d$ (symmetric Bernoulli) \;
    \tcc{\textbf{Critic:} Obtaining tail-averaged TD iterates for policy evaluation}
    Run \Cref{alg:TD-Critic-main} for the unperturbed policy $\pi_{\theta_t}$ to compute \(\textstyle w_{k+1:m} = (v_{k+1:m},\, u_{k+1:m})\tr\)\;
    Run \Cref{alg:TD-Critic-main} for the perturbed policy \(\pi_{\theta_t + p_t \Delta(t)}\) to compute \(\textstyle w_{k+1:m}^{+} = (v_{k+1:m}^{+},\, u_{k+1:m}^{+})\tr\).
\tcc{\textbf{Actor:} Estimating SPSA gradients for policy improvement}
\({\nabla}_i \hat{J}(\theta) = \dfrac{\fiv(s_{0})\tr (v_{k+1:m}^{+} -v_{k+1:m})}{p_t \Delta_i(t)} ; {\nabla}_i \hat{U}(\theta) =     \dfrac{\fiu(s_{0})\tr (u_{k+1:m}^{+}-u_{k+1:m})}{p_t \Delta_i(t)} 
\) \; 
$\begin{aligned}
        \theta_{t+1} = \theta_{t} \!+\! \alpha_{t}(\nabla \hat{J}(\theta_{t})\!-\!  \lambda(\nabla \hat{U}(\theta_{t})\!-\!2 {\hat J}(\theta_{t}) \nabla \hat{J}(\theta_{t}))) 
\end{aligned}$\;
}
\KwOut{Final policy $\theta_R$ chosen uniformly at random from $\{\theta_1,\ldots,\theta_n\}$
} \end{algorithm} 
In the above, (i) and (ii) imply that score function is smooth and bounded. This generally holds for most commonly used policy classes.
Since we asssume finte action space, (iii) holds for any smooth policy. A similar assumption has been made earlier for the analysis of actor-critic algorithms in a risk-neutral RL setting, cf. \citep{xuImprovingSampleComplexity2021}.
By applying the Lagrangian relaxation procedure \citep{bertsekasConstrainedOptimizationLagrange1996} to \eqref{eq:mean-var-mdp}, we get the following unconstrained optimization problem for a fixed $\lambda \geq 0$: 
\begin{align} 
\min_\theta L(\theta) = -V^{\pi_{\theta}}(s_{0}) + \lambda (\Lambda^\pi_{\theta}(s_{0})-c), \label{eq} 
\end{align} 
where $L(\theta)$ represents the Lagrangian function. In this paper, we treat $\lambda$ as a fixed bias-variance tradeoff parameter, and find a `good-enough' policy parameter for the problem \eqref{eq} defined above. For the actor update, we require the gradient of the Lagrangian w.r.t. the policy parameter $\theta$,
\begin{align}
 \nabla_{\theta} L(\theta)&=-\nabla V_\theta(s_0)+\lambda(\nabla U_\theta(s_0)-2 V_\theta(s_0) \nabla V_\theta(s_0)). \label{eq:gradL}
\end{align}
For notational simplicity, we let $ V_\theta(s_0)=J(\theta),  U_\theta(s_0)=U(\theta), \text{ and } \nabla V_{\theta}(s_{0})=\nabla J(\theta)$.
\paragraph{Basic algorithm.}
We describe the \textbf{M}ean \textbf{V}ariance \textbf{SPSA} \textbf{A}ctor \textbf{C}ritic (MV-SPSA-AC) algorithm for mean-variance optimization. 
\Cref{alg:SPSA-Actor-Critic-main} presents the pseudocode of this algorithm. This algorithm is a variant of  the actor-critic algorithm proposed in \citet{l.a.VarianceconstrainedActorcriticAlgorithms2016}, where the authors provide only asymptotic guarantees. 
MV-SPSA-AC algorithm deviates from their algorithm by incorporating tail averaging in the TD critic with LFA, and performing a mini-batch update for the SPSA-based actor. More importantly, we perform a finite-sample analysis.
\paragraph{Need for SPSA.} The variance of the return we consider lacks a simple linear Bellman equation, unlike the value function in risk-neutral RL. To address this, variance is estimated as the difference between the second moment and the square of the first moment of the return. Since the second moment satisfies a simple linear Bellman equation, this approach makes variance estimation feasible. The policy gradient expression for the square-value function is as follows (see \citep{l.a.VarianceconstrainedActorcriticAlgorithms2016} for the derivation): 
\begin{equation}  
    \nabla U(\theta) \!=\! \tfrac{1}{1 - \disc^2} \big( \underbrace{ \!\textstyle \sum_{s,a}\! \tilde{\nu}_{\theta}(s, a) \nabla \log \pi_\theta(a|s) W_\theta(s, a)}_{T_1(\theta)} + 2\disc  \underbrace{\textstyle \!\sum_{s,a,s'}\! \tilde{\nu}_{\theta}(s, a) P(s' | s, a) \nabla V_{\theta}(s')}_{T_2(\theta)} \big). \label{eq:GradUmain1}  
\end{equation}  
As seen from the expression above,  the second term  \( T_2(\theta) \) requires the gradient $\nabla V_{\theta}(s')$ for every state $s' \in \S$. An actor-critic algorithm would require an estimate of the value gradient with every possible start state, making it impractical for implementations.
SPSA-based gradient estimates offer a viable alternative to overcome this issue. $W_{\theta}(s,a)$ is equivalent of action-value function for $U(\theta)$. 
\paragraph{Actor.}
The policy parameter $\theta$ is updated in the negative direction of gradient of the Lagrangian, with step size $\alpha_{t}$ as follows:
 \begin{equation}
        \theta_{t+1} = \theta_{t} \!+\! \alpha_{t}(\nabla \hat{J}(\theta_{t})\!-\!  \lambda(\nabla \hat{U}(\theta_{t})\!-\!2 \hat{ J}(\theta_{t}) \nabla \hat{J}(\theta_{t}))),\label{eq:actor-update}
\end{equation}
where \eqref{eq:spsa-grad-est} is used for computing $\nabla \hat{J}(\theta_{t})$ and $\nabla \hat{U}(\theta_{t})$ respectively. 
In a risk-neutral RL setting, the usual recipe for the actor part is to use the policy gradient theorem to form likelihood ratio-based gradient estimates. In \citet{l.a.VarianceconstrainedActorcriticAlgorithms2016}, it is shown that such an approach does not extend to cover the mean-variance case. The authors there proposed an alternative actor that uses SPSA for gradient estimation. This scheme uses two policy trajectories: one with parameter $\theta_t$ and another with a perturbed parameter $\theta_t + p_t \Delta(t)$, denoted by the superscript ‘+’, where $\Delta(t)$ is a $d$-dimensional vector of independent Rademacher ($\pm 1$) random variables. Using these two trajectories, we form estimates of the gradient of the value and square-value functions as follows:
 \begin{equation}
       {\nabla}_i \hat{J}({\theta_t}) =  
        \frac{\fiv(s_{0})\tr (v_{k+1:m}^{+} -v_{k+1:m})}{p_t \Delta_i(t)}, \;
{\nabla}_i \hat{U}(\theta) =     \frac{\fiu(s_{0})\tr (u_{k+1:m}^{+}-u_{k+1:m})}{p_t \Delta_i(t)}, \label{eq:spsa-grad-est} 
\end{equation}
where \( v_{k+1:m} \) and \( v_{k+1:m}^{+} \) are the tail-averaged critic parameters for the value function under the unperturbed (\(\theta_t\)) and perturbed (\(\theta_t + p_t \Delta(t)\)) policy parameters, respectively. Here, \( m \) is the critic batch size. Similarly, \( u_{k+1:m} \) and \( u_{k+1:m}^{+} \) are the tail-averaged critic parameters for the square-value function under the unperturbed and perturbed policy parameters, respectively. We describe next the  policy evaluation components in the critic.

\paragraph{Critic.} We perform $m$ TD-critic updates to form the  estimates for value function $\hat{J}(\theta) =~ \phi_{v}(s_{0})\tr v_{k+1:m}$ and square-value function $\hat{U}(\theta)=\phi_u (s_{0})\tr u_{k+1:m}$, respectively. Further, we perform  $m$ updates for the perturbed policy $\theta_t+p_t\Delta(t)$ to form the value and square-value function estimates as  
$\hat{J}(\theta+p_{t}\Delta(t)) = \phi_{v}(s_{0})\tr v_{k+1:m}^+$ and $\hat{U}(\theta+p_{t}\Delta(t))=\phi_u (s_{0})\tr u_{k+1:m}^+$, respectively.
We use tail-averaged critic variants for each policy evaluated above.

\textbf{Main results.}
For every policy $\theta$, we assume
 \Cref{asm:stationary} holds, which implies the existence of the stationary distribution $\chi_{\pi_\theta}$, and scalars $\kappa > 0$ and $\rho \in (0,1)$ such that
$
\sup_{s \in S} \left\| \mathbb{P}(s_t  \mid s_0 = s) - \chi_{\pi_\theta} \right\|_{\text{TV}} \leq \kappa \rho^t, \quad \forall t \geq 0.
$ For the analysis of MV-SPSA-AC algorithm, we need to establish that the Lagrangian $L(\cdot)$ is a smooth function of $\theta.$ Further, it can be seen from \eqref{eq:gradL} that the smoothness of $J(\cdot)$ and $U(\cdot)$ would imply to smoothness of $L(\cdot)$. 
In a risk-neutral setting, $J(\cdot)$ is the usual objective, and \citep[Proposition 1]{xuImprovingSampleComplexity2021} established smoothness of $J(\cdot)$ in \eqref{eq:GradJLipschitz}. On the other hand, smoothness of $U(\cdot)$ requires a new proof, and involves significant departures from the one for $J(\cdot)$. The result below states smoothness for $J(\cdot)$ and $U(\cdot)$, with the latter result being a technical contribution of this paper.
\begin{lemma}\label{lemma:Lipschitz}Suppose Assumptions \ref{asm:scorefnandpolicy} holds. Then, for any $\ton,\tto \in \R^d$, we have 
\begin{equation}
 \label{eq:GradJLipschitz}
 \pnorm{\nabla J(\ton)-\nabla J(\tto)} \leq L_J\pnorm{\ton-\tto}, \; 
\pnorm{\nabla U(\ton) - \nabla U(\tto)} \leq L_U \|\ton - \tto\|,  
\end{equation}
where $ L_J= \tfrac{R_{\max}}{(1-\disc)}(4C_\nu C_\psi+L_\psi),$ $C_\nu=\tfrac{1}{2}C_\pi\left( 1 +  \lceil \log_\rho \kappa^{-1} \rceil + (1-\rho)^{-1} \right)$ and \( L_U =\tfrac{1}{1-\disc^2}(\tfrac{R_{\mx}^2}{(1-\disc)^2}(L_{\psi}+4 C_{\psi}C_{\nu}(1+\frac{\disc}{R_{\mx}}))+2L_{J})\).
\end{lemma}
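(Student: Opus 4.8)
The plan is to treat the two bounds separately. The estimate for $\nabla J$ in \eqref{eq:GradJLipschitz} is the standard risk-neutral policy-gradient smoothness result, so I would invoke \citep[Proposition 1]{xuImprovingSampleComplexity2021}, identifying $L_J$ with $\tfrac{R_{\max}}{1-\disc}(4C_\nu C_\psi + L_\psi)$; here the geometric-mixing constants $\kappa,\rho$ enter only through $C_\nu$, which is the total-variation Lipschitz modulus of the discounted state-action occupancy measure viewed as a function of $\theta$. The genuinely new work is the bound for $\nabla U$, and the natural starting point is the policy-gradient identity \eqref{eq:GradUmain1}, which expresses $\nabla U(\theta)$ as $\tfrac{1}{1-\disc^2}$ times the sum of $T_1(\theta)$ and $2\disc\,T_2(\theta)$. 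By the triangle inequality it suffices to bound $\pnorm{T_1(\ton)-T_1(\tto)}$ and $\pnorm{T_2(\ton)-T_2(\tto)}$ individually and then reassemble with the prefactors.

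For $T_1(\theta)=\sum_{s,a}\tilde\nu_\theta(s,a)\,\psi_\theta(s,a)\,W_\theta(s,a)$ I would use a three-way telescoping decomposition, adding and subtracting intermediate products so that in each resulting summand only one of the three $\theta$-dependent factors varies. This splits the bound into three pieces: (i) the occupancy Lipschitzness $C_\nu$ (from \Cref{asm:scorefnandpolicy}(iii) together with geometric mixing) multiplied by the uniform bounds $C_\psi$ on the score and $\sup_{s,a}|W_\theta(s,a)|$, of order $R_{\max}^2/(1-\disc)^2$, on the second-moment action-value; (ii) the score Lipschitzness $L_\psi$ (from \Cref{asm:scorefnandpolicy}(i)) times the bounds on $\tilde\nu$ and $W$; and (iii) a Lipschitz-in-$\theta$ bound on $W_\theta$ itself. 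The extra factor $(1+\tfrac{\disc}{R_{\max}})$ appearing inside $L_U$ is exactly the signature of piece (iii): because the fixed-point equation defining the square-value function carries a cross-term in $V$, the driving term of $W_\theta$ depends on $V_\theta$, so its modulus of continuity must be assembled from both the score regularity and the value-function regularity.

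For the cross term $T_2(\theta)=\sum_{s,a,s'}\tilde\nu_\theta(s,a)\,P(s'\mid s,a)\,\nabla V_\theta(s')$ only two factors depend on $\theta$, since the kernel $P$ does not, so a two-way split suffices: one piece is controlled by the occupancy Lipschitzness $C_\nu$ times the uniform bound $\pnorm{\nabla V_\theta}\le R_{\max}C_\psi/(1-\disc)^2$, and the other by $\sum_{s,a,s'}\tilde\nu_\theta P = 1$ times the gradient-Lipschitzness of $V_\theta$, which is precisely the first part of the lemma and accounts for the $2L_J$ term in $L_U$. Collecting these contributions, multiplying by $\tfrac{1}{1-\disc^2}$ and the coefficient $2\disc$ on $T_2$, and simplifying (using $\disc<1$ to absorb stray factors) should reproduce the stated $L_U$.

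I expect the main obstacle to be piece (iii): showing that $W_\theta$, the action-value analogue for the square-value function, is Lipschitz in $\theta$. Unlike the risk-neutral action-value, $W_\theta$ solves a $\disc^2$-discounted fixed-point equation whose reward-like driving term itself contains $V_\theta$, so its Lipschitz constant cannot be obtained from the score regularity alone; one must propagate the value-function smoothness through the fixed point, which is the origin of the coupled $R_{\max}^2/(1-\disc)^2$ and $C_\psi C_\nu(1+\disc/R_{\max})$ structure in $L_U$. A secondary point is to verify that the occupancy-measure Lipschitz modulus $C_\nu$ is uniform over $\theta$ and correctly picks up the mixing constants via $\lceil\log_\rho\kappa^{-1}\rceil$ and $(1-\rho)^{-1}$, which requires a telescoping-over-time argument comparing the $t$-step transition laws of the two chains induced by $\pi_{\ton}$ and $\pi_{\tto}$.
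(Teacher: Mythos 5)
Your plan tracks the paper's proof almost step for step: $\nabla J$ via \citep[Proposition 1]{xuImprovingSampleComplexity2021}, the $T_1/T_2$ split of \eqref{eq:GradUmain1}, a three-way telescoping of $T_1$ so that only one $\theta$-dependent factor varies per summand, and the two-way split of $T_2$ whose second piece produces the $2L_J$ contribution. The one place you genuinely diverge is your piece (iii), the Lipschitz continuity of $W_\theta$, which you propose to establish by propagating the smoothness of $V_\theta$ through the $\disc^2$-discounted fixed-point equation with its cross-term in $V$, and which you flag as the main obstacle. The paper avoids this entirely: since $W_\theta(s,a)$ is the expectation of the bounded trajectory functional $\big(\sum_{k}\disc^{k}r_{t+k}\big)^2 \le R_{\max}^2/(1-\disc)^2$, it bounds $\sum_{s,a}\tilde\nu_{\theta_2}(s,a)\,|W_{\theta_1}(s,a)-W_{\theta_2}(s,a)|$ in one stroke by the total-variation distance between the trajectory laws induced by $\pi_{\theta_1}$ and $\pi_{\theta_2}$, which is $\theta$-Lipschitz by exactly the same mixing machinery (\citep[Lemmas 3--4]{xuImprovingSampleComplexity2021}) already used for the occupancy measure---no fixed-point recursion and no handling of the $V$ cross-term is needed, so your anticipated obstacle dissolves. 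Your propagation route would also work (the cross-term only feeds in the Lipschitz constant of $V_\theta$, which is available), but it is heavier and leads you to a small misattribution: the factor $(1+\disc/R_{\max})$ in $L_U$ is not the signature of $W_\theta$'s modulus of continuity; in the paper's accounting it appears when the $2\disc$-weighted occupancy-Lipschitz term from $T_2$ (proportional to $\disc\,R_{\max}C_\psi C_\nu$) is folded together with the $T_1$ terms (proportional to $R_{\max}^{2}C_\psi C_\nu$), while the $W$-difference itself contributes a term of the same form as your piece (i), merging into the $4C_\psi C_\nu$ coefficient. With that sub-step replaced by the TV-coupling bound, your argument reproduces the paper's proof and its constants.
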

We remark that the smoothness result for the square-value function in \Cref{lemma:Lipschitz}, derived in the context of variance as a risk measure, holds independent significance, as it  may prove useful in variants of actor-only or actor-critic methods for mean-variance optimization. Using smoothness of $J(\cdot)$ and $U(\cdot)$, we arrive at the following result. 
\begin{lemma} \label{lemma:LipschitzGradL} Let $L_o=L_{J}\Big(1+2 \lambda \tfrac{R_{\mx}}{(1-\disc)^2}+2 \lambda  \big(\tfrac{R_{\mx}C_{\psi}}{(1-\disc)^2}\big)^2\Big)+\lambda L_{U}$. For any $\ton, \tto \in \mathbb{R}^d$, we have
  \begin{equation} \label{eq:LforLagrangian}
      \pnorm{\nabla L(\ton)-\nabla L(\tto)} \leq  L_{o}\pnorm{\ton- \tto}.
  \end{equation}
\end{lemma}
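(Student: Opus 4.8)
The plan is to start from the closed-form gradient of the Lagrangian in \eqref{eq:gradL}, namely $\nabla L(\theta) = -\nabla J(\theta) + \lambda\big(\nabla U(\theta) - 2 J(\theta)\nabla J(\theta)\big)$, and to bound the difference $\nabla L(\ton) - \nabla L(\tto)$ term by term. Writing
\[
\nabla L(\ton) - \nabla L(\tto) = -\big(\nabla J(\ton) - \nabla J(\tto)\big) + \lambda\big(\nabla U(\ton) - \nabla U(\tto)\big) - 2\lambda\big(J(\ton)\nabla J(\ton) - J(\tto)\nabla J(\tto)\big),
\]
the triangle inequality splits the task into three pieces. The first two are immediate from \Cref{lemma:Lipschitz}: $\pnorm{\nabla J(\ton) - \nabla J(\tto)} \le L_J \pnorm{\ton - \tto}$ and $\lambda\pnorm{\nabla U(\ton) - \nabla U(\tto)} \le \lambda L_U \pnorm{\ton - \tto}$, which contribute the $L_J$ and $\lambda L_U$ summands of $L_o$.

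The only nontrivial piece is the bilinear cross term $J(\theta)\nabla J(\theta)$, a product of two $\theta$-dependent factors that is therefore not directly covered by the smoothness statement of \Cref{lemma:Lipschitz}. I would treat it with the standard add-and-subtract (product-rule) decomposition
\[
J(\ton)\nabla J(\ton) - J(\tto)\nabla J(\tto) = J(\ton)\big(\nabla J(\ton) - \nabla J(\tto)\big) + \big(J(\ton) - J(\tto)\big)\nabla J(\tto),
\]
and bound each factor separately. This requires two auxiliary uniform bounds on the value function and its gradient: $|J(\theta)| = |V_\theta(s_0)| \le \tfrac{R_{\mx}}{1-\disc}$, which follows from \Cref{asm:bddRewards} and the geometric series, and $\pnorm{\nabla J(\theta)} \le \tfrac{R_{\mx} C_\psi}{(1-\disc)^2}$, which follows from the policy-gradient representation of $\nabla J$ together with the score bound $\pnorm{\psi_\theta}\le C_\psi$ in \Cref{asm:scorefnandpolicy}. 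The gradient bound also makes $J(\cdot)$ itself Lipschitz with the same constant, so $|J(\ton) - J(\tto)| \le \tfrac{R_{\mx}C_\psi}{(1-\disc)^2}\pnorm{\ton-\tto}$. Combining these with the $\nabla J$-Lipschitz bound from \Cref{lemma:Lipschitz} gives
\[
\pnorm{J(\ton)\nabla J(\ton) - J(\tto)\nabla J(\tto)} \le \Big(\tfrac{R_{\mx}}{1-\disc}L_J + \big(\tfrac{R_{\mx}C_\psi}{(1-\disc)^2}\big)^2\Big)\pnorm{\ton-\tto},
\]
and multiplying by $2\lambda$ produces the remaining summands of $L_o$; collecting the three pieces establishes \eqref{eq:LforLagrangian}.

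I expect the main obstacle to be exactly this cross term: the product structure means a single application of \Cref{lemma:Lipschitz} does not suffice, and one must pair a boundedness estimate with a Lipschitz estimate for each factor. The value-function bound is routine, but the gradient bound $\pnorm{\nabla J(\theta)} \le R_{\mx}C_\psi/(1-\disc)^2$ requires invoking the policy-gradient theorem and the score assumptions, and some care is needed to assemble the constants into precisely the stated $L_o$, possibly after mild over-bounding (e.g. $\tfrac{1}{1-\disc}\le\tfrac{1}{(1-\disc)^2}$, and factoring $L_J$ out of the parenthetical).
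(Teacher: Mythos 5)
Your proposal is correct and follows essentially the same route as the paper's proof: the same three-way triangle-inequality split of $\nabla L$, the same product-rule add-and-subtract for the cross term $J(\theta)\nabla J(\theta)$, and the same auxiliary bounds $|J(\theta)|\le \tfrac{R_{\mx}}{1-\disc}$ and $\pnorm{\nabla J(\theta)}\le \tfrac{R_{\mx}C_\psi}{(1-\disc)^2}$ (via the policy gradient theorem and the score bound), together with the induced Lipschitz estimate on $J$ itself. In fact your cross-term constant $\tfrac{R_{\mx}}{1-\disc}L_J+\big(\tfrac{R_{\mx}C_\psi}{(1-\disc)^2}\big)^2$ is the careful version of what the paper's appendix derives (its displayed chain drops the square on the second summand), and you correctly flag that reconciling this with the stated $L_o$ requires exactly the mild over-bounding you describe.
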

\vspace{-2ex}
The smoothness claim in the result above for the Lagrangian is a key technical contribution, as it serves as a building block for the analysis of the actor update. In particular, this smoothness result facilitates an SGD-type analysis for the actor update. For the analysis of \Cref{alg:SPSA-Actor-Critic-main}, we make the following assumption that ensures  the value and square-value functions lie in a linear space. 
\begin{assumption}\label{asm:criticapproxerror}
    For any given policy parameter $\theta$, let $\bar{v}(\theta), \bar{u}(\theta)$ denote solutions to fixed point equations in \eqref{eq:tdfixedpts}. Then, $\E[\phi(s_{0})\tr \bar{v}(\theta)] = J(\theta), \E[\phi(s_{0})\tr \bar{u}(\theta)] = U(\theta).$
\end{assumption}
A similar assumption is made in \citep[Eq.~(13)]{kumarSampleComplexityActorcritic2023}. Our analysis can be easily extended to include an approximation error term if \Cref{asm:criticapproxerror} does not hold. 
The main result that establishes stationary convergence of the algorithm MV-SPSA-AC is given below (see \Cref{actor:proof-sketch} for a proof sketch and \Cref{appendix: spsa-based-actor} for the detailed proof).
\begin{theorem}\label{thm:actor}
Suppose \Crefrange{asm:stationary}{asm:criticapproxerror} hold.
Run MV-SPSA-AC\footnote{We employ the un-regularized variant of TD-critic for deriving the bound above. The modification to use the regularized critic for the analysis is straightforward, and we omit the details. }
 for $n$ iterations with actor step size $\alpha_{t}\equiv\alpha = 1/n^{3/4}$, perturbation constant $ p_{t} \equiv p = 1/n^{1/4},$ critic batch size $m=n$, and critic step size $\lr \leq \lr_{\mx}$ as defined in \Cref{thm:expectation_bound}. Let $\theta_{R}$ be chosen uniformly from $\{\theta_{1},\dots,\theta_{n}\}$. Then,
\begin{align*}
    \E\psq{\norm{\nabla L (\theta_{R})}^2}  \leq C/n^{1/4}, 
\end{align*}
for some constant $C$ that is specified in \Cref{appendix: spsa-based-actor}.
\end{theorem}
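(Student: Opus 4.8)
The plan is to treat the actor update \eqref{eq:actor-update} as an inexact (biased) stochastic gradient descent step on the Lagrangian $L(\cdot)$ and run the standard smooth-nonconvex descent analysis, controlling the bias and second-moment terms by propagating the critic mean-squared error bound of \Cref{thm:expectation_bound-tailavg} through the SPSA gradient estimator. Writing $\widehat{\nabla L}(\theta_t)$ for the negated update direction in \eqref{eq:actor-update} (an estimate of $\nabla L(\theta_t)$), the update reads $\theta_{t+1} = \theta_t - \alpha\,\widehat{\nabla L}(\theta_t)$. First I would invoke the $L_o$-smoothness of $L$ from \Cref{lemma:LipschitzGradL} to obtain the descent inequality
\begin{equation*}
L(\theta_{t+1}) \le L(\theta_t) - \alpha\,\langle \nabla L(\theta_t),\, \widehat{\nabla L}(\theta_t)\rangle + \tfrac{L_o\alpha^2}{2}\,\|\widehat{\nabla L}(\theta_t)\|^2.
\end{equation*}

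Next, conditioning on the natural filtration up to $\theta_t$ (so the per-iteration randomness is the Rademacher perturbation $\Delta(t)$ and the fresh critic samples), I would write $\E[\widehat{\nabla L}(\theta_t)\mid\theta_t] = \nabla L(\theta_t) + \mathbf{b}_t$, bound the inner product by Young's inequality as $-\langle\nabla L(\theta_t),\nabla L(\theta_t)+\mathbf{b}_t\rangle \le -\tfrac12\|\nabla L(\theta_t)\|^2 + \tfrac12\|\mathbf{b}_t\|^2$, take total expectations, telescope over $t=0,\dots,n-1$ (using that $L$ is bounded below since the value and variance are bounded under \Cref{asm:bddRewards}, say by $L_{\min}$), and divide by $n\alpha/2$. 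Since $\theta_R$ is uniform on $\{\theta_1,\dots,\theta_n\}$, the left side is $\E\|\nabla L(\theta_R)\|^2$, yielding
\begin{equation*}
\E\|\nabla L(\theta_R)\|^2 \le \frac{2(L(\theta_0)-L_{\min})}{n\alpha} + \frac1n\sum_{t=0}^{n-1}\E\|\mathbf{b}_t\|^2 + \frac{L_o\alpha}{n}\sum_{t=0}^{n-1}\E\|\widehat{\nabla L}(\theta_t)\|^2.
\end{equation*}

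The core of the argument is bounding $\|\mathbf{b}_t\|$ and $\E\|\widehat{\nabla L}(\theta_t)\|^2$. I would decompose each SPSA estimate in \eqref{eq:spsa-grad-est} into (i) the SPSA estimate of the \emph{true} $J$ and $U$ and (ii) a critic-error term of the form $(\epsilon^{+}-\epsilon)/(p\,\Delta_i)$. For (i), the symmetric Rademacher moments annihilate the first-order term, so the estimate is conditionally unbiased for $\nabla J,\nabla U$ up to an $O(p)$ Taylor remainder controlled by the Lipschitz gradients $L_J,L_U$ of \Cref{lemma:Lipschitz}, and its second moment is $O(\|\nabla J\|^2+\|\nabla U\|^2)=O(1)$. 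For (ii), \Cref{asm:criticapproxerror} makes $\phi(s_0)^\top\bar v(\theta)$ match $J(\theta)$ exactly, so the residual is purely critic error, whose mean and mean-square are bounded by the bias and MSE of \Cref{thm:expectation_bound-tailavg}; dividing by $p$ amplifies these to an $O(1/(p\sqrt{m}))$ bias and $O(\mathrm{MSE}/p^2)$ second moment. The bilinear term $2\hat J\nabla\hat J$ is handled by writing $\hat J=J+\mathrm{err}_J$, $\nabla\hat J=\nabla J+\mathrm{err}_{\nabla J}$, using boundedness of $\hat J$ (values lie in $[-R_{\max}/(1-\disc),\,R_{\max}/(1-\disc)]$) and Cauchy–Schwarz to control the correlation $\E[\mathrm{err}_J\,\mathrm{err}_{\nabla J}]$. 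Collecting these gives $\|\mathbf{b}_t\|=O(p + 1/(p\sqrt{m}))$ and $\E\|\widehat{\nabla L}(\theta_t)\|^2 = O(1 + \mathrm{MSE}/p^2)$.

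Finally, substituting $\alpha=n^{-3/4}$, $p=n^{-1/4}$, $m=n$ (so $\mathrm{MSE}=O(1/n)$ by \Cref{thm:expectation_bound-tailavg}) balances the three terms: the optimization term is $2(L(\theta_0)-L_{\min})/(n\alpha)=O(n^{-1/4})$, the bias term is $O(p^2 + 1/(p^2 m))=O(n^{-1/2})$, and the second-moment term is $L_o\alpha\cdot O(1+n^{-1/2})=O(n^{-3/4})$, so the dominant contribution is $O(n^{-1/4})$. The main obstacle is the tension introduced by the $1/p$ division in the SPSA estimator: shrinking $p$ suppresses the SPSA bias but inflates the critic-error contribution to the gradient variance as $\mathrm{MSE}/p^2$, so the rate hinges on jointly tuning $p$, the critic batch size $m$, and $\alpha$. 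A secondary difficulty is that the bilinear term $\hat J\nabla\hat J$ is genuinely biased even with unbiased factors, because $\hat J$ and $\nabla\hat J$ share the unperturbed critic and are therefore correlated; this correlation must be tracked through the critic MSE bound rather than dismissed.
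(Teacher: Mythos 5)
Your proposal matches the paper's proof in all essential respects: the same $L_o$-smooth descent lemma from \Cref{lemma:LipschitzGradL}, the same decomposition of each SPSA estimate into an $O(p)$ Taylor bias (annihilated at first order by Rademacher symmetry) plus a critic-error term amplified to an $O(1/(p\sqrt{m}))$ bias via \Cref{thm:expectation_bound-tailavg} and \Cref{asm:criticapproxerror}, the same explicit tracking of the correlated bilinear term $\hat{J}\nabla\hat{J}$ (the paper adds and subtracts $J(\theta_t)\nabla\hat{J}(\theta_t)$, which is your error-decomposition in disguise), and the same tuning $\alpha=n^{-3/4}$, $p=n^{-1/4}$, $m=n$. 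Your only deviations---Young's inequality on the bias inner product in place of the paper's Cauchy--Schwarz with the uniform bound $\|\nabla L(\theta_t)\|\leq K_1$, and a sharper $O(1+\mathrm{MSE}/p^2)$ second-moment bound where the paper uses the cruder $O(1/p_t^2)$---are cosmetic refinements within the same architecture and leave the $O(n^{-1/4})$ rate, dominated by the $1/(n\alpha)$ optimization term, unchanged.
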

\begin{remark}
    We need to account for the biased nature of the SPSA gradient estimators in our analysis. This introduces the perturbation constant \( p_t \), leading to the terms \( \mathcal{O}(\frac{1}{p}) \), \( \mathcal{O}(\frac{1}{p_t^2}) \), and \( \mathcal{O}(p_t) \). Consequently, we face a trade-off that arises due to the bias in the SPSA gradient estimates, acting as a bottleneck. 
\end{remark}
\begin{remark}
\citet{eldowaFiniteSampleAnalysis2022} study the variance of per-step rewards, analyzed as reward volatility \citep{bisiRiskaverseTrustRegion2020, zhangMeanVariancePolicyIteration2021}, which is also equivalent to the discount-normalized variance in \citep{filarVariancepenalizedMarkovDecision1989}. Unlike the variance of the return, this objective lends itself to a REINFORCE-type policy gradient algorithm and does not require a zeroth-order gradient estimation scheme. This is because the gradient of the variance of per-step rewards does not feature a `problematic' term like $T_2(\cdot)$; instead it only has a term analogous to $T_1(\cdot),$ which can be more easily handled similar to the risk-neutral case.   
\end{remark}

The result above establishes the convergence to a stationary point of Lagrangian, and this is significant because $L(\theta)$ encapsulates both the mean and variance of returns. Optimizing $L(\theta)$ ensures a tradeoff between maximizing the value function and minimizing variance. This result is particularly notable as it establishes convergence guarantees for a non-convex function. Mean-variance optimization has been shown to be NP-hard even if the transition dynamics are available, see \citep{mannorAlgorithmicAspectsMean2013}. Policy-gradient and actor-critic algorithms present a viable alternative where the usual convergence guarantees are to a stationary point. For instance, several policy gradient-type algorithms have been shown to converge to an approximate stationary point in the literature, cf.
\citep{xuImprovingSampleComplexity2021, zhangGlobalConvergencePolicy2020}. 

We remark on the sample complexity required for $\epsilon$-accurate convergence of the MV-SPSA-AC algorithm. Theorem~\ref{thm:actor} indicates that the actor loop must run $\Omega(\epsilon^{-4})$ times. However, in each iteration, the critic is executed twice---once for the perturbed and once for the unperturbed trajectories---using $O(n)$ samples per run to estimate the policy gradients.  Thus, the total sample complexity for $\epsilon$-accurate convergence is $O(\epsilon^{-4})$. While this represents slow convergence, the use of biased SPSA gradient estimates typically degrades the rate.  To the best of our knowledge, finite-sample results for zeroth-order actor-critic methods remain unavailable, even in risk-neutral RL \citep{leiZerothOrderActorCriticEvolutionary2025}. Investigating whether sharper analyses or stronger assumptions could improve the convergence rate is an interesting direction for future work.

\section{Concluding remarks} \label{sec:conclusion}
We considered a risk-aware discounted reward MDP through mean-variance optimization. Specifically, we analyzed an mean-variance actor-critic algorithm, and derived finite-sample performance guarantees. We first obtained an $O({1}/{t})$ bound on the convergence of the tail-averaged iterate of the mean-variance TD with LFA. We also obtained a high probability bound that effectively exhibits a sub-Gaussian tail. Next, we employed an SPSA-based actor in conjunction with the above critic, and obtained an $O(n^{-{1}/{4}})$ convergence guarantee in the number $n$ of actor iterations.

\bibliography{main}
\bibliographystyle{rlj}

\beginSupplementaryMaterials

\section{Outline of critic analysis} \label{appendix:critic-outline}
 Below, we sketch the proof of \Cref{thm:expectation_bound} to highlight the main ideas and key differences from the standard TD proof. Full proofs of \Cref{thm:expectation_bound} and \Crefrange{thm:expectation_bound-tailavg}{thm:main-regtd-high-prob} are provided in Appendices \ref{sec:Appendix-expectationbd}--\ref{appendix:high-probability-bd}.

As in proofs of standard TD bounds, we perform a bias-variance decomposition to obtain
\begin{align} 
        \E\psq{\|z_{t+1}\|^{2}} & \leq 2\underbrace{\E\left[\left\|\mathbf{C}^{t:0}z_{0}\right\|^{2}\right]}_{z_{t}^{\text{bias}}} +2\lr^{2} \underbrace{\E \psq{\norm{\sum_{k=0}^{t} \mathbf{C}^{t:k+1} h_{k}(\bar{w})}^{2}}}_{z_{t}^{\text{variance}}},\label{eq:biasvardecomposition-mainpaper}
\end{align}
where  
$   \mathbf{C}^{i:j}  = 
    \begin{cases}
    (\I-\lr \M_{i})(\I-\lr \M_{i-1}) \dots (\I-\lr \M_{j}) \quad &\text{if } i \geq j \\
    \; \I & \text{otherwise.} 
    \end{cases} $ 

To bound the bias term, we expand the matrix product by one step, yielding  
\begin{align*}
    z_{t}^{\text{bias}} & = \E \psq{\norm{\mathbf{C}^{t:0}z_{0}}^{2}} \\
    & = \E \psq {\E \psq{
    \p{\mathbf{C}^{t-1:0}z_{t-1}^{\text{bias}}}\tr  \p{\I-\lr \M_{t}}\tr 
    \p{\I-\lr \M_{t}}
     \p{\mathbf{C}^{t-1:0}z_{t-1}^{\text{bias}}}
      \,\middle|\, \F_{t} } }.
\end{align*}
Next, we establish a result for any \( y \in \mathbb{R}^{2q} \) that aids in handling both the bias and variance terms.
\begin{align*}
 \E \psq{y\tr \p{\I-\lr \M_{t}}\tr \p{\I-\lr \M_{t}} y \; \middle| \; \F_{t}}  &=  \norm{y}_{2}^{2} -\lr \underbrace{y\tr \E \psq{\p{\M_{t}\tr +\M_{t}} | \F_{t}}y}_{\colorcircle{blue!20}{T1}} \\
& \quad +\lr^{2} \underbrace{y\tr \E \psq{\M_{t}\tr \M_{t} \; \middle| \;\F_{t}}y}_{\colorcircle{blue!20}{T2}}  \numberthis \label{eq:ytrCyp}
\end{align*} 
The term T1 is lower-bounded in a standard manner (as in regular TD), i.e.,
\begin{align*}
    y\tr  \E \psq{\p{\M_{t}\tr +\M_{t}}\;\middle|\; \F_{t}} y &= y\tr  \p{\M\tr +\M} y  \geq 2 \mu \norm{y}_{2}^{2}, \numberthis \label{eq:T1bdp}
\end{align*}
where $\mu = \eigmin$ is the minimum eigenvalue of the matrix \(\frac{\M+\M\tr}{2}\).

On the other hand, bounding term T2 involves significant deviations. In particular,
\begin{align*}
     y\tr  \E \psq{\M_{t}\tr \M_{t}\;\middle|\; \F_{t}} y  
    &=  \underbrace{v\tr  \E \psq{\A_{t}\tr \A_{t}+\C_{t}\tr \C_{t}\;\middle|\; \F_{t}}v}_{\colorcircle{gray!20}{S1}}
        + \underbrace{u\tr  \E \psq{\B_{t}\tr \B_{t}\;\middle|\; \F_{t}}u}_{\colorcircle{gray!20}{S2}}\nonumber\\
        &\qquad
        + \underbrace{v\tr  \E \psq{\C_{t}\tr \B_{t}\;\middle|\; \F_{t}} u}_{\colorcircle{gray!20}{S3}} 
        + \underbrace{u\tr  \E \psq{\B_{t}\tr \C_{t}\;\middle|\; \F_{t}} v}_{\colorcircle{gray!20}{S4}}. \numberthis \label{eq:T2p}
\end{align*}
Here, S1 and S2 resemble terms that appear in the finite-sample analysis of regular TD, while S3 and S4 are cross-terms specific to the estimation of the square-value function.

We bound S1, S2 as follows:
\begin{align} \label{ineq: S12bds}
S1 & \leq \p{(\fivmax)^{2} \p{1+\disc}^{2}+ 4 \disc^{2} R_{\max}^{2} {\fiumax}^{2}} v\tr  \matB v, \\
S2 &\leq  (\fiumax)^{2} \p{1+2\disc^{2}+\disc^{4}} u\tr  \matG u. \nonumber
\end{align} 
In the above, $\matB$ and $\matG$ are expectations of the outer product of vectors $\fiv(s_{t})$ and $\fiu(s_{t})$ respectively. 
If the cross-terms were not present, then one could have related T2 to a constant multiple of $v\tr  \matB v + u\tr  \matG u$, leading to a universal step size choice, in the spirit of \citet{patilFiniteTimeAnalysis2024}.
However, cross-terms present a challenge to this approach, and we bound the S3, S4 cross-terms as follows:
\begin{align} \label{ineq: S34bds}
    S3 + S4 \le 2 (\fiumax)^{2} R_{\max} v\tr \p{\disc (\matB+\matG)+\disc^{3} (\matB+\matG) } u.
\end{align}
We overcome the challenge of bounding the cross-terms (S3 and S4) through the following key observations: First, the cross-terms exhibit symmetry and are equal. Consequently, analyzing one term suffices, as the derived upper bound applies to the other term as well. Second, to bound the cross-term, we leverage the following inequality:
\[
- v\tr  \p{\frac{aa\tr +bb\tr }{2}}u \leq v\tr  \p{ ab\tr } u \leq v\tr  \p{\frac{aa\tr +bb\tr }{2}} u.
\]
A similar inequality, also employed in bounding S1 and S2, simplifies the bound in terms of the matrices $\mathbf{B}$ and $\mathbf{G}$, resulting in the expression in \eqref{ineq: S34bds}. \\
Combining the bounds on S1 to S4 in conjunction with the fact that $v \tr  (\matB+\matG) u  \leq \frac{\lambda_{\mathsf{max}}(\matB+\matG)}{2} \norm{y}_{2}^{2}$ (see \Cref{lem:UB for vtr(B+G)u}), we obtain the following bound for a step size $\lr \leq \lr_{\max}$ specified in \Cref{thm:expectation_bound} statement:
\begin{align*}
    \E \psq{y\tr  \p{\I-\lr \M_{t}}\tr 
                                \p{\I-\lr \M_{t}} y \; \middle| \; \F_{t}
                        }  
    \le \p{1-\lr \mu} \norm{y}_{2}^{2}. \label{BoundingBiasUsefulResultp} \numberthis
\end{align*} 
Using the bound above, the bias term in \eqref{eq:biasvardecomposition-mainpaper} is handled as follows:
\begin{align*}
    z_{t}^{bias} \le  \exp\p{-\lr \mu t} \E \psq{\norm{z_{0}}^{2}}.
\end{align*}
Using $\norm{h_{k}(\bar{w})}^{2} \le \sigma^2$, we bound the variance term as follows:
\begin{align*}
\E \psq{\norm{\sum_{k=0}^{t} \Cij^{t:k+1} h_{k}(\bar{w})}_{2}^{2}}  
  &\le \sigma^{2} \sum_{k=0}^{t} \E \psq{ \E \psq{\norm{(\I-\lr \M_{t})}^{2} \; \middle| \;\F_{t}} \norm{\Cij^{t-1:k+1}}_{2}^{2}} \\
& \le \sigma^{2} \sum_{k=0}^{t} \p{1-\lr \mu} \E \psq{ \norm{\Cij^{t-1:k+1}}_{2}^{2}} \\
& \le\sigma^{2} \sum_{k=0}^{t} \p{1-\lr \mu}^{t-k} \le \frac{\sigma^{2}} {\lr \mu}. \numberthis \label{eq:variancebdp}
\end{align*}
The main claim follows from combining the bounds on the bias and variance terms, followed by straightforward simplifications. The reader is referred to \Cref{sec:Appendix-expectationbd} for the full proof.

\section{Proof of Theorem \ref{thm:expectation_bound}} \label{sec:Appendix-expectationbd}

\begin{proof}
    \ \\
\noindent\textbf{Step 1: Bias-variance decomposition}

Recall the updates in \Cref{alg:TD-Critic-main} can be rewritten as follows: 
\begin{align} \label{eq:wt-update}
    w_{t+1} = w_{t} + \lr (r_{t} \phi_{t}-\M_{t} w_{t}).
\end{align}
Defining the centered error as 
$z_{t+1} = w_{t+1}-\bar{w}$, we obtain
\begin{align*}
      z_{t+1}      &\phantom{\hskip 6pt}= w_{t}-\bar{w} + \lr (r_{t}\phi_{t}-\M_{t}w_{t})+\lr \M_{t} \bar{w} - \lr \M_{t} \bar{w} \\
            &\phantom{\hskip 6pt}= (\I-\lr \M_{t})(w_{t}-\bar{w})+\lr(r_{t}\phi_{t}-\M_{t}\bar{w}) \\
            &\phantom{\hskip 6pt}= (\I-\lr \M_{t})z_{t}+\lr(r_{t}\phi_{t}-\M_{t}\bar{w}).
\end{align*}
Letting $ h_{t}(w_{t}) = r_{t} \phi_{t} - \M_{t} w_{t} $, we have
\begin{align*}
    z_{t+1} &= (\I-\lr \M_{t})z_{t}+\lr h_{t}(\bar{w}).
\end{align*}
Unrolling the equation above, we obtain
\begin{align*}
        z_{t+1} &= (\I-\lr \M_{t})((\I-\lr \M_{t-1})z_{t-1}+\lr                              h_{t-1}(\bar{w}))+\lr h_{t}(\bar{w}) \\
                &= (\I-\lr \M_{t})(\I-\lr \M_{t-1}) \dots (\I-\lr \M_{0}) z_{0} \; + \lr h_{t}(\bar{w}) \\ 
                & \quad + \lr (\I-\lr \M_{t})  h_{t-1}(\bar{w}) \\ 
                & \quad + \lr  (\I-\lr \M_{t})(\I-\lr \M_{t-1}) h_{t-2}(\bar{w}) \\ 
                & \quad \vdots  \\
                & \quad+\lr (\I-\lr \M_{t})(\I-\lr \M_{t-1}) \dots (\I-\lr \M_{1}) h_{0}(\bar{w}). 
\end{align*}
Define 
\begin{align*}
    \mathbf{C}^{i:j} & = \begin{cases}
                (\I-\lr \M_{i})(\I-\lr \M_{i-1}) \dots (\I-\lr \M_{j}) & \text{if } i \geq j \\
                \; \I & \text{otherwise.} 
                \end{cases} 
\end{align*}
Using the  definition above, we obtain
\begin{align*}
   \|z_{t+1}\|^{2} & = \left \| \mathbf{C}^{t:0}z_{0}+\lr \sum_{k=0}^{t} \mathbf{C}^{t:k+1} h_{k}(\bar{w}) \right\|^{2} .
\end{align*}
Taking expectations and using $\|a+b\|^2  \leq 2 \|a\|^{2}+2\|b\|^{2}$, we obtain
\begin{align*}
    \E[\left\|z_{t+1}\right\|^{2}] 
    & \leq 2 z_{t}^{\text{bias}} + 2 \lr^{2} z_{t}^{\text{variance}}, \numberthis \label{eq:biasvardecomposition}
\end{align*}    
where $z_{t}^{\text{bias}} = \E\left[\left\|\mathbf{C}^{t:0}z_{0}\right\|^{2}\right]$ and
$z_{t}^{\text{variance}} = \E \psq{\norm{\sum_{k=0}^{t} \mathbf{C}^{t:k+1} h_{k}(\bar{w})}^{2}}$. 

\noindent\textbf{\newline Step 2: Bounding the bias term} 

Next, we state and prove a useful lemma that will assist in bounding the bias term in \eqref{eq:biasvardecomposition}.

\begin{lemma} \label{lem:I-gammaMt-support}
Consider a random vector $y \in \mathbb{R}^{2q}$ and let $\F_{t}$ be sigma-algebra generated by $\{w_{0} \dots w_{t}\}$, For $\lr \leq \lr_{\mx}$, we have
\begin{align} \label{appendix:supportingbias}
& \E \psq{y\tr  \p{\I-\lr \M_{t}}\tr \p{\I-\lr \M_{t}} y \; \middle| \; \F_{t}} \leq  \p{1-\lr \mu} \norm{y}_{2}^{2}, \\
& \E \psq{\norm{\p{\I-\lr \M_{t}} y} \; \middle| \; \F_{t}} 
\leq \p{1-\frac{\lr \mu}{2}} \norm{y}_{2},
\label{lem: norm-version}
\end{align} where $\lr \leq \lr_{\mx} = \frac{\mu}{k}, \mu = \lambda_{\mathsf{min}\p{\frac{\M\tr +\M}{2}}}$ is the minimum eigenvalue of the matrix $\frac{\M\tr +\M}{2}$ and 
$ k =\max \big \{ 4 (\fivmax)^{4}  \!+\!4 \disc^{2} R_{\max}^{2} {(\fiumax)}^{2}(\fivmax)^{2}  \bm{,} 
4(\fiumax)^{4}  \big \} \!+\!  2 \disc R_{\max} ((\fivmax)^{2}(\fiumax)^{2}+(\fiumax)^{4}).
$
\end{lemma}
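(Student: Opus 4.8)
The plan is to expand the quadratic form in powers of $\lr$ and show that its first-order (linear) part dominates its second-order (quadratic) part once $\lr$ is small. Writing $\p{\I-\lr\M_{t}}\tr\p{\I-\lr\M_{t}}=\I-\lr\p{\M_{t}+\M_{t}\tr}+\lr^{2}\M_{t}\tr\M_{t}$ and taking conditional expectations reproduces \eqref{eq:ytrCyp}: $\E\psq{y\tr\p{\I-\lr\M_{t}}\tr\p{\I-\lr\M_{t}}y\mid\F_{t}}=\norm{y}_{2}^{2}-\lr\,T1+\lr^{2}\,T2$, with $T1=y\tr\E\psq{\M_{t}+\M_{t}\tr\mid\F_{t}}y$ and $T2=y\tr\E\psq{\M_{t}\tr\M_{t}\mid\F_{t}}y$. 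I would then lower-bound $T1$ and upper-bound $T2$, and choose $\lr\le\lr_{\mx}$ so that $\norm{y}_{2}^{2}-2\lr\mu\norm{y}_{2}^{2}+\lr^{2}T2\le\p{1-\lr\mu}\norm{y}_{2}^{2}$, which yields \eqref{appendix:supportingbias}.

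The bound on $T1$ is immediate: since $\E\psq{\M_{t}\mid\F_{t}}=\M$, we have $T1=y\tr\p{\M+\M\tr}y\ge2\mu\norm{y}_{2}^{2}$ by the definition $\mu=\lambda_{\min}\p{\tfrac{\M+\M\tr}{2}}$, which is \eqref{eq:T1bdp}. The crux is $T2$. Writing $y=\p{v,u}\tr$ and using the block-lower-triangular structure $\M_{t}=\bigl(\begin{smallmatrix}\A_{t}&\zeromat\\\C_{t}&\B_{t}\end{smallmatrix}\bigr)$, the product $\M_{t}\tr\M_{t}$ splits $T2$ into the four terms of \eqref{eq:T2p}: the diagonal blocks $S1=v\tr\E\psq{\A_{t}\tr\A_{t}+\C_{t}\tr\C_{t}\mid\F_{t}}v$ and $S2=u\tr\E\psq{\B_{t}\tr\B_{t}\mid\F_{t}}u$, and the cross-terms $S3=v\tr\E\psq{\C_{t}\tr\B_{t}\mid\F_{t}}u$ and $S4=u\tr\E\psq{\B_{t}\tr\C_{t}\mid\F_{t}}v$.

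The cross-terms are the main obstacle. For $S1,S2$ I would expand $\A_{t},\B_{t},\C_{t}$ into their rank-one factors and repeatedly apply $v\tr\p{ab\tr}v\le\tfrac12 v\tr\p{aa\tr+bb\tr}v$, together with $\norm{\fiv(s)}\le\fivmax$, $\norm{\fiu(s)}\le\fiumax$ and $|r_{t}|\le R_{\max}$, reducing them to multiples of $v\tr\matB v$ and $u\tr\matG u$ with $\matB=\E\psq{\fiv(s_{t})\fiv(s_{t})\tr}$ and $\matG=\E\psq{\fiu(s_{t})\fiu(s_{t})\tr}$; this gives \eqref{ineq: S12bds}. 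For the cross-terms the key observations are that $S3=S4$ by symmetry, so bounding one suffices, and that $-v\tr\p{\tfrac{aa\tr+bb\tr}{2}}u\le v\tr\p{ab\tr}u\le v\tr\p{\tfrac{aa\tr+bb\tr}{2}}u$ collapses $S3+S4$ into $2(\fiumax)^{2}R_{\max}\p{\disc+\disc^{3}}v\tr\p{\matB+\matG}u$, i.e. \eqref{ineq: S34bds}. Unlike the diagonal blocks, these cross-terms do not reduce to a clean multiple of $v\tr\matB v+u\tr\matG u$, which is exactly what obstructs an eigenvalue-free (universal) step size here.

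To finish, I would collapse the three bounds into $T2\le k\norm{y}_{2}^{2}$. Using $v\tr\matB v\le(\fivmax)^{2}\norm{v}_{2}^{2}$, $u\tr\matG u\le(\fiumax)^{2}\norm{u}_{2}^{2}$ and $\disc\in(0,1)$ (so that $(1+\disc)^{2},(1+\disc^{2})^{2}\le4$ and $\disc+\disc^{3}\le2\disc$), the diagonal part $S1+S2$ is at most $\max\{4(\fivmax)^{4}+4\disc^{2}R_{\max}^{2}(\fiumax)^{2}(\fivmax)^{2},\,4(\fiumax)^{4}\}\norm{y}_{2}^{2}$ after replacing the separate $\norm{v}_{2}^{2},\norm{u}_{2}^{2}$ coefficients by their maximum and using $\norm{v}_{2}^{2}+\norm{u}_{2}^{2}=\norm{y}_{2}^{2}$; and \Cref{lem:UB for vtr(B+G)u}, which gives $v\tr\p{\matB+\matG}u\le\tfrac{\lambda_{\max}(\matB+\matG)}{2}\norm{y}_{2}^{2}\le\tfrac{(\fivmax)^{2}+(\fiumax)^{2}}{2}\norm{y}_{2}^{2}$, turns $S3+S4$ into $2\disc R_{\max}\p{(\fivmax)^{2}(\fiumax)^{2}+(\fiumax)^{4}}\norm{y}_{2}^{2}$. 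Summing these reproduces exactly the constant $k$ of the statement, so $\lr\le\lr_{\mx}=\mu/k$ gives $\lr k\le\mu$ and hence \eqref{appendix:supportingbias}. Finally, the norm version \eqref{lem: norm-version} follows from conditional Jensen applied to the concave square root, $\E\psq{\norm{\p{\I-\lr\M_{t}}y}\mid\F_{t}}\le\sqrt{\E\psq{\norm{\p{\I-\lr\M_{t}}y}_{2}^{2}\mid\F_{t}}}\le\sqrt{1-\lr\mu}\,\norm{y}_{2}$, combined with $\sqrt{1-x}\le1-\tfrac{x}{2}$.
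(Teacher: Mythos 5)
Your proposal is correct and follows essentially the same route as the paper's proof: the same expansion into $T1$ and $T2$, the identical $S1$--$S4$ block decomposition with the rank-one inequality $v\tr(ab\tr)v \leq \tfrac12 v\tr(aa\tr+bb\tr)v$, the symmetry $S3=S4$ handled via \Cref{lem:UB for vtr(B+G)u}, the same coarse simplifications ($(1+\disc)^2\le 4$, $\disc+\disc^3\le 2\disc$) recovering the constant $k$, and the same Jensen argument for \eqref{lem: norm-version}. No gaps.
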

\begin{proof}
To prove the desired result, we split \eqref{appendix:supportingbias} as follows:
\begin{align}
    &\E \psq{y\tr  \p{\I-\lr \M_{t}}\tr 
                                \p{\I-\lr \M_{t}} y \; \middle| \; \F_{t}
                        } \nonumber 
    = \E \psq{y\tr  \p{\I-\lr \p{\M_{t}\tr +\M_{t}}+                                      \lr^{2}\M_{t}\tr \M_{t}} y  \; \middle| \; \F_{t} }  \nonumber 
    \\  &=  \norm{y}_{2}^{2} -\lr \underbrace{y\tr \E \psq{\p{\M_{t}\tr +\M_{t}} \; \middle| \ \F_{t}}y}_{\colorcircle{blue!20}{T1}}+\lr^{2} \underbrace{y\tr \E \psq{\M_{t}\tr \M_{t} \; \middle| \;\F_{t}}y}_{\colorcircle{blue!20}{T2}}. \label{eq:ytrCy}
\end{align}
We lower bound the term T1 as follows:
\begin{align*}
    y\tr  \E \psq{\p{\M_{t}\tr +\M_{t}}\;\middle|\; \F_{t}} y &= y\tr  \p{\M\tr +\M} y  \geq 2 \mu \norm{y}_{2}^{2}. \numberthis \label{eq:T1bd}
\end{align*}
Next, we upper bound the term T2 as follows:
\begin{align*}
    \M_{t}\tr \M_{t}&={\begin{pmatrix}
                        \A_{t} & \zeromat \\
                        \C_{t} & \B_{t}
                        \end{pmatrix}}\tr 
                        \begin{pmatrix}
                                \A_{t} & \zeromat \\
                                \C_{t} & \B_{t}
                        \end{pmatrix} 
                     = \begin{pmatrix}
                                \A_{t}\tr \A_{t}+\C_{t}\tr \C_{t} & \C_{t}\tr \B_{t} \\
                                \B_{t}\tr \C_{t} & \B_{t}\tr \B_{t}
                        \end{pmatrix},
\end{align*}
Substituting the above into T2, we obtain:
\begin{align}
    y\tr  \E \psq{\M_{t}\tr \M_{t}\;\middle|\; \F_{t}} y 
    & = y\tr  \E \psq{
                        \begin{pmatrix}
                        \A_{t}\tr \A_{t}+\C_{t}\tr \C_{t} & \C_{t}\tr \B_{t} \\
                        \B_{t}\tr \C_{t} & \B_{t}\tr \B_{t}
                        \end{pmatrix}
                        \;\middle|\; \F_{t}} y \nonumber \\
    &=  \begin{pmatrix}
        v\tr  & u\tr 
        \end{pmatrix}
         \E \psq{
                        \begin{pmatrix}
                        \A_{t}\tr \A_{t}+\C_{t}\tr \C_{t} & \C_{t}\tr \B_{t} \\
                        \B_{t}\tr \C_{t} & \B_{t}\tr \B_{t}
                        \end{pmatrix}
                        \;\middle|\; \F_{t}}
        \begin{pmatrix}
        v \\ 
        u
        \end{pmatrix} \nonumber \\ 
    &=  \underbrace{v\tr  \E \psq{\A_{t}\tr \A_{t}+\C_{t}\tr \C_{t}\;\middle|\; \F_{t}}v}_{\colorcircle{gray!20}{S1}}
        + \underbrace{u\tr  \E \psq{\B_{t}\tr \B_{t}\;\middle|\; \F_{t}}u}_{\colorcircle{gray!20}{S2}}\nonumber\\
        &\qquad
        + \underbrace{v\tr  \E \psq{\C_{t}\tr \B_{t}\;\middle|\; \F_{t}} u}_{\colorcircle{gray!20}{S3}} 
        + \underbrace{u\tr  \E \psq{\B_{t}\tr \C_{t}\;\middle|\; \F_{t}} v}_{\colorcircle{gray!20}{S4}}. \label{appendix:T2}
\end{align}
To upper bound T2, we first derive upper bounds for S1, S2, S3, and S4.

First, we examine S1.  
\begin{align} \label{eq:S1ab-split}
    v\tr  \E \psq{\A_{t}\tr \A_{t}+\C_{t}\tr \C_{t}\;\middle|\; \F_{t}}v =& \underbrace{v\tr  \E \psq{\A_{t}\tr \A_{t}\;\middle|\; \F_{t}} v}_{\colorcircle{green!20}{(a)}} + \underbrace{v\tr  \E \psq{\C_{t}\tr \C_{t}\;\middle|\; \F_{t}}v}_{\colorcircle{green!20}{(b)}}.
\end{align}
We bound (a) in \eqref{eq:S1ab-split} as follows: 
\begin{align*}
&v\tr  \E \psq{\A_{t}\tr  \A_{t} \;\middle|\; \F_{t}} v \\
&= v\tr  \E [\p{\fiv (s_{t}) \fiv (s_{t})\tr  \!-\!\disc \fiv(s_{t})\fiv(s_{t+1})\tr }\tr (\fiv (s_{t}) \fiv (s_{t})\tr \!-\! \disc \fiv(s_{t})\fiv(s_{t+1})\tr )  \;|\;\F_{t}\;] v \\
& = v\tr  \E \big[\fiv (s_{t}) \fiv (s_{t})\tr  \fiv (s_{t}) \fiv (s_{t})\tr  \!-\! \disc \fiv (s_{t}) \fiv (s_{t})\tr  \fiv(s_{t})\fiv(s_{t+1})\tr  \\ & \phantom{\hskip 35pt} - \disc \fiv(s_{t+1})\fiv(s_{t})\tr  \fiv (s_{t}) \fiv (s_{t})\tr  \\ & \phantom{\hskip 35pt} + \disc^{2} \fiv(s_{t+1})\fiv(s_{t})\tr  \fiv(s_{t})\fiv(s_{t+1})\tr  \mid \F_{t} \big] v \\
&\stackrel{(i)}{=} v\tr  \E \big[\norm{\fiv(s_{t})}_{2}^{2} \big(\fiv(s_{t})\fiv(s_{t})\tr -\disc \underbrace{\p{\fiv(s_{t}) \fiv(s_{t+1})\tr +\fiv(s_{t+1}) \fiv(s_{t})\tr }}_{(I)} \\
& \quad+\disc^{2} \fiv(s_{t+1})\fiv(s_{t+1})\tr \big)\mid \F_{t}\big] v \\
&\stackrel{(ii)}{\leq} (\fivmax)^{2} v\tr  \E \big[\fiv(s_{t})\fiv(s_{t})\tr  +\disc \p{\fiv(s_{t})\fiv(s_{t})\tr +\fiv(s_{t+1})\fiv(s_{t+1})\tr } \\
&\quad+\disc^{2} \fiv(s_{t+1})\fiv(s_{t+1})\tr \mid \F_{t}\big] v \\
& \le (\fivmax)^{2} \p{1+2\disc+\disc^{2}} v\tr  \matB v, \label{eq:S1a} \numberthis 
\end{align*}
where $\matB  = \E \psq{\fiv(s_{t})\fiv(s_{t})\tr \mid \F_{t}}$.
In the above,
the inequality in (i) follows from the identity $\norm{\fiv(s_{t})}_{2}^{2} = \fiv(s_{t})\tr \fiv(s_{t})$; (ii) follows from applying the bound on the features from \Cref{asm:bddFeatures} and using the following inequality for term (I) in (i):
\begin{align}
    - v\tr  \p{\frac{aa\tr +bb\tr }{2}}v \leq v\tr  \p{ ab\tr } v \leq v\tr  \p{\frac{aa\tr +bb\tr }{2}} v. \label{eq:supporting-ineq}
\end{align} 
The final inequality in \eqref{eq:S1a} follows from using the following equivalent forms of $\matB$:
\begin{align*}
    \matB & = \E \psq{\fiv(s_{t})\fiv(s_{t})\tr \mid \F_{t}} 
    = \E \psq{\fiv(s_{t+1})\fiv(s_{t+1})\tr  \;\middle|\;\F_{t}}  = \E^{\chi,\TPM}\psq{\fiv(s_{t})\fiv(s_{t})\tr } \\ & = \E^{\chi,\TPM} \psq{\fiv(s_{t+1})\fiv(s_{t+1})\tr }. \numberthis \label{eq:defineB}
\end{align*}
The equivalences above hold due to the i.i.d. observation model (\Cref{asm:iidNoise}).

Next, We bound (b) in \eqref{eq:S1ab-split} as follows: 
\begin{align*}
    v\tr  \E \psq{\C_{t}{\tr }\C_{t} \;\middle|\; \F_{t}} v &= v \tr  \E \psq{\p{-2\disc r_{t} \fiu(s_{t}) \fiv(s_{t+1})\tr }\tr  \p{-2\disc r_{t} \fiu(s_{t}) \fiv(s_{t+1})\tr }\mid \F_{t}} v  \\
    &= 4 \disc^{2}  v \tr  \E \psq {r_{t}^{2} \fiv(s_{t+1}) \fiu(s_{t})\tr  \fiu(s_{t}) \fiv(s_{t+1})\tr \;\middle|\; \F_{t}} v  \\
    &\stackrel{(i)}{=} 4 \disc^{2} v \tr  \E \psq {r_{t}^{2} \norm{\fiu(s_{t})}_{2}^{2} \fiv(s_{t+1}) \fiv(s_{t+1})\tr \;\middle|\; \F_{t}} v \\
    & \stackrel{(ii)}{\leq} 4 \disc^{2} R_{\mx}^{2} (\fiumax)^{2} v\tr  \matB v,  \label{eq:S1b} \numberthis 
\end{align*}
where (i) follows from $\norm{\fiu(s_{t})}_{2}^{2} = \fiu(s_{t})\tr \fiu(s_{t})$ and (ii) follows from bound on rewards (\Cref{asm:bddRewards}) and the definition of $\matB$ in \eqref{eq:defineB}.

Combining \eqref{eq:S1a} and \eqref{eq:S1b}, we obtain the following upper bound for S1:
\begin{align}
v\tr  \E \psq{\A_{t}\tr \A_{t}+\C_{t}\tr \C_{t}\;\middle|\; \F_{t}} v & \leq \p{(\fivmax)^{2} \p{1+\disc}^{2}+ 4 \disc^{2} R_{\mx}^{2} {(\fiumax)}^{2}} v\tr  \matB v. \label{eq:S1}
\end{align} 

Next, we derive an upper bound for S2 in \eqref{appendix:T2} as follows:
\begin{align*}
& u\tr  \E \psq{\B_{t}\tr \B_{t}\;\middle|\; \F_{t}}u \\ & = u\tr  \E [\p{\fiu (s_{t}) \fiu (s_{t})\tr  - \disc^{2} \fiu(s_{t}) \fiu(s_{t+1})\tr }\tr (\fiu (s_{t}) \fiu (s_{t})\tr - \disc^{2} \fiu(s_{t}) \fiu(s_{t+1})\tr ) \mid  \F_{t}]u \\ 
&=u\tr  \E \big [ \fiu (s_{t}) \fiu (s_{t})\tr  \fiu (s_{t}) \fiu (s_{t})\tr -\disc^{2} \big(\fiu (s_{t}) \fiu (s_{t})\tr  \fiu (s_{t}) \fiu (s_{t+1})\tr  \\ 
& \phantom{\hskip 35pt} +\fiu (s_{t+1}) \fiu (s_{t})\tr  \fiu (s_{t}) \fiu (s_{t})\tr \big) \\ 
& \phantom{\hskip 35pt} +\disc^{4}(\fiu (s_{t+1}) \fiu (s_{t})\tr  \fiu (s_{t}) \fiu (s_{t+1})\tr )  \;|\; \F_{t} \big]u \\ 
&\stackrel{(i)}{=} u\tr  \E \big[\norm{\fiu(s_{t})}_{2}^{2} \big(\fiu(s_{t})\fiu(s_{t})\tr -\disc^{2} \underbrace{\p{\fiu(s_{t}) \fiu(s_{t+1})\tr +\fiu(s_{t+1}) \fiu(s_{t})\tr }}_{(II)} \\ 
& \phantom{\hskip 35pt} +\disc^{4} \fiu(s_{t+1})\fiu(s_{t+1})\tr \big) \mid \F_{t} \big] u \\
&\stackrel{(ii)}{\leq}  \p{{\fiumax}}^{2} u\tr  \E \big[\fiu(s_{t})\fiu(s_{t})\tr +\disc^{2} \p{\fiu(s_{t})\fiu(s_{t})\tr +\fiu(s_{t+1})\fiu(s_{t+1})\tr } \\ 
& \phantom{\hskip 70pt} +\disc^{4} \fiu(s_{t+1})\fiu(s_{t+1})\tr \; | \; \F_{t} \big] u \\
& \leq (\fiumax)^{2} \p{1+2\disc^{2}+\disc^{4}} u\tr  \matG u, \label{eq:S2} \numberthis 
\end{align*}
where $\matG = \E \psq{\fiu(s_{t})\fiu(s_{t})\tr \;\middle|\; \F_{t}}$. In the above, the inequality in (i) follows from $\norm{\fiu(s_{t})}_{2}^{2} = \fiu(s_{t})\tr \fiu(s_{t})$; (ii) follows from bound on features (\Cref{asm:bddFeatures}) and applying the inequality \eqref{eq:supporting-ineq} to (II); and \eqref{eq:S2} follows from bound on features (\Cref{asm:iidNoise}). 

The inequality in \eqref{eq:S2} follows from following equivalent forms of $\matG$:
\begin{align*} 
\matG &= \E \psq{\fiu(s_{t})\fiu(s_{t})\tr \;\middle|\; \F_{t}} = \E \psq{\fiu(s_{t+1})\fiu(s_{t+1})\tr \;\middle|\; \F_{t}} = \E^{\chi,\TPM}\psq{\fiu(s_{t})\fiu(s_{t})\tr } \\ 
& = \E^{\chi,\TPM} \psq{\fiu(s_{t+1})\fiu(s_{t+1})\tr }.  \numberthis \label{eq:defineG} 
\end{align*}
The equivalences above hold from the i.i.d observation model (\Cref{asm:iidNoise}).
   
We observe that the scalars S3 and S4 in \eqref{appendix:T2} are equal, i.e.,
\begin{align*}
v\tr  \E \psq{\C_{t}\tr  \B_{t}\;\middle|\; \F_{t}} u = u\tr  \E \psq{\B_{t}\tr  \C_{t}\;\middle|\; \F_{t}} v.
\end{align*}

We establish an upper bound for S3 in \eqref{appendix:T2} as follows:
\begin{align*}
    & v\tr  \E \psq{\C_{t}\tr  \B_{t}} u \\ & = v\tr  \E \big[-2 \disc r_{t} \fiv(s_{t+1}) \fiu(s_{t})\tr  \fiu(s_{t}) \fiu(s_{t})\tr \\ & \phantom{\hskip 35pt} + 2 \disc^{3} r_{t} \fiv(s_{t+1}) \fiu(s_{t})\tr  \fiu(s_{t}) \fiu(s_{t+1})\tr \;|\; \F_{t} \big] u \\
    & \stackrel{(i)}{=} \norm{\fiu(s_{t})}^{2}_{2} v\tr  \E [-2 r_{t} \disc  \underbrace{\fiv(s_{t+1}) \fiu(s_{t})\tr}_{(III)}  + 2 r_{t} \disc^{3} \underbrace{\fiv(s_{t+1}) \fiu(s_{t+1})\tr}_{(IV)} \;|\; \F_{t}] u \\
    & \stackrel{(ii)}{\leq}  (\fiumax)^{2} R_{\mx} v\tr  \E \bigg[ \disc \p{ \fiv(s_{t+1}) \fiv(s_{t+1})\tr +\fiu(s_{t}) \fiu(s_{t})\tr } 
    \\ & \phantom{\hskip 90pt}+ \disc^{3}  (\fiv(s_{t+1}) \fiv(s_{t+1})\tr +\fiu(s_{t+1}) \fiu(s_{t+1})\tr )\;|\; \F_{t}\bigg] u \\
    & {\leq}  (\fiumax)^{2} R_{\mx} v\tr \p{\disc (\matB+\matG)+\disc^{3} (\matB+\matG) } u, \label{eq:S3} \numberthis 
\end{align*}
where (i) follows from $\norm{\fiu(s_{t})}_{2}^{2} = \fiu(s_{t})\tr \fiu(s_{t})$ ; (ii) follows from bounds on features and rewards (\Cref{asm:bddFeatures,asm:bddRewards}) and applying the inequality below to the coefficients of $\disc$ (III) with ($a=\fiv(s_{t+1})$, $b=\fiu(s_{t})$) and $\disc^3$ (IV) with  ($a=\fiv(s_{t+1})$, $b=\fiu(s_{t+1})$) respectively.
\begin{align*}
    - v\tr  \p{\frac{aa\tr +bb\tr }{2}}u \leq v\tr  \p{ ab\tr } u \leq v\tr  \p{\frac{aa\tr +bb\tr }{2}} u. 
\end{align*}
\eqref{eq:S3} follows from using values of matrices $\matB$ \eqref{eq:defineB} and $\matG$ \eqref{eq:defineG}.

Substituting \eqref{eq:S1}--\eqref{eq:S3} in \eqref{appendix:T2}, we determine the upper bound for T2 as follows:
\begin{align}
    y\tr  \E \psq{\M_{t}\tr \M_{t}\;\middle|\; \F_{t}} y
    & \leq \p{(\fivmax)^{2} \p{1+\disc}^{2} + 4 \disc^{2} R_{\mx}^{2} {(\fiumax)}^{2}} v\tr  \matB v \label{eq:T2Bd} \\ 
    & \phantom{\hskip 10pt} + (\fiumax)^{2} \p{1+\disc^{2}}^{2} u\tr  \matG u  \nonumber \\   
    & \phantom{\hskip 10pt} + 2 (\fiumax)^{2} R_{\mx} (\disc (1+\disc^2)) v\tr \p{\matB+\matG} u. \nonumber  
\end{align}
Next, we state and prove a useful result to simplify \eqref{eq:T2Bd} further.
\begin{lemma} For any $y = (v,u)\tr  \in \R^{2|\S|}$ and matrix $\matB+\matG$  defined in \eqref{eq:S3}, we have \label{lem:UB for vtr(B+G)u}
\begin{align*}
v \tr  (\matB+\matG) u & \leq \frac{\lambda_{\mathsf{max}}(\matB+\matG)}{2} \norm{y}_{2}^{2}. 
\end{align*}
\end{lemma}
\begin{proof}
We have
\begin{align*}
      v \tr  (\matB+\matG) u 
                    &\stackrel{(a)}{\leq} \norm{v}_{\matB+\matG} \norm{u}_{\matB+\matG} \\ 
                    &\stackrel{(b)}{\leq} \sqrt{v \tr  (\matB+\matG) v} \sqrt{u \tr  (\matB+\matG) u} \\ 
                    &\stackrel{(c)}{\leq} \lambda_{\mathsf{max}}(\matB+\matG) \sqrt{\norm{v}_{2}^2 \norm{u}_{2}^2} \\
                    & \stackrel{(d)}{\leq} \lambda_{\mathsf{max}}(\matB+\matG) \frac{\norm{v}_{2}^{2}+\norm{u}_{2}^{2}}{2} \\ 
                    & \stackrel{(e)}{\leq} \frac{\lambda_{\mathsf{max}}(\matB+\matG)}{2} \norm{y}_{2}^{2},
\end{align*}
where (a) follows from Cauchy-Schwarz inequality; (b) follows from definition of the weighted norm; (c) follows from Rayleigh quotient theorem for a symmetric real matrix $\mathbf{Q}$, i.e.,
$x \tr  \mathbf{Q} x \leq \lambda_{\mathsf{max}}(\mathbf{Q}) \norm{x}^{2}_{2}$; (d) follows from AM-GM inequality; and (e) follows from definition of $\norm{y}_{2}^2 = \norm{v}_{2}^2+\norm{u}_{2}^2 $. 
\end{proof}
Substituting the upper bounds obtained for T1 \eqref{eq:T1bd} and T2 \eqref{eq:T2Bd} in \eqref{eq:ytrCy}, we get
\begin{align*}
    & \E \psq{y\tr  \p{\I-\lr \M_{t}}\tr 
                                \p{\I-\lr \M_{t}} y \; \middle| \; \F_{t}
                        }  
    =  \norm{y}_{2}^{2} -\lr \underbrace{y\tr \E \psq{\p{\M_{t}\tr +\M_{t}} | \F_{t}}y}_{\colorcircle{blue!20}{T1}}\\
    & \phantom{\hskip 190pt} +\lr^{2} \underbrace{y\tr \E \psq{\M_{t}\tr \M_{t} \; \middle| \;\F_{t}}y}_{\colorcircle{blue!20}{T2}}  \\
    &\leq  \norm{y}^{2}_{2}- 2\lr\mu \norm{y}_{2}^{2}  +  \lr^{2}  \bigg ( \p{(\fivmax)^{2} \p{1+\disc}^{2} + 4 \disc^{2} R_{\mx}^{2} {(\fiumax)}^{2}} v\tr  \matB v \\ 
    & \phantom{\hskip 10pt} + (\fiumax)^{2} \p{1+\disc^{2}}^{2} u\tr  \matG u  + 2 (\fiumax)^{2} R_{\mx} (\disc (1+\disc^2)) v\tr \p{\matB+\matG} u \bigg ) \\
    &\stackrel{(i)}{\leq} \norm{y}_{2}^{2}- 2 \lr\mu \norm{y}_{2}^{2} +  \lr^{2}  \bigg ( \p{(\fivmax)^{2} \p{1+\disc}^{2} + 4 \disc^{2} R_{\mx}^{2} {(\fiumax)}^{2}} \lambda_{\mx}(\matB) \norm{v}^{2}_{2} \\ & \phantom{\hskip 10pt} + (\fiumax)^{2} \p{1+\disc^{2}}^{2} \lambda_{\mx}(\matG)\norm{u}^{2}_{2} + (\fiumax)^{2} R_{\mx} (\disc (1+\disc^2))\lambda_{\mathsf{max}}(\matB+\matG) \norm{y}_{2}^{2} \bigg ) \\ 
    &\leq \norm{y}_{2}^{2}- 2 \lr \mu \norm{y}_{2}^{2} + \lr^{2}  \bigg ( \mx \Big \{\p{(\fivmax)^{2} \p{1+\disc}^{2} + 4 \disc^{2} R_{\mx}^{2} {(\fiumax)}^{2}}\lambda_{\mx}(\matB) \bm{,} \\ 
    & \phantom{\hskip 10pt}  (\fiumax)^{2} \p{1+\disc^{2}}^{2} \lambda_{\mx}(\matG) \Big \} \norm{y}_{2}^{2}+(\fiumax)^{2} R_{\mx} (\disc (1+\disc^2))\lambda_{\mathsf{max}}(\matB+\matG) \norm{y}_{2}^{2} \bigg ) \\
    &\leq \norm{y}_{2}^{2}-\lr \bigg( 2 \mu - \lr \Big(\mx \Big \{ \p{(\fivmax)^{2} \p{1+\disc}^{2} + 4 \disc^{2} R_{\mx}^{2} {(\fiumax)}^{2}} \lambda_{\mx}(\matB)\bm{,} \\ 
    & \phantom{\hskip 10pt} (\fiumax)^{2} \p{1+\disc^{2}}^{2} \lambda_{\mx}(\matG)\Big \} + (\fiumax)^{2} R_{\mx} (\disc (1+\disc^2))\lambda_{\mathsf{max}}(\matB+\matG) \Big) \bigg) \norm{y}_{2}^{2} \\
    & \stackrel{(ii)}{\leq} \norm{y}_{2}^{2}-\lr \bigg( 2 \mu - \lr \Big(\max \left \{ 4 (\fivmax)^{4}  \!+\!4 \disc^{2} R_{\max}^{2} {(\fiumax)}^{2}(\fivmax)^{2}  \bm{,} 
    4(\fiumax)^{4}  \right \} \\ 
    & \phantom{\hskip 10pt}+  2 \disc R_{\max} \left((\fivmax)^{2}(\fiumax)^{2}+(\fiumax)^{4}\right) \Big) \bigg) \norm{y}_{2}^{2} \\
    & \leq \; (1-\lr \mu) \norm{y}_{2}^{2},\label{BoundingBiasUsefulResult} \numberthis
\end{align*} 
where (i) follows from \Cref{lem:UB for vtr(B+G)u} and the inequality \( x \tr  \mathbf{Q} x \leq \lambda_{\mathsf{max}}(\mathbf{Q}) \norm{x}^{2}_{2} \);  
(ii) follows from the bounds \( \lambda_{\mathsf{max}}(\mathbf{B}) \leq (\fivmax)^{2} \), \( \lambda_{\mathsf{max}}(\mathbf{G}) \leq (\fiumax)^{2} \), and \( \lambda_{\mathsf{max}}(\mathbf{B+G}) \leq (\fivmax)^{2} + (\fiumax)^{2} \), given that \( \mathbf{B} \) and \( \mathbf{G} \) are outer products of the vectors \( \phi_{v}(s_{t}) \) and \( \phi_{u}(s_{t}) \), respectively;  
\eqref{BoundingBiasUsefulResult} follows from choosing \( \lr \leq \lr_{\mx} \).

Rewriting \eqref{BoundingBiasUsefulResult} in norm form gives:
\begin{align*} 
    & \E \psq{y\tr  \p{\I-\lr \M_{t}}\tr 
                                \p{\I-\lr \M_{t}} y \; \middle| \; \F_{t}
                        }  = 
    \E \psq{\norm{(\I-\lr \M_{t}) y}^2 \; \middle| \; \F_{t}} \leq (1-\lr \mu) \norm{y}_{2}^2. \numberthis \label{eq:norm-ver}
\end{align*} 
Taking the square root on both sides of \eqref{eq:norm-ver} and applying Jensen's inequality yields the second claim.
\begin{align*} 
    & \E \psq{\norm{(\I-\lr \M_{t}) y} \; \middle| \; \F_{t}} \leq \sqrt{\E \psq{\norm{(\I-\lr \M_{t}) y}^2 \; \middle| \; \F_{t}}} \leq (1-\lr \mu)^{\frac{1}{2}} \norm{y}_{2} {\leq} \p{1-\frac{\lr \mu}{2}} \norm{y}_{2}, \label{eq:secondclaim} \numberthis 
\end{align*}
where \eqref{eq:secondclaim} follows from applying the inequality $(1-x)^{\frac{1}{2}} \leq 1-\frac{x}{2}$, for $x \geq 0$ with $x = \lr \mu$. 
\end{proof}
Now, we bound the bias term as follows:
\begin{samepage}
\begin{align*}
    z_{t}^{\text{bias}} & = \E \psq{\norm{\mathbf{C}^{t:0}z_{0}}^{2}} \\
    & = \E \psq {\E \psq{
    \p{\mathbf{C}^{t-1:0}z_{t-1}^{\text{bias}}}\tr  \p{\I-\lr \M_{t}}\tr 
    \p{\I-\lr \M_{t}}
     \p{\mathbf{C}^{t-1:0}z_{t-1}^{\text{bias}}}
      | \F_{t} } }\\
    & \stackrel{(i)}{\leq} \p{1-\lr \mu} \E \psq{\norm{\mathbf{C}^{t-1:0}z_{t-1}^{\text{bias}}}^{2}} \\ 
    & {\leq} \p{1-\lr \mu}^{t} \E \psq{\norm{z_{0}}^{2}} \numberthis \label{eq:bias-recursion}\\
    & {\leq} \exp\p{-\lr \mu t} \E \psq{\norm{z_{0}}^{2}}  \numberthis \label{eq:biasbd},
\end{align*}
where (i) follows from \Cref{lem:I-gammaMt-support}; \eqref{eq:bias-recursion} follows from unrolling the recursion and applying \Cref{lem:I-gammaMt-support} repeatedly; and \eqref{eq:biasbd} follows from the inequality below:  
\[
(1-\lr\mu)^{t} = \exp(t\log(1-\lr\mu)) \leq \exp(-\lr\mu t).
\]
\end{samepage}
\noindent\textbf{\newline Step 3: Bounding the variance term} 
For the variance bound, we require an upper bound for $\norm{h_{t}(\bar{w})}^{2}$, which we derive below.
\begin{align*}
& \norm{h_{t}(\bar{w})}^{2} = \norm{r_{t} \phi(s_{t}) - \M_{t} \bar{w}}^{2} \\
& \stackrel{(a)}{\leq} 2\norm{r_{t} \phi(s_{t})}^{2}+ 2\norm{\M_{t} \bar{w}}^{2}_{2} \\ 
& \stackrel{(b)}{\leq} 2  R_{\mathsf{max}}^{2} \p{(\fivmax)^{2} + R_{\mx}^{2} (\fiumax)^{2}}+ 2 \norm{\M_{t}}^{2} \norm{\bar{w}}_{2}^{2} \\
& \stackrel{(c)}{\leq} 2  R_{\mathsf{max}}^{2} \p{(\fivmax)^{2} + R_{\mx}^{2} (\fiumax)^{2}} + 2 ( (\fivmax)^{4} \p{1+\disc}^{2}+  (\fiumax)^{4} \p{1+\disc^{2}}^{2}\\ 
    & \phantom{\hskip 10pt} +4 \disc^{2} R_{\mx}^{2}(\fivmax)^{2}(\fiumax)^{2}) \norm{\bar{w}}_{2}^{2}  \\
& = \sigma^{2}, \numberthis \label{eq:upperboundforsigmasq}
\end{align*} where (a) follows using $\norm{a+b}^{2} \leq 2\norm{a}^{2}+2\norm{b}^{2} $; (b) follows from bounds on features and rewards (\Cref{asm:bddFeatures,asm:bddRewards}); and (c) follows from expanding the upper bound on  $\norm{\M_{t}}^{2}$.

Next, we bound the variance term in \eqref{eq:biasvardecomposition} as follows:
\begin{align*}
z_{t}^{\text{variance}} & =\E \psq{\norm{\sum_{k=0}^{t} \Cij^{t:k+1} h_{k}(\bar{w})}_{2}^{2}}\\
& \stackrel{(a)}{\leq} \sum_{k=0}^{t} \E \psq{ \norm{ \Cij^{t:k+1} h_{k}(\bar{w})}^{2}_{2}} \\
& \stackrel{(b)}{\leq} \sum_{k=0}^{t} \E \psq{ \norm{\Cij^{t:k+1}}^{2} \norm{h_{k}(\bar{w})}^{2}} \\ 
& \stackrel{(c)}{\leq} \sigma^{2} \sum_{k=0}^{t} \E \psq{\norm{\Cij^{t:k+1}}_{2}^{2}} \\ 
& \stackrel{(d)}{\leq} \sigma^{2} \sum_{k=0}^{t} \E \psq{ \E \psq{\norm{\Cij^{t:k+1}}_{2}^{2} \;\middle|\; \F_{t}}} \\
& \stackrel{(e)}{\leq} \sigma^{2} \sum_{k=0}^{t} \E \psq{ \E \psq{\norm{(\I-\lr \M_{t})\Cij^{t-1:k+1}}_{2}^{2} \;\middle|\; \F_{t}}} \\
& \stackrel{(f)}{\leq} \sigma^{2} \sum_{k=0}^{t} \E \psq{ \E \psq{\norm{(\I-\lr \M_{t})}^{2} \;\middle|\; \F_{t}} \norm{\Cij^{t-1:k+1}}_{2}^{2}} \\
& \stackrel{(g)}{\leq} \sigma^{2} \sum_{k=0}^{t} \p{1-\lr \mu} \E \psq{ \norm{\Cij^{t-1:k+1}}_{2}^{2}} \\
& \stackrel{(h)}{\leq} \sigma^{2} \sum_{k=0}^{t} \p{1-\lr \mu}^{t-k} \\
& \stackrel{(i)}{\leq} \frac{\sigma^{2}} {\lr \mu}, \numberthis \label{eq:variancebd}
\end{align*}
where (a) follows from triangle inequality and linearity of expectations; (b) follows from the inequality $\norm{\mathbf{A}x} \leq \norm{\mathbf{A}} \norm{x} $; (c) follows from a bound on $\norm{h_{k}(\bar{w})}^2$ in \eqref{eq:upperboundforsigmasq}; (d) follows from the tower property of conditional expectations; (e) follows from unrolling the product of matrices $\Cij^{t:k+1}$ by one step; (f) follows from the inequality $\norm{\mathbf{A}\mathbf{B}} \leq \norm{\mathbf{A}} \norm{\mathbf{B}}$; (g) follows from \Cref{lem:I-gammaMt-support}; (h) follows from unrolling the product of matrices; and (i) follows from computing the upper bound for the finite geometric series. 

\noindent\textbf{Step 4: Clinching argument} 

The main claim follows from combining the bounds on the bias term \eqref{eq:biasbd} and the variance term \eqref{eq:variancebd} in \eqref{eq:biasvardecomposition} as follows:
\begin{align*}
    \E[\left\|z_{t+1}\right\|^{2}] 
    & \leq 2 z_{t}^{\text{bias}} + 2 \lr^{2} z_{t}^{\text{variance}} \\ 
    &  \leq  2 \exp\p{-\lr \mu t} \E \psq{\norm{z_{0}}^{2}} + \frac{2\lr \sigma^2}{\mu}.
\end{align*}
\end{proof}
\section{Proof of Theorem \ref{thm:expectation_bound-tailavg}}
\label{sec:Appendix-tail-avg-expectationbd}
\begin{proof} \noindent\textbf{\newline Step 1: Bias-variance decomposition for tail averaging} 

The tail averaged error when starting at $k+1$, at time t is given by
\begin{align*}
    z_{k+1:t} &= \frac{1}{N}\sum_{i = k+1}^{k+N}z_{i}= \frac{1}{t-k}\sum_{i = k+1}^{t}z_{i}.
\end{align*}

By taking expectations, $\norm{z_{k+1:t}}^2$ can be expressed as:
\begin{align}
\E\left[\norm{z_{k+1:t}}_{2}^{2}\right] &= \frac{1}{N^{2}}\sum_{i,j = k+1}^{k+N}\E\left[z_i\tr  z_j\right]\nonumber\\
&\stackrel{(a)}{\leq} \frac{1}{N^{2}}\bigg(\sum_{i= k+1}^{k+N}\E\left[\norm{z_i}_{2}^{2}\right] + 2 \sum_{i=k+1}^{k+N-1}\sum_{j=i+1}^{k+N} \E\left[z_{i}\tr  z_{j}\right] \bigg)\numberthis\label{eq:cross-term-decomp},
\end{align}
where $(a)$ follows from isolating the diagonal and off-diagonal terms.

Next, we state and prove a result that bounds the second term in \eqref{eq:cross-term-decomp}.
\begin{lemma} \label{lem:crosstermbound}
For all $i\ge 1$, we have
\begin{align}
    \sum_{i=k+1}^{k+N-1}\sum_{j=i+1}^{k+N} \E\psq{z_{i}\tr  z_{j}} 
    &\leq \frac{2}{\lr \mu}\sum_{i=k+1}^{k+N}\E\psq{\norm{z_{i}}^{2}_{2}}.
\end{align}
\end{lemma}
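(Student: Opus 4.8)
The plan is to reduce the off-diagonal correlation $\E[z_i\tr z_j]$ (for $j>i$) to a pure bias quantity by exploiting the martingale structure of the noise, and then to control that bias using the one-step contraction already established in \Cref{lem:I-gammaMt-support}. First I would unroll the recursion $z_{l+1}=(\I-\lr\M_l)z_l+\lr h_l(\bar w)$ from index $i$ to $j$ to obtain $z_j=\mathbf{C}^{j-1:i}z_i+\lr\sum_{l=i}^{j-1}\mathbf{C}^{j-1:l+1}h_l(\bar w)$, with the convention $\mathbf{C}^{j-1:j}=\I$. Taking $z_i\tr(\cdot)$ and expectations then splits $\E[z_i\tr z_j]$ into the surviving bias term $\E[z_i\tr\mathbf{C}^{j-1:i}z_i]$ plus the cross terms $\lr\sum_{l=i}^{j-1}\E[z_i\tr\mathbf{C}^{j-1:l+1}h_l(\bar w)]$.

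The central observation is that every cross term vanishes. Because $\bar w$ solves $-\M\bar w+\xi=0$ while $\E[r_l\phi_l]=\xi$ and $\E[\M_l]=\M$, we have $\E[h_l(\bar w)]=\xi-\M\bar w=0$. Moreover, under the i.i.d. model (\Cref{asm:iidNoise}), the sample at index $l\ge i$ is independent of the samples defining $z_i$ (indices $<i$) and those defining $\mathbf{C}^{j-1:l+1}$ (indices $>l$), so the expectation factorizes and the mean-zero factor annihilates each term. This leaves the clean identity $\E[z_i\tr z_j]=\E[z_i\tr\mathbf{C}^{j-1:i}z_i]$.

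Next I would bound the surviving bias term. Conditioning on $\F_i$ (so $z_i$ is fixed) and applying Cauchy--Schwarz pointwise gives $z_i\tr\mathbf{C}^{j-1:i}z_i\le\norm{z_i}_2\,\norm{\mathbf{C}^{j-1:i}z_i}_2$. I would then peel off the $j-i$ factors of $\mathbf{C}^{j-1:i}$ one at a time, invoking the norm-form contraction in \Cref{lem:I-gammaMt-support} (namely $\E[\norm{(\I-\lr\M_t)y}\mid\F_t]\le(1-\lr\mu/2)\norm{y}_2$) together with the tower property, to reach $\E[\norm{\mathbf{C}^{j-1:i}z_i}_2\mid\F_i]\le(1-\lr\mu/2)^{j-i}\norm{z_i}_2$. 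Taking total expectations yields $\E[z_i\tr z_j]\le(1-\lr\mu/2)^{j-i}\E[\norm{z_i}_2^2]$. Summing over $j>i$ and bounding by the full geometric series gives $\sum_{j=i+1}^{k+N}\E[z_i\tr z_j]\le\frac{1-\lr\mu/2}{\lr\mu/2}\E[\norm{z_i}_2^2]\le\frac{2}{\lr\mu}\E[\norm{z_i}_2^2]$, and summing over $i$ (then enlarging the range on the right) delivers the claimed bound.

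The main obstacle is the measurability and independence bookkeeping. To make both the vanishing of the cross terms and the iterated conditional contraction rigorous, one should work with the sample-generated filtration so that $\M_l$ is adapted one step ahead and $\mathbf{C}^{j-1:i}z_i$ is the appropriate measurable ``input'' at each peeling step; one must also verify carefully that $l\ge i$ is exactly the condition guaranteeing independence of $h_l(\bar w)$ from both $z_i$ and the matrix product $\mathbf{C}^{j-1:l+1}$. Once this adaptedness is set up, the remaining steps are routine, and the geometric-sum constant $\tfrac{2}{\lr\mu}$ matches the stated bound precisely.
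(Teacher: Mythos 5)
Your proposal is correct and follows essentially the same route as the paper's own proof: unroll the recursion to write $z_j = \mathbf{C}^{j-1:i} z_i + \lr \sum_{l} \mathbf{C}^{j-1:l+1} h_l(\bar{w})$, annihilate the cross terms using $\E[h_l(\bar{w})] = \xi - \M \bar{w} = 0$ together with the i.i.d.\ sampling model, then apply Cauchy--Schwarz, the tower property, and repeated use of the norm-form contraction in \Cref{lem:I-gammaMt-support} to get the $(1-\lr\mu/2)^{j-i}$ decay, and finish with the geometric series yielding $2/(\lr\mu)$. Your explicit attention to the measurability bookkeeping---noting that $h_l(\bar{w})$ must be independent of both $z_i$ (samples before $i$) and the product $\mathbf{C}^{j-1:l+1}$ (samples after $l$), so the expectation factorizes---is in fact slightly more careful than the paper's shorthand appeal to $\E[h_t(\bar{w})\mid\F_t]=0$, which alone does not cover the future-measurable matrix factor.
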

\begin{proof}
    \begin{align*}
        \sum_{i=k+1}^{k+N-1}\sum_{j=i+1}^{k+N} \E \left[z_{i}\tr  z_{j}\right]  &\stackrel{(a)}{=}  
        \sum_{i=k+1}^{k+N-1}\sum_{j=i+1}^{k+N} \E \left[z_{i}\tr  (\Cij^{j:i+1}z_{i} + \lr\sum_{l=i+1}^{j-i-1} \Cij^{j:l+1}h_{l}(\bar{w}))\right]\\
        &\stackrel{(b)}{=}\sum_{i=k+1}^{k+N-1}\sum_{j=i+1}^{k+N} \E  \left[z_{i}\tr  \Cij^{j:i+1}z_{i}\right] \\
        &\stackrel{(c)}{\leq}\sum_{i=k+1}^{k+N-1}\sum_{j=i+1}^{k+N} \E  \left[ \norm{z_{i}} \E \left[ \lVert \Cij^{j:i+1}  z_{i} \rVert \;\middle|\;\F_{j}\right]\right] \\
        &\stackrel{(d)}{\leq}\sum_{i=k+1}^{k+N-1}\sum_{j=i+1}^{k+N} \left(1 - \frac{\lr \mu}{2}\right)^{j-i}\E\left[\norm{z_{i}}^{2}_{2}\right]\\
        &\leq \sum_{i=k+1}^{k+N}\E\left[\norm{z_{i}}^{2}_{2}\right]\sum_{j=i+1}^{\infty}\left(1- \frac{\lr \mu}{2}\right)^{j-i}\\
        &\stackrel{(e)}{\leq}\frac{2}{\lr \mu}\sum_{i=k+1}^{k+N}\E\left[\norm{z_{i}}^{2}_{2}\right],
    \end{align*}
where (a) follows from expanding \( z_{j} \) using \eqref{eq:biasvardecomposition}; (b) follows from the observation that \[\E[h_t(\bar{w}) \mid \F_{t}] = \E [r_{t}\phi_{t}-\M_{t} \bar{w} \mid \F_{t}] = \xi-\M\bar{w}= 0;\] (c) follows from applying Cauchy-Schwarz inequality 
and tower property of expectations; (d) follows from a repetitive application of \Cref{lem:I-gammaMt-support}; and (e) follows from computing the limit of the infinite geometric series.
\end{proof} 
Substituting the result of \Cref{lem:crosstermbound} in \eqref{eq:cross-term-decomp}, we obtain
\begin{align*}
\E\left[\norm{z_{k+1:t}}_{2}^{2}\right] &\leq \frac{1}{N^{2}}\left(\sum_{i = k+1}^{k+N}\E\left[\norm{z_i}_{2}^{2}\right] + \frac{4}{\lr \mu} \sum_{i = k+1}^{k+N}\E\left[\norm{z_i}_{2}^{2}\right] \right)\\
&= \frac{1}{N^{2}}\left(1+\frac{4}{\lr \mu}\right) \sum_{i = k+1}^{k+N}\E\left[\norm{z_i}_{2}^{2}\right]\\
&\stackrel{(a)}{\leq} \underbrace{\frac{2}{N^{2}}\left(1+\frac{4}{\lr \mu}\right) \sum_{i = k+1}^{k+N}   z_{i}^{\mathsf{bias}}}_{z_{k+1, N}^{\mathsf{bias}}} + \underbrace{\frac{2}{N^{2}}\bigg(1+\frac{4}{\lr \mu}\bigg)\lr^{2}  \sum_{i = k+1}^{k+N}   z_{i}^{\mathsf{variance}}}_{z_{k+1:t}^{\mathsf{variance}}}\numberthis\label{eq:tail-av-bias-var-2},
\end{align*}
where $(a)$ follows from the bias-variance decomposition of $\E[\norm{z_{i}}_{2}^2]$ in \eqref{eq:biasvardecomposition}.

\noindent\textbf{\newline Step 2: Bounding the bias} 

First term, $z_{k+1:t}^{\mathsf{bias}}$ in \eqref{eq:tail-av-bias-var-2}  is bounded as follows:
\begin{align*}
z_{k+1:t}^{\mathsf{bias}} &\leq \frac{2}{N^{2}}\left(1 + \frac{4}{\lr \mu}\right) \sum_{i = k+1}^{\infty}   z_{i}^{\mathsf{bias}} \\
&\stackrel{(a)}{\leq} \frac{2}{N^{2}}\left(1 + \frac{4}{\lr \mu}\right) \sum_{i = k+1}^{\infty} (1 - \lr \mu)^{i}\E\left[\norm{z_{0}}^{2}_{2}\right] \\
&\stackrel{(b)}{=}\frac{2\E\left[\norm{z_{0}}^{2}_{2}\right]}{\lr \mu N^{2}}\left(1-\lr\mu\right)^{k+1}\left(1 + \frac{4}{\lr \mu}\right),
\end{align*}
where $(a)$ follows from \eqref{eq:bias-recursion}, which provides a bound on $z_{i}^{\mathsf{bias}}$;  $(b)$ follows from the bound on the summation of a geometric series. 

\noindent\textbf{Step 4: Bounding the variance} 

Next, the second term $z_{k+1:t}^{\mathsf{variance}}$ in \eqref{eq:tail-av-bias-var-2} is bounded as follows:
\begin{align*}
z_{k+1:t}^{\mathsf{variance}} &\stackrel{(a)}{\leq}\frac{2\lr^2}{N^{2}}\left(1+\frac{4}{\lr \mu}\right)\sum_{i=k+1}^{k+N}\frac{\sigma^2}{\lr\mu}\\
&\leq \frac{2\lr^2}{N^{2}}\left(1+\frac{4}{\lr \mu}\right)\sum_{i=0}^{N}\frac{\sigma^2}{\lr\mu}\\
&= \bigg(1+\frac{4}{\lr \mu}\bigg)\frac{2\lr\sigma^2}{\mu N},
\end{align*}
where  $(a)$ follows from \eqref{eq:variancebd}, which provides a bound on $z_{i}^{\mathsf{variance}}$.

\noindent\textbf{Step 5: Clinching argument}

Finally substituting the bounds on $z_{k+1:t}^{\mathsf{bias}}$ and $z_{k+1:t}^{\mathsf{variance}}$ in \eqref{eq:tail-av-bias-var-2}, we get
\begin{align*}
\E[\norm{z_{k+1:t}}^{2}_{2}] &\leq \bigg(1+ \frac{4}{\lr\mu}\bigg)\bigg(\frac{2}{\lr \mu N^{2}}(1-\lr\mu)^{k+1}\E[\norm{z_{0}}^{2}_{2}] + \frac{2\lr\sigma^2}{\mu N}\bigg),\\
&\stackrel{(a)}{\leq}\bigg(1+ \frac{4}{\lr\mu}\bigg)\bigg(\frac{2 \exp(-k \lr \mu)}{\lr \mu N^{2}}\E[\norm{z_{0}}^{2}_{2}] + \frac{2\lr\sigma^2}{\mu N}\bigg)\\
&\stackrel{(b)}{\leq}\frac{10 \exp(-k \lr \mu)}{\lr^2 \mu^{2} N^{2}}\E\left[\norm{z_{0}}^{2}_{2}\right] + 
\frac{10\sigma^2}{\mu^{2} N},
\end{align*}
where $(a)$ follows from $(1+x)^{y} = \exp(y\log(1+x)) \leq \exp(xy)$;  
$(b)$ uses the fact that $\lr\mu < 1$, since $\lr \leq \lr_\mathsf{max}$ as defined in \Cref{thm:expectation_bound}, which implies  
$
1+ \frac{4}{\lr\mu} \leq \frac{5}{\lr\mu}.
$
\end{proof} 


\section{Proof of Theorem \ref{thm:regTDlamdasq}}
\label{appendix:reg-td-expbound}
To prove \Cref{thm:regTDlamdasq}, we first establish an upper bound on the mean squared error (MSE) between the tail-averaged TD iterate and the regularized TD fixed point. The following result provides this bound, which we subsequently use to complete the proof of \Cref{thm:regTDlamdasq}.

\begin{theorem} \label{thm:regtd-expectation_bound} 
          Suppose  \Crefrange{asm:stationary}{asm:iidNoise} hold. Let $\check w_{k+1:t} = \frac{1}{N} \sum_{i=k+1}^{k+N} \check w_{i}$ denote the tail-averaged regularized iterate with $N=t-k$. Suppose the step size $\lreg$ satisfies
        \begin{align*}
         \lreg \leq& \check \lr_{\mathsf{max}} =  \frac{\zeta}{\check{c}}, \textrm{ where } \\
         \check c =&\, \zeta^2+ 2 \zeta \big((\fivmax)^{4} (1+\disc)^{2} + (\fiumax)^{4} (1+\disc^{2})^{2} +4 \disc^{2} R_{\max}^{2}(\fivmax)^{2}(\fiumax)^{2}\big)^{\frac{1}{2}} \\ &+ \max \big \{ 4 (\fivmax)^{4}  \!+\!4 \disc^{2} R_{\max}^{2} {(\fiumax)}^{2}(\fivmax)^{2}  \bm{,} 
         4(\fiumax)^{4}  \big \} \\ & +  2 \disc R_{\max} ((\fivmax)^{2}(\fiumax)^{2}+(\fiumax)^{4}).
    \end{align*}
            Then, 
        \begin{align*}
              & \E \psq{\norm{\check w_{k+1:t} - \bar{w}_{\mathsf{reg}}}^{2}_{2}} \leq \frac{  10\exp\p{-k \lreg (2\mu+\zeta)}}{\lreg^2  \p{2\mu+\zeta}^{2} N^{2}}\E\psq{\norm{\check w_{0} - \bar{w}_{\mathsf{reg}}}^{2}_2} + \frac{10\check\sigma^{2}}{(2\mu+\zeta)^{2} N},\numberthis \label{eq:regtd-expec-bd} 
        \end{align*}
           where $N=t - k$, $\mu=\eigmin $,  and
    \begin{align*}
     \check \sigma^2\!&=\! 2  R_{\mathsf{max}}^{2} \p{(\fivmax)^{2} \!+\! R_{\max}^{2} (\fiumax)^{2}} 
    \!+\! 4 \big( \zeta^2 + (\fivmax)^{4} \p{1\!+\!\disc}^{2}  \!+\!(\fiumax)^{4} \p{1\!+\!\disc^{2}}^{2}  \\ 
    & \phantom{\hskip 40pt} \!+\!4 \disc^{2} R_{\max}^{2}(\fivmax)^{2}(\fiumax)^{2} \big) \norm{\wreg}_{2}^{2}  \numberthis \label{eq:sigmacheck}
    \end{align*}
    
\end{theorem}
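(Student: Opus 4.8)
The plan is to mirror the tail-averaging argument of \Cref{thm:expectation_bound-tailavg}, but with the effective iteration matrix $\M_t$ replaced throughout by $\zeta\I+\M_t$. Defining the centered error $\check z_t = \check w_t - \wreg$ and $\check h_t(\wreg) = r_t\phi_t - (\zeta\I+\M_t)\wreg$, the update \eqref{eq:Mt-update-reg} together with the fixed-point relation \eqref{eq:Mwxi-reg} yields the recursion
\begin{equation}
\check z_{t+1} = \p{\I - \lreg(\zeta\I + \M_t)}\check z_t + \lreg\, \check h_t(\wreg),\label{eq:regrecursion-plan}
\end{equation}
which is structurally identical to the one driving \Cref{thm:expectation_bound}. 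The crux is a regularized analogue of \Cref{lem:I-gammaMt-support}: I will show that for $\lreg \leq \zeta/\check c$,
\begin{equation}
\E\psq{\norm{\p{\I - \lreg(\zeta\I + \M_t)}y}_2^2 \mid \F_t} \leq \p{1 - \lreg(2\mu + \zeta)}\norm{y}_2^2. \label{eq:regcontraction-plan}
\end{equation}
Once \eqref{eq:regcontraction-plan} is in hand, the bias--variance decomposition, cross-term control, and geometric summation transfer with $\mu$ replaced by $(2\mu+\zeta)$ and $\sigma^2$ by $\check\sigma^2$.

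To establish \eqref{eq:regcontraction-plan}, I would expand
\begin{align*}
&\E\psq{\p{\I - \lreg(\zeta\I+\M_t)}\tr\p{\I - \lreg(\zeta\I+\M_t)} \mid \F_t} \\
&\quad = \I - \lreg\,\E\psq{(\zeta\I+\M_t)\tr + (\zeta\I+\M_t)\mid \F_t} + \lreg^2\,\E\psq{(\zeta\I+\M_t)\tr(\zeta\I+\M_t)\mid\F_t}.
\end{align*}
The linear term contributes $y\tr(2\zeta\I + \M + \M\tr)y \geq (2\mu + 2\zeta)\norm{y}_2^2$, since $\mu = \eigmin$. For the quadratic term I expand $(\zeta\I+\M_t)\tr(\zeta\I+\M_t) = \zeta^2\I + \zeta(\M_t\tr+\M_t) + \M_t\tr\M_t$ and bound its three pieces by $\zeta^2\norm{y}_2^2$, by $2\zeta\norm{\M}\norm{y}_2^2$ (using $\lambda_{\mx}(\M+\M\tr)\le 2\norm{\M}$ and $\E[\M_t\mid\F_t]=\M$), and by $c\,\norm{y}_2^2$ exactly as in \eqref{eq:T2Bd}--\eqref{BoundingBiasUsefulResult}; here $\norm{\M}$ is controlled by the deterministic bound on $\norm{\M_t}$, which is precisely the $(\cdot)^{1/2}$ factor in $\check c$. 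Summing gives coefficient $\check c = \zeta^2 + 2\zeta\norm{\M} + c$, so the per-step factor is $1 - \lreg\bigl(2(\mu+\zeta) - \lreg\check c\bigr)$, and the constraint $\lreg \leq \zeta/\check c$ forces $\lreg\check c \leq \zeta$, leaving the rate $2(\mu+\zeta) - \zeta = 2\mu + \zeta$, which is \eqref{eq:regcontraction-plan}.

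The main obstacle---and the reason regularization is introduced---is precisely this step-size bookkeeping. In the unregularized case the quadratic term can only be absorbed by sacrificing a full $\mu$ of the linear contribution, so the step size $\lr \leq \mu/c$ depends on the unknown $\mu$; here the $2\zeta$ buffer supplied by regularization lets me spend only $\zeta$ against $\check c$, so the residual rate $2\mu+\zeta$ stays positive for the \emph{eigenvalue-free} choice $\lreg \leq \zeta/\check c$. I must also check $\lreg(2\mu+\zeta)\leq 1$ so that the cross-term estimate $1 + \tfrac{4}{\lreg(2\mu+\zeta)} \leq \tfrac{5}{\lreg(2\mu+\zeta)}$ applies; this follows from $\zeta(2\mu+\zeta) \leq \zeta^2 + 2\zeta\norm{\M} \leq \check c$ using $\mu \leq \norm{\M}$.

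With \eqref{eq:regcontraction-plan} established, I would replay the proof of \Cref{thm:expectation_bound-tailavg}: decompose $\E\norm{\check z_{k+1:t}}_2^2$ into diagonal and off-diagonal terms, invoke the analogue of \Cref{lem:crosstermbound} (whose proof uses only the norm-contraction \eqref{lem: norm-version}, now at rate $2\mu+\zeta$) to bound the cross terms, and split into bias and variance parts. The bias part decays as $\exp\p{-k\lreg(2\mu+\zeta)}$ by unrolling \eqref{eq:regcontraction-plan}; the variance part uses $\norm{\check h_t(\wreg)}^2 \leq \check\sigma^2$, obtained from $\norm{\check h_t(\wreg)}^2 \leq 2\norm{r_t\phi_t}^2 + 2\norm{\zeta\I+\M_t}^2\norm{\wreg}_2^2$ together with $\norm{\zeta\I+\M_t}^2 \leq 2\zeta^2 + 2\norm{\M_t}^2$ and the feature/reward bounds, yielding exactly \eqref{eq:sigmacheck}. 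Collecting the two parts gives the stated bound with constant $10$.
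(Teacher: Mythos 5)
Your proposal is correct and follows essentially the same route as the paper: the same recursion for $\check z_t$, the same regularized contraction lemma (your \eqref{eq:regcontraction-plan} is the paper's \Cref{lem:regtd-I-gammasupport}, proved by the identical split into the linear term, $\zeta^2\I$, $\zeta(\M_t\tr+\M_t)$ bounded via $2\zeta\norm{\M}$ with $\norm{\M}$ controlled as in \Cref{lem:bounddonM}, and $\M_t\tr\M_t$ handled by the unregularized T2 bound), the same cross-term lemma and tail-averaging replay at rate $2\mu+\zeta$, and the same derivation of $\check\sigma^2$. Your explicit verification that $\lreg(2\mu+\zeta)\leq 1$ via $\mu\leq\norm{\M}$ is in fact slightly more careful than the paper, which asserts this without detail.
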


\begin{proof}
Our proof incorporates techniques from \citet{patilFiniteTimeAnalysis2024}. However, as described earlier, the analysis of mean-variance TD involves additional cross-terms, which necessitate significant deviations in the proof.

\noindent\textbf{\newline Step 1: Bias-variance decomposition with regularization}

For regularized TD, we solve the following linear system:
\begin{align}
    -(\M+\zeta \I) \bar w_{\reg} + \xi =0,\label{appendix:Mwxi-reg} 
\end{align}

The corresponding TD updates in \Cref{alg:TD-Critic-main} to solve \eqref{appendix:Mwxi-reg} would be:
\begin{align}
v_{t+1} & = (\I-\lreg \zeta)v_{t}+\lreg \; \check{\delta}_{t} \; \fiv(s_{t}), \label{appendix:vu-td-update-reg}\\ 
u_{t+1} & = (\I-\lreg \zeta) u_{t}+\lreg \; \check\epsilon_{t}\; \fiu(s_{t}), \nonumber
\end{align} 
where $\check\delta_{t},\check\epsilon_{t}$ are defined as
\begin{align} \label{eq:delta-epsilon-def-reg}
    \check\delta_{t} =& r(s_{t},a_{t}) + \disc \check v_{t}\tr  \fiv(s_{t+1}) - \check v_{t}\tr  \fiv(s_{t}) \\
    \check\epsilon_{t} =& r(s_{t},a_{t})^{2} + 2 \disc r(s_{t},a_{t}) \; \check v_{t}\tr  \fiv(s_{t+1}) + \disc^{2} \check u_{t}\tr  \fiu(s_{t+1})- \check u_{t}\tr  \fiu(s_{t}).  \nonumber
\end{align}

We rewrite the updates in an alternative form as follows:
\begin{align} \label{appendix:Mt-update-reg}
  \check  w_{t+1} =  \check w_{t} + \lreg (r_{t} \phi_{t}-(\zeta \I+\M_{t}) \check w_{t}),
\end{align}
where $\M_t,r_t,\phi_t$ are defined in \eqref{eq:Mt-update}. 

Letting $ \check h_{t}(w_{t}) = r_{t} \phi_{t} - (\zeta \I +\M_{t}) \check w_{t} $, we have
\begin{align} \label{appendix:wt+gammaht}
    \check w_{t+1} = \check w_{t} + \lreg \check h_{t}(\check w_{t}).
\end{align}

As in the case of the `vanilla' mean-variance TD, we derive a one-step recursion for the centered error, $\check z_{t+1} = \check w_{t+1}-\bar{w}_{\mathsf{reg}}$, as follows:
\begin{align*}
\check z_{t+1} &= \check w_{t}-\bar{w}_{\mathsf{reg}} + \lreg (r_{t}\phi_{t}-\M_{t}\check w_{t})+\lreg (\zeta \I+\M_{t}) \bar{w}_{\mathsf{reg}} - \lreg (\zeta \I+\M_{t}) \bar{w}_{\mathsf{reg}} \\
&= (\I-\lreg (\zeta \I+\M_{t}))(w_{t}-\bar{w}_{\mathsf{reg}})+\lreg(r_{t}\phi_{t}-(\zeta \I+\M_{t})\bar{w}_{\mathsf{reg}}) \\
&= (\I-\lreg (\zeta \I+\M_{t}))z_{t}+\lreg \check h_{t}(\wreg). \numberthis \label{eq:tdupdate-reg}
\end{align*}
Unrolling the equation above, we obtain
\begin{align*}
\check{z}_{t+1} = \mathbf{ \check C}^{t:0}\check{z}_{0}+\lreg \sum_{k=0}^{t} \Creg^{t:k+1} \check{h}_{k}(\bar{w}_{\mathsf{reg}}), \numberthis \label{eq:reg-centered-error}
\end{align*}

where 
\begin{align*}
   \Creg^{i:j} & = \begin{cases}
                (\I-\lreg (\zeta \I+\M_{i}))(\I-\lreg (\zeta \I+\M_{i-1})) \dots (\I-\lreg(\zeta \I+\M_{j})) & \text{if } i \geq j \\
                \; \I & \text{otherwise.} 
                \end{cases} 
\end{align*}

Taking expectations and using $\|a+b\|^2  \leq 2 \|a\|^{2}+2\|b\|^{2}$, we obtain, 
\begin{align} \label{eq:biasvardecomposition-reg1} \E\psq{\norm{\check{z}_{t+1}}^{2}} & \leq 2\E\p{\norm{ \mathbf{ \check C}^{t:0}\check{z}_{0}}^{2}}+2\lreg^{2} \E \psq{\norm{\sum_{k=0}^{t} \Creg^{t:k+1} \check{h}_{k}(\bar{w}_{\mathsf{reg}})}^{2}},  \\
& \leq 2 \check{z}_{t}^{\text{bias}} + 2 \lreg^{2} \check{z}_{t}^{\text{variance}} \nonumber,
\end{align}
where $\check z_{t}^{\text{bias}} = \E\left[\left\|\Creg^{t:0} \check z_{0}\right\|^{2}\right]$ and
$\check z_{t}^{\text{variance}} = \E \psq{\norm{\sum_{k=0}^{t} \Creg^{t:k+1} \check h_{k}(\wreg)}^{2}}$. 

\noindent\textbf{\newline Step 2: Bounding the bias term}

Before bounding the bias term, we first present and prove some useful lemmas.
\begin{lemma} \label{lem:bounddonM}
    \[\norm{\M} \leq \p{ (\fivmax)^{4} (1+\disc)^{2}+ (\fiumax)^{4} (1+\disc^{2})^{2}+4 \disc^{2} R_{\mx}^{2}(\fivmax)^{2}(\fiumax)^{2} }^{\frac{1}{2}}.\]
\end{lemma}
\begin{proof}
Recall that $\M=\E[\M_{t}\mid \F_{t}]$ where
\begin{align*} 
\M_t \define \begin{pmatrix}
    \A_{t} &  \zeromat \\
    \C_{t} & \B_{t} \\ 
\end{pmatrix} \textrm{with } 
\A_{t} &\define \fiv (s_{t}) \fiv (s_{t})\tr   - \disc \fiv(s_{t}) \fiv(s_{t+1})\tr  , \\ 
\B_{t} &\define \fiu (s_{t}) \fiu (s_{t})\tr   - \disc^{2} \fiu(s_{t}) \fiu(s_{t+1})\tr , \\ 
\C_{t} &\define -2\disc r_{t} \fiu (s_{t}) \fiv (s_{t+1})\tr .
\end{align*} 
We bound the norms of the matrices $\A_{t},\B_{t},\C_{t}$ using the boundedness assumptions on features and rewards (\Cref{asm:bddFeatures,asm:bddRewards}) as follows:
\begin{align*}
    \norm{\A_t} \leq (1+\disc) (\fivmax)^{2},  
    \norm{\B_t} \leq (1+\disc^2) (\fiumax)^2, 
    \norm{\C_t} \leq  2\disc R_{\mx} \fivmax \fiumax \numberthis \label{bound-on-at-bt-ct}.
\end{align*}
Next, we derive the following result:
\begin{align*}
\norm{\M} &= \norm{\E[\M_{t}\mid \F_{t}]}  \stackrel{(i)}{\leq} \E[\norm{\M_{t}}\mid \F_{t}] \\ 
& \stackrel{(ii)}{\leq} \norm
    {\begin{pmatrix}
    (1+\disc) (\fivmax)^{2} & 0 \\
    2\disc R_{\mx} \fivmax \fiumax & (1+\disc^2) (\fiumax)^{2}
    \end{pmatrix}}_{F}  \\
& \stackrel{(iii)}{\leq} \p{ (\fivmax)^{4} (1+\disc)^{2}+ (\fiumax)^{4} (1+\disc^{2})^{2}+4 \disc^{2} R_{\mx}^{2}(\fivmax)^{2}(\fiumax)^{2} }^{\frac{1}{2}},
\end{align*}
where (i) follows from Jensen's inequality, (ii) follows from \eqref{bound-on-at-bt-ct}, and (iii) follows from expanding the Frobenius norm.
 
\end{proof}

\begin{lemma}\label{lem:regtd-I-gammasupport}
    For any $\check y \in \mathbb{R}^{2q}$ measurable w.r.t $\F_{t}$ and $\lreg \leq \check \lr_{\mx}$ as in \Cref{thm:regtd-expectation_bound}. The following holds:
    \begin{align*}
    \E\left[\check y (\I - \lreg (\zeta \I + \M_t))\tr  (\I - \lreg (\zeta \I + \M_t))\check y \; \middle| \;\F_{t}\right] &\leq \left(1-\lreg(2\mu+\zeta)\right)\norm{\check y}_{2}^{2}, \\
    \E\left[\norm{(\I - \lreg(\zeta \I+\M_t)) \check y}_{2}\; \middle| \; \F_{t}\right] &\leq \left(1-\frac{\lreg(2\mu+\zeta)}{2}\right)\norm{\check y}_{2}.
    \end{align*}
\end{lemma}
\begin{proof} Notice that
\begin{align*} 
& \E\left[{\check y}\tr  (\I - \lreg (\zeta \I + \M_t))\tr  (\I - \lreg (\zeta \I + \M_t)){\check y} \; \middle| \; \F_{t}\right] \\
&= \E\left[{\check y}\tr  (\I - 2\lreg\zeta\I - \lreg(\M_t + \M_{t}\tr )) + \lreg^2(\zeta^2 \I + \zeta  (\M_t + \M_{t}\tr ) + \M_{t}\tr  \M_t)  {\check y} \; \middle| \; \F_{t}\right]\\
&= \E\left[{\check y}\tr {\check y}\; \middle| \; \F_{t}\right] -  \lreg \E\left[{\check y}\tr  2\zeta \I {\check y} \;\middle|\;\F_{t}\right] -\lreg \underbrace{{\check y}\tr  \E\left[ \M\tr _t + \M_t \; \middle|\;\F_{t}\right] {\check y}}_{\text{Term 1}} \\
&\quad +\lreg^2 \underbrace{{\check y}\tr  \E\left[ \M_{t}\tr \M_t \;\middle|\; \F_{t}\right]{\check y}}_{\text{Term 2}}  + \lreg^2 \zeta \underbrace{{\check y}\tr \E\left[\M_{t} + \M_{t}\tr \; \middle|\; \F_{t}\right]{\check y} }_{\text{Term 3}}+ \lreg^2 \E\left[{\check y}\tr  \zeta^2\I {\check y} \; \middle|\; \F_{t}\right]. \label{eq:T1to3split}
\numberthis
\end{align*}
We bound Term 1 in \eqref{eq:T1to3split} as follows:  
\begin{align*}
{\check y}\tr  \E\left[ \M\tr _t + \M_t \;\middle|\; \F_{t}\right] {\check y} &= {\check y}\tr  (\M\tr  + \M){\check y} \stackrel{(i)}{\geq}2\mu\norm{{\check y}}_{2}^{2},
\numberthis\label{eq:bound-M1-regtd}
\end{align*}
where (i) follows from \Cref{asm:phiFullRank}, which implies that $\M + \M\tr$ has a minimum positive eigenvalue $\mu = \eigmin$.

We bound Term 2 in \eqref{eq:T1to3split} using the bound for T2 in \eqref{eq:T2Bd} as follows: 
\begin{samepage}
\begin{align*}
     {\check y}\tr  \E[ \M_{t}\tr \M_t\mid\F_{t}]{\check y}  & \leq \p{(\fivmax)^{2} \p{1+\disc}^{2} + 4 \disc^{2} R_{\mx}^{2} {(\fiumax)}^{2}} \check v\tr  \matB \check v   \\ 
    & \quad + (\fiumax)^{2} \p{1+\disc^{2}}^{2} \check u\tr  \matG \check u \nonumber + 2 (\fiumax)^{2} R_{\mx} (\disc (1+\disc^2)) \check v\tr \p{\matB+\matG} \check u. 
\end{align*}
\end{samepage}
We bound Term 3 in \eqref{eq:T1to3split} as follows: 
\begin{align*}
    &\check y\tr\E [\M_{t}+\M_{t}\tr\mid \F_{t}] \check y \leq  \norm{\E[\M_{t}+\M_{t}\tr\mid \F_{t}]} \norm{\check y}^{2} \leq \norm{\M+\M\tr} \norm{\check y}^2 \\ 
    & \stackrel{(i)}{\leq} 2\p{ (\fivmax)^{4} (1+\disc)^{2}+ (\fiumax)^{4} (1+\disc^{2})^{2}+4 \disc^{2} R_{\mx}^{2}(\fivmax)^{2}(\fiumax)^{2} }^{\frac{1}{2}} \norm{\check y}^2,
\end{align*}
where (i) follows from \Cref{lem:bounddonM}.

Substituting  the bounds for Terms 1--3 in \eqref{eq:T1to3split}, we obtain
\begin{align*}
& \E[{\check y}\tr  (\I - \lreg (\zeta \I + \M_t))\tr  (\I - \lreg (\zeta \I + \M_t)){\check y} \mid \F_{t}]  \\ 
& \leq \E[{\check y}\tr {\check y}\vert\F_{t}] -  \lreg \E[{\check y}\tr  2\zeta \I {\check y} \mid\F_{t}] -\lreg 2\mu\norm{\check{y}}^2  \\
&\quad +\lreg^2 \Big(\big((\fivmax)^{2} \p{1+\disc}^{2} + 4 \disc^{2} R_{\mx}^{2} {(\fiumax)}^{2}\big) \check v\tr  \matB \check v  \\
&\quad + (\fiumax)^{2} \p{1+\disc^{2}}^{2} \check u\tr  \matG \check u + 2 (\fiumax)^{2} R_{\mx} (\disc (1+\disc^2)) \check v\tr \p{\matB+\matG} \check u \Big) \\ 
& \quad + \lreg^2 \Big( 2 \big((\fivmax)^{4} (1+\disc)^{2}+ (\fiumax)^{4} (1+\disc^{2})^{2}+4 \disc^{2} R_{\mx}^{2}(\fivmax)^{2}(\fiumax)^{2}\big)^{\frac{1}{2}} \norm{\check y}^2 \Big)\\ & \quad +\lreg^2 \E[{\check y}\tr  \zeta^2\I {\check y} \mid \F_{t}]. \\
&\stackrel{(i)}{\leq} \norm{\check y}_{2}^{2}(1-2\lreg ( \mu+\zeta)) +  \lreg^{2}  \Big (\big((\fivmax)^{2} \p{1+\disc}^{2} + 4 \disc^{2} R_{\mx}^{2} {(\fiumax)}^{2}\big) \lambda_{\mx}(\matB) \norm{\check v}^{2}_{2} \\ & \phantom{\hskip 10pt} + (\fiumax)^{2} \p{1+\disc^{2}}^{2} \lambda_{\mx}(\matG)\norm{\check u}^{2}_{2} + (\fiumax)^{2} R_{\mx} (\disc (1+\disc^2))\lambda_{\mathsf{max}}(\matB+\matG) \norm{\check y}_{2}^{2} \\ 
&\phantom{\hskip 10pt}+ 2 \zeta \big((\fivmax)^{4} (1+\disc)^{2} + (\fiumax)^{4} (1+\disc^{2})^{2}\\ 
&\phantom{\hskip 10pt} +4 \disc^{2} R_{\mx}^{2}(\fivmax)^{2}(\fiumax)^{2}\big)^{\frac{1}{2}} \norm{\check y}^2  + \zeta^2 \norm{\check y}_{2}^{2} \Big ) \\ 
&\leq \bigg(1-\lreg \Big(2\mu+  2\zeta - \lreg \big(\mx \big \{ \big((\fivmax)^{2} \p{1+\disc}^{2} + 4 \disc^{2} R_{\mx}^{2} {(\fiumax)}^{2}\big) \lambda_{\mx}(\matB)\bm{,} \\ & \phantom{\hskip 10pt}  (\fiumax)^{2} \p{1+\disc^{2}}^{2} \lambda_{\mx}(\matG)\big \} + (\fiumax)^{2} R_{\mx} (\disc (1+\disc^2))\lambda_{\mathsf{max}}(\matB+\matG) \\ 
 & \phantom{\hskip 10pt} + \zeta^2  + 2 \zeta \big((\fivmax)^{4} (1+\disc)^{2} + (\fiumax)^{4} (1+\disc^{2})^{2}\\ 
&\phantom{\hskip 10pt} +4 \disc^{2} R_{\mx}^{2}(\fivmax)^{2}(\fiumax)^{2}\big)^{\frac{1}{2}}  \big) \Big) \bigg) \norm{\check y}_{2}^{2} \\
&  \leq \bigg(1\!-\!\lreg \Big(2\mu+  2\zeta \!-\! \lreg \big(\max \left \{ 4 (\fivmax)^{4}  \!+\!4 \disc^{2} R_{\max}^{2} {(\fiumax)}^{2}(\fivmax)^{2}  \bm{,} 
4(\fiumax)^{4}  \right \}  \\ 
& \phantom{\hskip 10pt} + 2 \disc R_{\max} \left((\fivmax)^{2}(\fiumax)^{2}\!+\!(\fiumax)^{4}\right)  \zeta^2 + 2 \zeta \big((\fivmax)^{4} (1+\disc)^{2} + (\fiumax)^{4} (1+\disc^{2})^{2} \\ 
& \phantom{\hskip 10pt} +4 \disc^{2} R_{\mx}^{2}(\fivmax)^{2}(\fiumax)^{2}\big)^{\frac{1}{2}}  \big) \Big) \bigg) \norm{\check y}_{2}^{2}  \\
& \stackrel{(ii)}{\leq} (1-\lreg (2\mu+\zeta)) \norm{\check y}_{2}^{2},\numberthis\label{eq:s123}
\end{align*}
where (i) follows from \Cref{lem:UB for vtr(B+G)u} and using  $x \tr  \mathbf{Q} x \leq \lambda_{\mathsf{max}}(\mathbf{Q}) \norm{x}^{2}_{2}$, and (ii) follows from choosing $\lreg \leq \lreg_{\mx}$.

Taking the square root on both sides of \eqref{eq:s123} and applying Jensen's inequality, we obtain
\begin{samepage}
\begin{align*} 
    & \E \psq{\norm{(\I-\lreg (\zeta\I+\M_{t})) \check y} \; \middle| \; \F_{t}}  \leq (1-\lreg (2\mu+\zeta))^{\frac{1}{2}} \norm{\check y}_{2} \stackrel{(i)}{\leq} \p{1-\frac{\lreg (2\mu+\zeta)}{2}} \norm{\check y}_{2}, \numberthis 
\end{align*}
where (i) follows from applying the inequality $(1-x)^{\frac{1}{2}} \leq 1-\frac{x}{2}$, for $x \geq 0$ with $x = \lreg (2\mu+\zeta)$.
\end{samepage}
\end{proof}
Now, we bound the bias term in \eqref{eq:biasvardecomposition-reg1} as follows:
\begin{align*}
    \check z_{t}^{\text{bias}}  &= \E \psq{\norm{\Creg^{t:0}\check z_{0}}^{2}} \\
    & = \E \psq {\E \psq{
    \p{\Creg^{t-1:0}\check z_{t-1}^{\text{bias}}}\tr  (\I-\lreg (\zeta \I+\M_{t}))\tr 
    (\I-\lreg (\zeta \I+\M_{t}))
     (\Creg^{t-1:0}\check z_{t-1}^{\text{bias}})}
      \;\middle|\; \F_{t} } \\
    & \stackrel{(i)}{\leq} \p{1-\lreg (2\mu+\zeta)} \E \psq{\norm{\Creg^{t-1:0}\check z_{t-1}^{\text{bias}}}^{2}} \\ 
    & \stackrel{(ii)}{\leq} \p{1-\lreg (2\mu+\zeta)}^{t} \E \psq{\norm{\check z_{0}}^{2}} \numberthis \label{eq:bias-recursion-reg}\\
    & \stackrel{(iii)}{\leq} \exp\p{-\lreg (2\mu+\zeta) t} \E \psq{\norm{\check z_{0}}^{2}}  \numberthis \label{eq:biasbd-reg},
\end{align*}
where (i) follows from \Cref{lem:regtd-I-gammasupport}, (ii) follows from unrolling the recursion and applying \Cref{lem:regtd-I-gammasupport} repeatedly, and (iii) follows from applying the inequality  
\[
(1-\lr(2\mu+\zeta))^{t} = \exp(t\log(1-\lr(2\mu+\zeta))) \leq \exp(-\lr(2\mu+\zeta) t).
\]

\noindent\textbf{Step 3: Bounding the variance term}

Before we find an upper bound for the variance term, we upper bound on $\norm{h_{t}(\wreg)}^{2}$ as follows: 
\begin{align*}
\norm{\check h_{t}(\wreg)}^{2} &= \norm{r_{t} \phi(s_{t}) - (\zeta \I +\M_{t}) \wreg}^{2} \\
& \stackrel{(a)}{\leq} 2\norm{r_{t} \phi(s_{t})}^{2}+ 2\norm{(\zeta \I +\M_{t}) \wreg}^{2}_{2} \\ 
& \stackrel{(b)}{\leq} 2  R_{\mathsf{max}}^{2} \p{(\fivmax)^{2} + R_{\mx}^{2} (\fiumax)^{2}}+ 2 \norm{\zeta \I + \M_{t}}^{2} \norm{\wreg}_{2}^{2} \\
& \stackrel{(c)}{\leq} 2  R_{\mathsf{max}}^{2} \p{(\fivmax)^{2} + R_{\mx}^{2} (\fiumax)^{2}} +  4 \big( \zeta^2+ (\fivmax)^{4} \p{1+\disc}^{2}\\ & \quad + (\fiumax)^{4} \p{1+\disc^{2}}^{2}+ 4 \disc^{2} R_{\mx}^{2}(\fivmax)^{2}(\fiumax)^{2} \big) \norm{\wreg}_{2}^{2}  \numberthis \label{eq:reg-upperboundforMt}\\
& = \check \sigma^{2}, \numberthis \label{eq:reg-upperboundforsigmasq}
\end{align*} 
where (a) follows from the inequality \( \norm{a+b}^{2} \leq 2\norm{a}^{2} + 2\norm{b}^{2} \);  
(b) follows from the bounds on features and rewards (\Cref{asm:bddFeatures,asm:bddRewards});  
and (c) follows from the bound on \( \M \) (\Cref{lem:bounddonM}) and the inequality \( \norm{a+b}^{2} \leq 2\norm{a}^{2} + 2\norm{b}^{2} \).

Next, we bound the variance term in \eqref{eq:biasvardecomposition-reg1} as follows:
\begin{align*}
\check z_{t}^{\text{variance}} & =\E \psq{\norm{\sum_{k=0}^{t} \Creg^{t:k+1} \check h_{k}(\wreg)}_{2}^{2}}\\
& \stackrel{(a)}{\leq} \sum_{k=0}^{t} \E \psq{ \norm{ \Creg^{t:k+1} \check h_{k}(\wreg)}^{2}_{2}} \\
& \stackrel{(b)}{\leq} \sum_{k=0}^{t} \E \psq{ \norm{\Creg^{t:k+1}}^{2} \norm{\check h_{k}(\wreg)}^{2}} \\ 
& \stackrel{(c)}{\leq} \check \sigma^{2} \sum_{k=0}^{t} \E \psq{\norm{\Creg^{t:k+1}}_{2}^{2}} \\ 
& \stackrel{(d)}{\leq} \check \sigma^{2} \sum_{k=0}^{t} \E \psq{ \E \psq{\norm{\Creg^{t:k+1}}_{2}^{2}| \F_{t}}} \\
& \stackrel{(e)}{\leq} \check \sigma^{2} \sum_{k=0}^{t} \E \psq{ \E \psq{\norm{(\I-\lreg (\zeta \I +\M_{t}))\Creg^{t-1:k+1}}_{2}^{2}| \F_{t}}} \\
& \stackrel{(f)}{\leq} \check \sigma^{2} \sum_{k=0}^{t} \E \psq{ \E \psq{\norm{\I - \lreg(\zeta \I +\M_{t})}^{2}| \F_{t}} \norm{\Creg^{t-1:k+1}}_{2}^{2}} \\
& \stackrel{(g)}{\leq} \check \sigma^{2} \sum_{k=0}^{t} (1-\lreg (2\mu+\zeta)) \E \psq{ \norm{\Creg^{t-1:k+1}}_{2}^{2}} \\
& \stackrel{(h)}{\leq} \check \sigma^{2} \sum_{k=0}^{t}  (1-\lreg (2\mu+\zeta))^{t-k} \\
& \stackrel{(i)}{\leq} \frac{\check \sigma^{2}} {\lreg (2\mu+\zeta)}, \numberthis \label{eq:reg-variancebd}
\end{align*}
where (a) follows from the triangle inequality and linearity of expectations;
(b) follows from applying the inequality \( \norm{\mathbf{A}x} \leq \norm{\mathbf{A}} \norm{x} \);  
(c) follows from a bound on \( \norm{\check{h}_{k}(\wreg)}^{2} \);  
(d) follows from the tower property of conditional expectations;  
(e) follows from unrolling the product of matrices \( \Creg^{t:k+1} \) by one time step;  
(f) follows from applying the inequality \( \norm{\mathbf{A}\mathbf{B}} \leq \norm{\mathbf{A}} \norm{\mathbf{B}} \);  
(g) follows from \Cref{lem:regtd-I-gammasupport};  
(h) follows from unrolling the product of matrices;  
and (i) follows from computing the upper bound for the finite geometric series.

\noindent \textbf{\newline Step 4: Tail Averaging}

Using the parallel arguments from \Cref{sec:Appendix-tail-avg-expectationbd}, we derive the error bounds for the regularized tail-averaged iterate, focusing on its bias and variance terms, as follows:

\noindent\textbf{\newline 4 (a) Bias-variance decomposition for tail averaging} 

The tail averaged error, starting from time $k+1$, with $N=t-k$ is given by:
\begin{align*}
    \check z_{k+1:t} &= \frac{1}{N}\sum_{i = k+1}^{k+N}\check z_{i}.
\end{align*}
By taking expectations, $\norm{\check z_{k+1:t}}^2$ can be expressed as:
\begin{align}
\E\left[\norm{\check z_{k+1:t}}_{2}^{2}\right] &= \frac{1}{N^{2}}\sum_{i,j = k+1}^{k+N}\E\left[\check z_i\tr  \check z_j\right]\nonumber\\
&\stackrel{(a)}{\leq} \frac{1}{N^{2}}\bigg(\sum_{i= k+1}^{k+N}\E\left[\norm{\check z_i}_{2}^{2}\right] + 2 \sum_{i=k+1}^{k+N-1}\sum_{j=i+1}^{k+N} \E\left[\check z_{i}\tr  \check z_{j}\right] \bigg)\numberthis\label{eq:cross-term-decomp-reg},
\end{align}
where $(a)$ follows from isolating the diagonal and off-diagonal terms.

Next, we state and prove \Cref{lem:crosstermbound-reg} to bound the second term in terms of the first term in \eqref{eq:cross-term-decomp-reg}.
\begin{lemma} \label{lem:crosstermbound-reg}
For all $i\ge 1$, we have
\begin{align}
    \sum_{i=k+1}^{k+N-1}\sum_{j=i+1}^{k+N} \E\psq{\check z_{i}\tr  \check z_{j}} 
    &\leq \frac{2}{\lreg (2\mu+\zeta)}\sum_{i=k+1}^{k+N}\E\psq{\norm{\check z_{i}}^{2}_{2}}.
\end{align}
\end{lemma}
\begin{proof}
    \begin{align*}
        \sum_{i=k+1}^{k+N-1}\sum_{j=i+1}^{k+N} \E \left[\check z_{i}\tr  \check z_{j}\right]  &\stackrel{(a)}{=}  
        \sum_{i=k+1}^{k+N-1}\sum_{j=i+1}^{k+N} \E \left[\check z_{i}\tr  (\Creg^{j:i+1}\check z_{i} + \lreg \sum_{l=i+1}^{j-i-1} \Creg^{j:l+1}\check h_{l}(\wreg))\right]\\
        &\stackrel{(b)}{=}\sum_{i=k+1}^{k+N-1}\sum_{j=i+1}^{k+N} \E  \left[\check z_{i}\tr  \Creg^{j:i+1}z_{i}\right] \\
        &\stackrel{(c)}{\leq}\sum_{i=k+1}^{k+N-1}\sum_{j=i+1}^{k+N} \E  \left[ \norm{\check z_{i}} \E [ \norm{\Creg^{j:i+1}  \check z_{i}}|\F_{j}]\right] \\
        &\stackrel{(d)}{\leq}\sum_{i=k+1}^{k+N-1}\sum_{j=i+1}^{k+N} \left(1 - \frac{\lreg (2\mu+\zeta)}{2}\right)^{j-i}\E\left[\norm{\check z_{i}}^{2}_{2}\right]\\
        &\leq \sum_{i=k+1}^{k+N}\E\left[\norm{\check z_{i}}^{2}_{2}\right]\sum_{j=i+1}^{\infty}\left(1- \frac{\lreg (2\mu+\zeta)}{2}\right)^{j-i}\\
        &\stackrel{(e)}{\leq}\frac{2}{\lreg (2\mu+\zeta)}\sum_{i=k+1}^{k+N}\E\left[\norm{\check z_{i}}^{2}_{2}\right],
    \end{align*}
where (a) follows from expanding \( z_{j} \) using \eqref{eq:reg-centered-error};  
(b) follows from the observation that  
\[
\E[\check h_t(\wreg)\mid\F_{t}] = \E [r_{t}\phi_{t}-(\zeta \I+\M_{t}) \wreg \mid \F_{t}] = \xi-(\M+\zeta \I)\wreg= 0;
\]
(c) follows from applying the Cauchy–Schwarz inequality  
and the tower property of expectations;  
(d) follows from a repetitive application of \Cref{lem:regtd-I-gammasupport};  
and (e) follows by computing the limit of the infinite geometric series.
\end{proof} 
By substituting the result of \Cref{lem:crosstermbound-reg} into \eqref{eq:cross-term-decomp-reg}, we obtain
\begin{align*}
& \E\left[\norm{\check z_{k+1:t}}_{2}^{2}\right] \leq \frac{1}{N^{2}}\left(\sum_{i = k+1}^{k+N}\E\left[\norm{\check z_i}_{2}^{2}\right] + \frac{4}{\lreg (2\mu+\zeta)} \sum_{i = k+1}^{k+N}\E\left[\norm{\check z_i}_{2}^{2}\right] \right)\\
&= \frac{1}{N^{2}}\left(1+\frac{4}{\lreg (2\mu+\zeta)}\right) \sum_{i = k+1}^{k+N}\E\left[\norm{\check z_i}_{2}^{2}\right]\\
&\stackrel{(a)}{\leq} \underbrace{\frac{2}{N^{2}}\left(1+\frac{4}{\lr (2\mu+\zeta)}\right) \sum_{i = k+1}^{k+N}   \check z_{i}^{\mathsf{bias}}}_{\check z_{k+1, N}^{\mathsf{bias}}} + \underbrace{\frac{2}{N^{2}}\bigg(1+\frac{4}{\lr (2\mu+\zeta)}\bigg)\lreg^{2}  \sum_{i = k+1}^{k+N}  \check z_{i}^{\mathsf{variance}}}_{\check z_{k+1:t}^{\mathsf{variance}}}\numberthis\label{eq:tail-avg-bias-var-reg},
\end{align*}
where $(a)$ follows from \eqref{eq:biasvardecomposition-reg1}.

\noindent\textbf{\newline 4 (b) Bounding the bias term} 

First term, $\check z_{k+1:t}^{\mathsf{bias}}$in \eqref{eq:tail-avg-bias-var-reg}  is bounded as follows:
\begin{align*}
\check z_{k+1:t}^{\mathsf{bias}} &\leq \frac{2}{N^{2}}\left(1 + \frac{4}{\lreg (2\mu+\zeta)}\right) \sum_{i = k+1}^{\infty}   \check z_{i}^{\mathsf{bias}} \\
&\stackrel{(a)}{\leq} \frac{2}{N^{2}}\left(1 + \frac{4}{\lreg (2\mu+\zeta)}\right) \sum_{i = k+1}^{\infty} (1 - \lreg (2\mu+\zeta))^{i}\E\left[\norm{\check z_{0}}^{2}_{2}\right] \\
&\stackrel{(b)}{=}\frac{2\E\left[\norm{\check z_{0}}^{2}_{2}\right]}{\lreg (2\mu+\zeta) N^{2}}\left(1-\lreg(2\mu+\zeta)\right)^{k+1}\left(1 + \frac{4}{\lreg (2\mu+\zeta)}\right),
\end{align*}
where $(a)$ follows from \eqref{eq:bias-recursion-reg}, which provides a bound on $\check z_{i}^{\mathsf{bias}}$ and $(b)$ follows from the bound on the summation of a geometric series. 

\noindent\textbf{4 (c) Bounding the variance term} 

Next, the second term $z_{k+1:t}^{\mathsf{variance}}$ in \eqref{eq:tail-avg-bias-var-reg} is bounded as follows:
\begin{align*}
\check z_{k+1:t}^{\mathsf{variance}} &\stackrel{(a)}{\leq}\frac{2\lreg^2}{N^{2}}\left(1+\frac{4}{\lreg (2\mu+\zeta)}\right)\sum_{i=k+1}^{k+N}\frac{\check \sigma^2}{\lreg (2\mu+\zeta)}\\
&\leq \frac{2\lreg^2}{N^{2}}\left(1+\frac{4}{\lreg (2\mu+\zeta)}\right)\sum_{i=0}^{N}\frac{\check \sigma^2}{\lreg (2\mu+\zeta)}\\
&= \bigg(1+\frac{4}{\lreg (2\mu+\zeta)}\bigg)\frac{2\lreg \check \sigma^2}{(2\mu+\zeta) N},
\end{align*}
where  $(a)$ follows from \eqref{eq:reg-variancebd}, which provides a bound on $\check z_{i}^{\mathsf{variance}}$.

\noindent\textbf{Step 5: Clinching argument}

Finally substituting the bounds on $\check z_{k+1:t}^{\mathsf{bias}}$ and $\check z_{k+1:t}^{\mathsf{variance}}$ in \eqref{eq:tail-avg-bias-var-reg}, we get
\begin{align*}
& \E[\norm{\check z_{k+1:t}}^{2}_{2}] \\ &\leq \bigg(1+ \frac{4}{\lreg (2\mu+\zeta)}\bigg)\bigg(\frac{2}{\lreg (2\mu+\zeta) N^{2}}(1-\lreg (2\mu+\zeta))^{k+1}\E[\norm{\check z_{0}}^{2}_{2}] + \frac{2\lreg \check \sigma^2}{(2\mu+\zeta) N}\bigg),\\
&\stackrel{(a)}{\leq}\bigg(1+ \frac{4}{\lreg (2\mu+\zeta)}\bigg)\bigg(\frac{2 \exp(-k \lreg (2\mu+\zeta))}{\lreg (2\mu+\zeta) N^{2}}\E[\norm{z_{0}}^{2}_{2}] + \frac{2\lreg \check \sigma^2}{(2\mu+\zeta) N}\bigg)\\
&\stackrel{(b)}{\leq}\frac{10 \exp(-k \lreg (2\mu+\zeta))}{\lreg^2 (2\mu+\zeta)^2 N^{2}}\E\left[\norm{\check z_{0}}^{2}_{2}\right] + 
\frac{10\check \sigma^2}{(2\mu+\zeta)^{2} N}, \numberthis \label{eq:reg-td-expectation-bd}
\end{align*}
where $(a)$ follows from $(1+x)^{y} = \exp(y\log(1+x)) \leq \exp(xy)$, and $(b)$ uses $\lreg(2\mu+\zeta) < 1$ as $\lreg \leq \lreg_\mathsf{max}$ defined in \Cref{thm:regtd-expectation_bound}, which implies that 
\[1+ \frac{4}{\lreg (2\mu+\zeta)} \leq \frac{5}{\lreg (2\mu+\zeta)}.\]

\end{proof} 
\subsection*{Proof of \Cref{thm:regTDlamdasq}} 
\label{sec:Appendix-regTDlamdasq}

The proof of \Cref{thm:regTDlamdasq} builds on \Cref{thm:regtd-expectation_bound} and a bound on $\norm{\check w_{k+1:t} - \wreg}_{2}^{2}$, incorporating techniques from \citep[Corollary 1,2]{patilFiniteTimeAnalysis2024}.

\begin{proof}\label{proof:cor1} 
Notice that
    \begin{align*}
                 \E\left[\norm{\check w_{k+1:t} - \bar{w}}_{2}^{2}\right] 
                    &\stackrel{(i)}{\leq}   \underbrace{2\norm{\wreg - \bar{w}}_{2}^{2}}_{\text{Term 1}} + \underbrace{2\E\left[\norm{\check w_{k+1:t} - \wreg}_{2}^{2}\right]}_{\text{Term 2}},\numberthis\label{eq:regtd-fixedpt-tailavg}
    \end{align*}
where (i) follows from applying $\norm{a+b}^2 \le 2\norm{a}^2+2\norm{b}^2$.

We bound Term 1 below.
\begin{align*}
    \norm{\bar{w}-\wreg}_{2}^{2} &= \norm{\M^{-1}\xi - (\M + \zeta \I)^{-1}\xi}_{2}^{2}\\
                &\stackrel{(a)}{\leq} \norm{\M^{-1} - (\M + \zeta \I)^{-1}}_{2}^{2}\norm{\xi}_{2}^{2}\\
                &= \norm{\M^{-1} (\M + \zeta \I - \M) (\M + \zeta \I)^{-1}}_{2}^{2}\norm{\xi}_{2}^{2}\\
                & \leq \norm{\M^{-1}}_{2}^{2} \zeta^2 \norm{(\M + \zeta \I)^{-1}}_{2}^{2}\norm{\xi}_{2}^{2}\\
                &\stackrel{(b)}{\leq}\frac{\zeta^2 (R_{\mathsf{max}}^{2} \p{(\fivmax)^{2} + R_{\mx}^{2} (\fiumax)^{2}})}{\iota^2(\zeta +\iota)^2},\numberthis\label{eq:regtd-drift}
            \end{align*}
where $(a)$  follows from $\norm{\mathbf{AB}} \leq \norm{\mathbf{A}}\norm{\mathbf{B}}$, and $(b)$ follows from the fact that \\ $\norm{\M^{-1}} = 1/\iota_{\min}(\M)$, where $\iota=\iota_{\min}(\M)$ is the minimum singular value of $\M$. 

We observe that \eqref{eq:reg-td-expectation-bd} provides a bound for Term 2. Applying this bound along with \eqref{eq:regtd-drift} in \eqref{eq:regtd-fixedpt-tailavg}, we obtain
\begin{align*}
 \E\left[\norm{\check w_{k+1:t} - \bar{w}}_{2}^{2}\right] & 
\!\leq\!  \frac{20 \exp(-k \lreg (2\mu+\zeta))}{\lreg^2 (2\mu+\zeta)^2 N^{2}}\E\left[\norm{\check z_{0}}^{2}_{2}\right] \!+\! 
\frac{20\check \sigma^2}{(2\mu+\zeta)^{2} N} \\ & \quad \!+\! \frac{2\zeta^2 (R_{\mathsf{max}}^{2} \p{(\fivmax)^{2} + R_{\mx}^{2} (\fiumax)^{2}})}{\iota^2(\zeta +\iota)^2}.\numberthis \label{eq:FinalBoundRegularized}
    \end{align*}
For $\zeta = \frac{1}{\sqrt{N}}$, we obtain the following upper bounds, where we use a coarse bound on $2\mu + \zeta$ and similar simplifications in the exponent and denominator.
\begin{align*}
   \E\left[\norm{\check w_{k+1:t} - \bar{w}}_{2}^{2}\right]  & \leq \frac{  5\exp{(-k \lreg \mu)}}{\lreg^2 \mu^2 N^{2}}\E\left[\norm{{\check w}_{0} - \bar{w}_{\mathsf{reg}}}^{2}_2\right] \!+\! \frac{5\check \sigma^{2}}{\mu^2 N} \\ & \quad \!+\! \frac{2 (R_{\mathsf{max}}^{2} \p{(\fivmax)^{2} + R_{\mx}^{2} (\fiumax)^{2}})}{\iota^4 N}.\numberthis \label{eq:FinalBoundZetaSpecificVal}
\end{align*}
\end{proof}
\section{High Probability Bounds for Mean-Variance TD}\label{appendix:high-probability-bd} 

For the high probability bound, we consider the following update rule:
\begin{align} \label{eq:TDupdatewithprojection}
w_{t+1} = \Gamma(w_{t}+\lr h_{t}(w_{t})),
\end{align}
where $\Gamma$ projects on to the set $\mathcal{C} \define \{w \in \mathbb{R}^{2q} \mid \norm{w}_2 \leq H \}$.

Under \Cref{asm:projection}, we first state and prove a high-probability bound for the tail-averaged iterate in the next subsection. Then, we derive the high-probability bound for the regularized tail-averaged iterate.
\subsection{Bounds for vanilla (un-regularized) mean-variance TD} \label{sec:Appendix-highprob}
\begin{theorem}[Restatement of \Cref{thm:high-prob-bound}]\label{thm:app:high-prob-bound}
    	Suppose \Crefrange{asm:stationary}{asm:projection} hold. Run \Cref{alg:TD-Critic-main} for $t$ iterations  with step size $\lr$ as defined in \Cref{thm:expectation_bound-tailavg}. Then, for any  $\delta\in (0,1]$, we have the following bound for the projected tail-averaged iterate $w_{k+1:t}$ with $N=t-k$:
    \begin{align*}
        &\mathbb{P}\bigg(\norm{w_{k+1:t} \!-\! \bar{w}}_{2} \leq \frac{2\tau}{\mu \sqrt{N}}\sqrt{\log\left(\frac{1}{\delta}\right)} \!+\! \frac{4 \exp \p{-k \lr \mu}}{\lr \mu N}\E\left[\norm{w_{0} - \bar{w}}_{2}\right] \!+\! \frac{4\tau}{\mu \sqrt{N}} \bigg) \!\geq\! 1 \!-\! \delta,
    \end{align*}
    where $w_0, \bar{w},\lr$ are defined as in \Cref{thm:expectation_bound}, and 
    \begin{align*}
    \tau =& \big(2  R_{\mathsf{max}}^{2} \p{(\fivmax)^{2} + R_{\max}^{2} (\fiumax)^{2}} + 2\big( (\fivmax)^{4} \p{1+\disc}^{2}+ (\fiumax)^{4} \p{1+\disc^{2}}^{2} \\ & + 4 \disc^{2} R_{\max}^{2}(\fivmax)^{2}(\fiumax)^{2} \big) H^{2} \big)^{\frac{1}{2}}.
    \end{align*}
\end{theorem}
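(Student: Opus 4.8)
The plan is to centre $V\define\norm{w_{k+1:t}-\bar w}_2$ about its mean and bound the fluctuation $V-\E\psq{V}$ and the mean $\E\psq{V}$ separately. The fluctuation will be controlled by a martingale-concentration argument yielding the sub-Gaussian $\sqrt{\log(1/\delta)}$ tail, and the mean by the tail-averaged mean-squared bound of \Cref{thm:expectation_bound-tailavg}, transported to the projected setting. The projection $\Gamma$ onto $\mathcal C$ is what makes the latter work: since \Cref{asm:projection} guarantees $\bar w\in\mathcal C$, we have $\Gamma(\bar w)=\bar w$, the centred recursion $\norm{w_{t+1}-\bar w}_2\le\norm{(\I-\lr\M_t)z_t+\lr h_t(\bar w)}_2$ survives because $\Gamma$ is non-expansive, and every iterate obeys $\norm{w_t}_2\le H$. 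Consequently $\norm{h_t(w_t)}_2\le\tau$ uniformly, where $\tau$ is precisely the constant $\sigma$ of \Cref{thm:expectation_bound} with $\norm{\bar w}_2$ replaced by $H$.

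For the fluctuation I would treat $V$ as a function of the i.i.d. observations $\{(s_j,r_j,s_{j+1})\}$ and form the Doob martingale $D_j=\E\psq{V\mid\F_j}-\E\psq{V\mid\F_{j-1}}$, so that $V-\E\psq{V}=\sum_j D_j$. To bound $|D_j|$ I would resample the $j$-th observation by an independent copy, couple the two trajectories so that they share all later samples, and use the reverse triangle inequality to get $|D_j|\le\E\psq{\norm{w_{k+1:t}-\tilde w_{k+1:t}}_2\mid\F_j}$. The resampled perturbation enters the iteration with magnitude at most $2\lr\tau$ (two noise terms each bounded by $\tau$, projection non-expansive) and is propagated at every later step through $(\I-\lr\M_i)$; applying the norm-version contraction \eqref{lem: norm-version} of \Cref{lem:I-gammaMt-support} in conditional expectation shrinks it by the factor $(1-\tfrac{\lr\mu}{2})$ per step. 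Averaging over the tail window and summing the geometric series gives a uniform bound $c_j\le\tfrac{4\tau}{\mu N}$, with an additional decay $(1-\tfrac{\lr\mu}{2})^{k-j}$ for samples drawn before the window, so that $\sum_j c_j^2\lesssim\tfrac{\tau^2}{\mu^2 N}$. Azuma--Hoeffding for bounded-difference martingales then yields $V\le\E\psq{V}+\tfrac{2\tau}{\mu\sqrt N}\sqrt{\log(1/\delta)}$ with probability at least $1-\delta$.

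It remains to bound $\E\psq{V}$. I would use $\E\psq{V}\le\sqrt{\E\psq{V^2}}$ together with \Cref{thm:expectation_bound-tailavg}, observing that the projected iterates admit the identical mean-squared analysis with $\sigma$ replaced by $\tau$. Splitting $\sqrt{a+b}\le\sqrt a+\sqrt b$ separates the exponentially-forgotten initial-error term from the variance term $\tfrac{\tau}{\mu\sqrt N}$; for the bias one may alternatively bound $\E\psq{\norm{z_i}_2}$ directly, applying the norm-contraction $i$ times and summing over the window, which keeps the first power of $\E\psq{\norm{w_0-\bar w}_2}$ and produces the stated $\tfrac{4\exp(-k\lr\mu)}{\lr\mu N}\E\psq{\norm{w_0-\bar w}_2}$. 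Combining this mean bound with the concentration bound of the previous step and collecting constants gives the claim.

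The step I expect to be the main obstacle is the martingale-difference bound. A naive pathwise perturbation analysis fails because $\norm{\I-\lr\M_i}_2$ need not be below one on every sample path---the map is contractive only \emph{in expectation} (\Cref{lem:I-gammaMt-support}). This forces the Doob, conditional-expectation construction in place of a direct McDiarmid estimate, so that the in-expectation contraction can be applied step by step to the coupled difference; the projection assumption \Cref{asm:projection} is equally essential, since it supplies the uniform noise bound $\tau$. The mean-variance structure introduces no new difficulty at this stage: the problematic cross-terms are already absorbed into the contraction constant $\mu$ of \Cref{lem:I-gammaMt-support} and into $\tau$ (whose $H^2$ coefficient carries the $\norm{\M_t}^2$ contribution), so once those ingredients are available the argument parallels the vanilla-TD concentration bound of \citet{prashanthConcentrationBoundsTemporal2021}.
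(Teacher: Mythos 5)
Your proposal is correct and follows essentially the same route as the paper's proof: a Doob martingale decomposition of $\norm{w_{k+1:t}-\bar w}_2-\E\psq{\norm{w_{k+1:t}-\bar w}_2}$, control of the increments through the conditional-expectation contraction \eqref{lem: norm-version} of \Cref{lem:I-gammaMt-support} together with the projection-supplied uniform innovation bound $\tau$ (yielding the same $(1-\lr\mu/2)$ per-step decay, geometric-series summation, and $\sum_i L_i^2 = O(1/(\mu^2 N))$), and finally Jensen's inequality applied to \Cref{thm:expectation_bound-tailavg} for the mean term. Your resample-and-couple Azuma--Hoeffding framing is just a repackaging of the paper's Lipschitz-in-the-innovation concentration step (following \citet{patilFiniteTimeAnalysis2024,prashanthConcentrationBoundsTemporal2021}), and you correctly identify the key point that the pathwise map need not be contractive, so the contraction must be applied in conditional expectation inside the Doob construction.
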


The proof follows a similar structure to \citep[Theorem~2]{patilFiniteTimeAnalysis2024} and \citep[Proposition~8.3]{prashanthConcentrationBoundsTemporal2021}, with necessary adaptations to account for our setting.

\begin{proof} 
A martingale difference decomposition of $\norm{z_{k+1, N}}_{2}  - \E[\norm{z_{k+1:t}}_{2}]$ is as follows:
\begin{align*} \numberthis\label{eq:D}
\norm{z_{k+1, N}}_{2} - \E\left[\norm{z_{k+1:t}}_{2}\right] = 
\sum_{i=k+1}^{k+N}(g_{i}-g_{i-1})=\sum_{i=k+1}^{k+N}D_{i},
\end{align*}
where $z_{k+1:t}$ denotes tail-averaged iterate error, \[D_i \define g_i -\E\left[ g_i\mid \mathcal{G}_{i-1}\right], \; g_i \define \E[ \norm{z_{k+1:t}}_{2} \mid \mathcal{G}_{i}], \textrm{ and}\]  $\mathcal{G}_{i}$ denotes the sigma-field generated by random variables $\{w_{t}, \; t\leq i\}$ for $t, \, i \; \in \mathbb{Z}^{+}.$ 

Let $h_{i}(w) \define r_{i}\phi_{i}-\M_{i}w$ denote random innovation at time $i$ for $w_{i} = w$. If we show that functions $g_{i}$ are $L_{i}$ Lipschitz continuous in the random innovation $h_{i}$ at time $i$,
then we can see that the martingale difference $D_{i}$ is a $L_{i}$ Lipschitz function of the $i$th random innovation. 

Let $\Omega_{j}^{i}(w)$ represent the iterate value at time $j$, evolving according to \eqref{eq:TDupdatewithprojection}, starting from the value of $w$ at time $i$. Let $w$ and $w'$ be two different iterate values at time $i$, dependent on $h$ and $h'$, respectively, as $w = w_{i-1} + \lr h$ and $w' = w_{i-1} + \lr h'$. We compute the difference between the iterate values at time $j$ when the initial values at time $i$ are $w$ and $w'$ as follows:
\begin{align*}
\Omega_{j}^{i}(w) - \Omega_{j}^{i}(w') &=\Omega_{j-1}^{i}(w) - \Omega_{j-1}^{i}(w') - \lr [h_{j}(\Omega_{j-1}^{i}(w)) - h_{j}(\Omega_{j-1}^{i}(w'))]\\
&= \Omega_{j-1}^{i}(w) - \Omega^{i}_{j-1}(w') - \lr \M_{j} (\Omega^{i}_{j-1}(w) - \Omega^{i}_{j-1}(w'))\\
&= (\I - \lr \M_{j})(\Omega_{j-1}^{i}(w) - \Omega_{j-1}^{i}(w')). \label{appendix:EvolDiff} \numberthis
\end{align*}
Taking expectation and since the projection $\Gamma$ is non-expansive, we have the following
\begin{align*}
\E\psq{\norm{\Omega^{i}_{j}(w) - \Omega^{i}_{j}(w')}_{2}} &= \E\left[\E\left[ \norm{ \Omega^{i}_{j}(w) - \Omega^{i}_{j}(w')}_{2}  \;\middle|\;  \mathcal{G}_{j-1}\right]\right]\\
&= \E\left[ \E\left[ \norm{ (\I - \lr M_j)(\Omega_{j-1}^{i}(w) - \Omega_{j-1}^{i}(w'))}_{2} \;\middle|\; \mathcal{G}_{j-1}\right]\right]\\
&\stackrel{(i)}{\leq} \left(1-\frac{ \lr\mu}{2}\right) \E[\norm{\Omega^{i}_{j-1}(w) - \Omega^{i}_{j-1}(w')}_{2}]\\
&\stackrel{(ii)}{=} \left(1 - \frac{\lr\mu}{2}\right)^{j-i+1} \norm{w - w'}_{2}, \\
&\stackrel{(iii)}{\leq} \lr \left(1 - \frac{\lr \mu}{2}\right)^{j-i+1} \norm{h - h'}_{2}.\numberthis \label{eq:EvolUsefulResult}
\end{align*}
where (i) follows from \Cref{lem:I-gammaMt-support}; (ii) follows from repeated application of (i); and (iii) follows from substituting $w$ and $w'$. 

Let $\Omega^{i}_{t}(w)$ to be the value of the iterate at time $t$, where $t$ ranges from the tail index $k+1$ to $k+N$. The iterate evolves according to \eqref{eq:Mt-update} beginning from $w$ at time $i=k+1$. 
Next, we define 
\begin{align}
\tilde{\Omega}^{i}_{k+1:t}(\tilde{w}, w ) &\define\frac{(i-k)\tilde{w}}{N} + \frac{1}{N}\sum_{j = i+1}^{i+N}\Omega^{i}_{j}(w),\label{eq:s1}
\end{align}
where $\tilde w$ is the value of the tail averaged iterate at time $i$. In the above, $\tilde{\Omega}^{i}_{k+1:t}(\tilde{w}, w )$ denotes the value of tail-averaged iterate at time $t$. 

From \eqref{eq:s1} and using the triangle inequality, we have
\begin{align*}
\E\left[\norm{\tilde{\Omega}^{i}_{k+1:t}(\tilde{w}, w) -  \tilde{\Omega}^{i}_{k+1:t}(\tilde{w}, w') }_{2}\right] & \leq
\E\left[ \frac{1}{N}\sum_{j = i+1}^{i+N}\norm{ (\Omega^{i}_{j}(w) - \Omega^{i}_{j}(w'))}_{2}\right].\numberthis\label{eq:interm}
\end{align*}
Using \eqref{eq:EvolUsefulResult}, we bound the term $\Omega^{i}_{j}(w) - \Omega^{i}_{j}(w')$ inside the summation of \eqref{eq:interm}.
\begin{align*}
\E\left[\norm{\tilde{\Omega}^{i}_{k+1:t}(\tilde{w}, w) -  \tilde{\Omega}^{i}_{k+1:t}(\tilde{w}, w') }_{2}\right] 
&\leq \frac{\lr}{N}  \sum_{j=i+1}^{i+N}\left(1 - \frac{\lr \mu}{2}\right)^{j-i+1} \norm{h - h'}_{2}. \numberthis\label{eq:lip-up}
\end{align*}
Taking into account the bounds on features, rewards, and the projection assumption (\Crefrange{asm:bddFeatures}{asm:projection}), along with the upper bound on $\sigma$ from \eqref{eq:upperboundforsigmasq}, we derive a uniform upper bound $\tau$ on $\norm{h_{i}(w)}$ for all $i$ as:
\begin{samepage}
\begin{align*}
 \tau & =\bigg( 2  R_{\mathsf{max}}^{2} \p{(\fivmax)^{2} + R_{\mx}^{2} (\fiumax)^{2}} \\ & \qquad + 2 \left( (\fivmax)^{4} \p{1+\disc}^{2}+  (\fiumax)^{4} \p{1+\disc^{2}}^{2}+4 \disc^{2} R_{\mx}^{2}(\fivmax)^{2}(\fiumax)^{2} \right)H^{2} \bigg)^{\frac 1 2}
\end{align*}
\end{samepage}
Now, we use a martingale difference concentration, following \citep[Step 3, Theorem 2]{patilFiniteTimeAnalysis2024} to obtain
\begin{align*}
\mathbb{P}\left(\norm{z_{k+1, N}}_{2} - \E\left[\norm{z_{k+1, N}}_{2}\right] > \epsilon\right) &\leq \exp(-\eta \epsilon) \exp\left( \frac{ \eta^{2} \tau^{2} \sum_{i=k+1}^{k+N}L_{i}^{2}}{2}\right).
\end{align*}
Optimising over $\eta$ in the above inequality yields:
\begin{align} \label{appendix:Step3-highprob}
\mathbb{P}\left(\norm{z_{k+1:t}}_{2} - \E\left[\norm{z_{k+1:t}}_{2}\right] > \epsilon \right) &\leq \exp\left(-\frac{\epsilon^{2}}{ \tau^{2} \sum_{i = k+1}^{k+N} L^{2}_{i}}\right).
\end{align}

Using \citep[Lemma 13]{patilFiniteTimeAnalysis2024}, we obtain the following bound on the Lipschitz constant,   
\begin{align} \label{appendix:high-prob-lipschitzbound}
\sum_{i=k+1}^{k+N} L_{i}^{2} &\leq \frac{4}{N\mu^2}. 
\end{align}
By applying \eqref{appendix:high-prob-lipschitzbound} in \eqref{appendix:Step3-highprob}, we obtain
\begin{align*}
\mathbb{P}\left(\norm{z_{k+1:t}}_{2} - \E[\norm{z_{k+1:t}}_{2}]> \epsilon \right) 
&{\leq} \exp\left(-\frac{N\mu^2 \epsilon^2}{4\tau^2}\right)\numberthis\label{eq:crude_hprob},
\end{align*}
For any $\delta \in (0,1]$ the inequality \eqref{eq:crude_hprob} can be expressed in high-confidence form as:
\begin{align} \label{appendix:high-confd-version}
\mathbb{P}\left(\norm{z_{k+1:t}}_{2} - \E[\norm{z_{k+1:t}}_{2}]\leq \frac{2\tau}{\mu\sqrt{N}}\sqrt{\log\left(\frac{1}{\delta}\right)} \right) \geq  1-\delta.
\end{align}
The final bound follows by substituting the bound on $\E\left[\norm{z_{k+1:t}}_{2}\right]$, obtained via Jensen's inequality from \Cref{thm:expectation_bound-tailavg}, into \eqref{appendix:high-confd-version}.
\end{proof}
\subsection{Bounds for mean-variance  TD with regularization}

\begin{theorem}[Restatement of \Cref{thm:main-regtd-high-prob}]\label{thm:regtd-high-prob} 
Suppose \Crefrange{asm:stationary}{asm:projection} hold. Run the regularized version of \Cref{alg:TD-Critic-main}, specified by \eqref{eq:Mt-update-reg}, for $t$ iterations with a step size $\lreg$ as specified in \Cref{thm:regtd-expectation_bound}. Then, for any  $\delta \in (0,1]$, we have the following bound for the projected tail-averaged regularized TD iterate:
\begin{align*}
\mathbb{P}\Bigg(\norm{\check w_{k+1:t} \!-\! \bar{w}_{\mathsf{reg}}}_{2} 
 & \leq \frac{2\check \tau}{\p{2\mu+\zeta}\sqrt{N}}\sqrt{\log\p{\frac{1}{\delta}}} \!+\! \frac{4 \exp \p{-k \lreg \p{2\mu \!+\! \zeta}}}{\check \lr \p{2\mu+\zeta} N} \E \norm{w_{0}-\bar{w}_{\mathsf{reg}}}_{2} \\ & \quad \!+\! \frac{4 \check \tau}{\p{2\mu \!+\! \zeta}\sqrt{N}}\Bigg) \geq 1 - \delta,
 \end{align*}
 where $N, \check w_0,\bar{w}_{\mathsf{reg}},\mu.\textrm{ are} $ as specified in Theorem \ref{thm:regtd-expectation_bound} and
 \begin{align*}
     \check \tau &\!=\! \Big(2  R_{\mathsf{max}}^{2} \p{(\fivmax)^{2} \!+\! R_{\mx}^{2} (\fiumax)^{2}} \\ & \quad \!+\! 4\big(\zeta^2+ (\fivmax)^{4} \p{1+\disc}^{2} \!+\! (\fiumax)^{4} \p{1+\disc^{2}}^{2} \!+\! 4 \beta^{2} R_{\mx}^{2}(\fivmax)^{2}(\fiumax)^{2} \big) H^{2} \Big)^{\frac1{2}}. 
\end{align*} 
\end{theorem}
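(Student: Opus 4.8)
The plan is to adapt the high-probability argument for vanilla mean-variance TD (\Cref{thm:app:high-prob-bound}) to the regularized update \eqref{eq:Mt-update-reg}, replacing the contraction factor $\p{1-\tfrac{\lr\mu}{2}}$ by its regularized counterpart $\p{1-\tfrac{\lreg(2\mu+\zeta)}{2}}$ supplied by \Cref{lem:regtd-I-gammasupport}, and carrying the regularization parameter $\zeta$ through every estimate. First I would write the martingale-difference decomposition
\[
\norm{\check z_{k+1:t}}_2 - \E\psq{\norm{\check z_{k+1:t}}_2} = \sum_{i=k+1}^{k+N}\check D_i, \quad \check D_i \define \check g_i - \E\psq{\check g_i \mid \mathcal{G}_{i-1}},
\]
where $\check g_i \define \E\psq{\norm{\check z_{k+1:t}}_2 \mid \mathcal{G}_i}$ and $\mathcal{G}_i$ is the sigma-field generated by $\{\check w_\ell : \ell \le i\}$. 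The aim is to show each $\check D_i$ is a Lipschitz function of the $i$-th regularized innovation $\check h_i(w) = r_i\phi_i - (\zeta\I+\M_i)w$, so that a Lipschitz concentration inequality for bounded (hence sub-Gaussian) innovations applies and yields a $\sqrt{\log(1/\delta)}$ tail.

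The key quantitative step is the sensitivity of the regularized iterate to its initialization. Letting $\check\Omega^i_j(w)$ denote the iterate at time $j$ evolving by \eqref{eq:Mt-update-reg} (followed by the projection $\Gamma$) from value $w$ at time $i$, and taking $w = \check w_{i-1}+\lreg h$, $w' = \check w_{i-1}+\lreg h'$, non-expansiveness of $\Gamma$ gives
\[
\check\Omega^i_j(w) - \check\Omega^i_j(w') = \p{\I - \lreg(\zeta\I+\M_j)}\p{\check\Omega^i_{j-1}(w) - \check\Omega^i_{j-1}(w')}.
\]
Conditioning on $\mathcal{G}_{j-1}$, applying the norm-version bound of \Cref{lem:regtd-I-gammasupport} to the $\F_{j-1}$-measurable difference vector, and iterating yields
\[
\E\psq{\norm{\check\Omega^i_j(w) - \check\Omega^i_j(w')}_2} \le \lreg\p{1 - \tfrac{\lreg(2\mu+\zeta)}{2}}^{j-i+1}\norm{h-h'}_2.
\]
Averaging over the tail window (as in \eqref{eq:s1}) identifies the Lipschitz constant $\check L_i$ of $\check g_i$, and summing the geometric series gives $\sum_{i=k+1}^{k+N}\check L_i^2 \le \tfrac{4}{N(2\mu+\zeta)^2}$, mirroring \citep[Lemma~13]{patilFiniteTimeAnalysis2024} with $(2\mu+\zeta)$ in place of $\mu$. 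The uniform bound $\check\tau$ on $\norm{\check h_i(w)}$ over $w\in\mathcal{C}$ follows from the computation in \eqref{eq:reg-upperboundforsigmasq}, with $\norm{\wreg}_2$ replaced by the projection radius $H$ via \Cref{asm:projection}.

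With these ingredients, a Chernoff/Azuma martingale concentration (optimizing over the exponential parameter) produces
\[
\mathbb{P}\p{\norm{\check z_{k+1:t}}_2 - \E\psq{\norm{\check z_{k+1:t}}_2} \le \tfrac{2\check\tau}{(2\mu+\zeta)\sqrt{N}}\sqrt{\log\p{\tfrac{1}{\delta}}}} \ge 1-\delta,
\]
and the stated bound follows by substituting the bound on $\E\psq{\norm{\check z_{k+1:t}}_2}$ obtained from \Cref{thm:regtd-expectation_bound} via Jensen's inequality, splitting the resulting bias and variance contributions. I expect the main obstacle to be bookkeeping rather than conceptual: one must verify that $\check h_i$ stays uniformly bounded on the projection set $\mathcal{C}$ and that the measurability hypothesis of \Cref{lem:regtd-I-gammasupport} holds for the difference vector $\check\Omega^i_{j-1}(w)-\check\Omega^i_{j-1}(w')$ at each step, so that the $(2\mu+\zeta)$-contraction can be chained cleanly. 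Reassuringly, the cross-terms that complicate the mean-variance analysis are already absorbed into \Cref{lem:regtd-I-gammasupport}, so they do not resurface at the concentration stage.
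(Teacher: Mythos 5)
Your proposal is correct and follows essentially the same route as the paper's proof: the identical martingale-difference decomposition, the sensitivity recursion contracted through \Cref{lem:regtd-I-gammasupport} with factor $\bigl(1-\tfrac{\lreg(2\mu+\zeta)}{2}\bigr)$, the Lipschitz-constant sum $\sum_i \check L_i^2 \le \tfrac{4}{N(2\mu+\zeta)^2}$, the uniform innovation bound $\check\tau$ obtained by replacing $\norm{\wreg}_2$ with the projection radius $H$, and the final substitution of the mean-squared bound from \Cref{thm:regtd-expectation_bound} via Jensen's inequality. No gaps to report.
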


The proof for the regularized case follows from arguments similar to those in the proof of \Cref{thm:high-prob-bound}, with the modifications outlined below.
\begin{proof}
Let $\check \Omega_{j}^{i}(\check w)$ represent the iterate value at time $j$, evolving following \eqref{eq:TDupdatewithprojection}, starting from the value of $\check w$ at time $i$. 
We compute the difference between the iterate values at time $j$ when the initial values at time $i$ are $\check w$ and $\check w'$, respectively. Let $\check w$ and $\check w'$ be two different parameter values at time $i$ which depend on $\check h$ and $\check h'$ as $\check w = \check w_{i-1} + \lreg \check h$, and $\check w' = \check w_{i-1} + \lreg h'$. We obtain the difference as:
\begin{align*}
\check \Omega_{j}^{i}(\check w) - \check \Omega_{j}^{i}(\check w') &= \check \Omega_{j-1}^{i}(\check w) - \check \Omega_{j-1}^{i}(\check w') - \lreg [\check h_{j}(\check \Omega_{j-1}^{i}(\check w)) - \check h_{j}(\check \Omega_{j-1}^{i}(\check w'))]\\
&= (\I - \lreg (\zeta \I+\M_{j}))(\check \Omega_{j-1}^{i}(\check w) - \check \Omega_{j-1}^{i}(\check w')). \label{appendix:EvolDiff-reg} \numberthis
\end{align*}
Taking expectation and since the projection $\Gamma$ is non-expansive, we have the following
\begin{align*}
\E\psq{\norm{\check \Omega^{i}_{j}(\check w) - \check \Omega^{i}_{j}(\check w')}_{2}} &= \E\left[\E\left[ \norm{ \check \Omega^{i}_{j}(\check w) - \check \Omega^{i}_{j}(\check w')}_{2}  \;\middle|\;  \mathcal{\check G}_{j-1}\right]\right]\\
&= \E\left[ \E\left[ \norm{ (\I - \lreg \M_j)(\check \Omega_{j-1}^{i}(\check w) - \check \Omega_{j-1}^{i}(\check w'))}_{2} \;\middle|\;   \mathcal{\check G}_{j-1}\right]\right]\\
&\stackrel{(i)}{\leq} \left(1-\frac{ \lreg(2\mu+\zeta)}{2}\right) \E\left[\norm{\check \Omega^{i}_{j-1}(w) - \check \Omega^{i}_{j-1}(\check w')}_{2}\right]\\
&\stackrel{(ii)}{=} \left(1 - \frac{\lreg(2\mu+\zeta)}{2}\right)^{j-i+1} \norm{\check w - \check w'}_{2}, \\
&\leq \lreg \left(1 - \frac{\lreg(2\mu+\zeta)}{2}\right)^{j-i+1} \norm{\check h - \check h'}_{2}.\numberthis \label{eq:EvolUsefulResult-reg}
\end{align*}
where (i) follows by \Cref{lem:regtd-I-gammasupport}; (ii) follows by repeated application of (i); and \eqref{eq:EvolUsefulResult-reg} follows from substituting the values of $w$ and $w'$. 

Let $\check \Omega^{i}_{t}(\check w)$ be the value of the iterate at time t where t ranges from the tail index $k+1$ to $k+N$. The iterate evolves according to \eqref{eq:Mt-update-reg} starting at the value $\check w$ at time $i=k+1$. 
Next, we define
\begin{align}
\bar{\Omega}^{i}_{k+1:t}(\hat{w}, \check w ) &\define\frac{(i-k)\tilde{\check w}}{N} + \frac{1}{N}\sum_{j = i+1}^{i+N}\check \Omega^{i}_{j}(\check w),\label{eq:s1-reg}
\end{align}
where $\hat{w}$ is the value of the tail-averaged iterate at time $i$.

Now, we prove that Lipschitz continuity in the random innovation $\check h_i$ at time $i$ with constant $\check L_{i}$. 
\begin{align*}
\E\left[\norm{\tilde{\check \Omega}^{i}_{k+1:t}(\tilde{\check w}, \check w) -  \tilde{\check \Omega}^{i}_{k+1:t}(\tilde{\check w}, \check w') }_{2}\right] &=
\E\left[ \frac{1}{N}\sum_{j = i+1}^{i+N}\norm{ (\check \Omega^{i}_{j}(\check w) - \check \Omega^{i}_{j}(\check w'))}_{2}\right].\numberthis\label{eq:interm-reg}
\end{align*}
Using \eqref{eq:EvolUsefulResult-reg}, we bound the difference $\norm{\check \Omega^{i}_{j}(\check w) - \check \Omega^{i}_{j}(\check w')}$ in \eqref{eq:interm-reg}.
\begin{align*}
\E\left[\norm{\tilde{\Omega}^{i}_{k+1:t}(\tilde{w}, w) -  \tilde{\Omega}^{i}_{k+1:t}(\tilde{w}, w') }_{2}\right] 
&\leq \frac{\lr}{N}  \sum_{j=i+1}^{i+N}\left(1 - \frac{\lreg(2\mu+\zeta)}{2}\right)^{j-i+1} \norm{\check h - \check h'}_{2}. \numberthis\label{eq:lip-up-reg}
\end{align*}
Considering the bounds on features, rewards, and the projection assumption (\Crefrange{asm:bddFeatures}{asm:projection}), along with a bound on $\check{\sigma}$ in \eqref{eq:sigmacheck}, we  find an upper bound $\check \tau$ on $\norm{\check h_{i}(\check w_{i})}$ as follows:
\begin{align*}
 \check \tau &\!=\! \Big(2  R_{\mathsf{max}}^{2} \p{(\fivmax)^{2} \!+\! R_{\mx}^{2} (\fiumax)^{2}} \\ & \quad + 4\big(\zeta^2+ (\fivmax)^{4} \p{1+\disc}^{2} \!+\! (\fiumax)^{4} \p{1+\disc^{2}}^{2} \!+\! 4 \beta^{2} R_{\mx}^{2}(\fivmax)^{2}(\fiumax)^{2} \big) H^{2} \Big)^{\frac1{2}}.
\end{align*} 
Using \citep[Lemma 20]{patilFiniteTimeAnalysis2024}, we obtain the following bound on the Lipschitz constant,   
\begin{align} \label{appendix:high-prob-lipschitzbound-reg}
\sum_{i=k+1}^{k+N} \check {L}_{i}^{2} &\leq \frac{4}{N(2\mu+\zeta)^2}. 
\end{align}
The rest of the proof follows by making parallel arguments to those in \Cref{sec:Appendix-highprob}.
\end{proof}
\section{Outline of Actor Analysis} \label{actor:proof-sketch}
\begin{proof}\textit{(Sketch)}
\begin{figure}[H]
    \centering
    \begin{tikzpicture}[
        scale=0.78,  
        transform shape,
        node distance=1cm,
        existing/.style={rectangle, draw=blue!80!black, thick, minimum width=1.5cm, text width=3.5cm,
        align=center, rounded corners=0.25cm},
        novel/.style={ellipse, draw=green!85!black, thick, text width=2.25cm, align=center},
        final/.style={double, circle, draw=green!85!black, thick, text width=2.5cm,  align=center},
        annotation/.style={font=\footnotesize, text width=2cm, align=left,
        thick,
        minimum width=2.5cm,
        minimum height=2cm,
        font=\small,
        align=center
        }
    ]
   
        \node (gradU) [existing] {Policy Gradient for the square-value Function {\footnotesize \citep[Lemma 2]{l.a.VarianceconstrainedActorcriticAlgorithms2016}}};
        \node (lemma4b) [novel, right=of gradU] {Smoothness of square-value Function {\footnotesize(\Cref{lemma:Lipschitz})}};
        \node (lemma4a) [existing, below=of lemma4b] {Smoothness of value function \\ {\footnotesize \citep[Proposition 1]{xuImprovingSampleComplexity2021}}};
        \node (lemmaG1) [existing, left= of lemma4a] { Smoothness of state-action visitation distribution \\ {\footnotesize \citep[Lemma 3]{xuImprovingSampleComplexity2021}}};
        \node (lagrange) [novel, right=of lemma4b] {Smoothness of Lagrangian {\footnotesize(\Cref{lemma:LipschitzGradL})}};
        \node (SPSA) [existing, right= of lemma4a] {Gradient Estimation using Perturbation technique (SPSA) \\ \footnotesize{\citep{spallMultivariateStochasticApproximation1992}}};
        \coordinate (midpoint) at ($(lagrange)!0.5!(SPSA)$);
        \node (theorem) [final, right=2.5cm of midpoint] {Convergence to $\epsilon$-stationary point  {\footnotesize({\Cref{thm:actor}})}};
        \draw[-{Stealth[length=2mm]}, thick, dashed] (gradU) -- (lemma4b);
        \draw[-{Stealth[length=2mm]}, thick] (lemmaG1) -- (lemma4a);
        \draw[-{Stealth[length=2mm]}, thick, dashed] (lemma4a) -- (lagrange);
        \draw[-{Stealth[length=2mm]}, thick, dashed] (lemma4b) -- (lagrange);
        \draw[-{Stealth[length=2mm]}, thick, dashed] (lagrange) -- (theorem);
        \draw[-{Stealth[length=2mm]}, thick, dashed] (SPSA) -- (theorem);
        
    \end{tikzpicture}
    \caption{Logical dependency graph for proving \Cref{thm:actor}. Rectangular nodes (blue) represent established results from prior work, elliptical nodes (green) denote our novel contributions, and dashed lines illustrate the logical dependencies we establish to derive the final result (green circle).}
    \label{fig:proof-sketch}
\end{figure}
As visualized in \Cref{fig:proof-sketch}, the proof begins by establishing the smoothness of the policy gradient for the square-value function:  
\begin{equation}  
    \nabla U(\theta) \!=\! \tfrac{1}{1 - \disc^2} \big( \underbrace{ \!\textstyle \sum_{s,a}\! \tilde{\nu}_{\theta}(s, a) \nabla \log \pi_\theta(a|s) W_\theta(s, a)}_{T_1(\theta)} + 2\disc  \underbrace{\textstyle \!\sum_{s,a,s'}\! \tilde{\nu}_{\theta}(s, a) P(s' | s, a) \nabla V_{\theta}(s')}_{T_2(\theta)} \big).\label{eq:GradUmain}  
\end{equation}  

We decompose the expression in \eqref{eq:GradUmain} into \( T_1(\theta) \) and \( T_2(\theta) \).  \( T_1(\theta) \) consists of three terms: the state-action visitation distribution, the score function, and the square-value function.  
To derive a smoothness constant for \( T_1(\theta) \), we leverage the following:  
(i) the smoothness result for the state-action visitation distribution (\Cref{lemma:xu2021}), as stated in \citep[Lemma 3]{xuImprovingSampleComplexity2021};  
(ii) the boundedness and smoothness of the policy (\Cref{asm:scorefnandpolicy}). 

\( T_2(\theta) \) is the product of the state-action visitation distribution and the policy gradient of the value function.  
To establish the smoothness constant for \( T_2(\theta) \), we utilize the smoothness result for the value function from \citep[Proposition 1]{xuImprovingSampleComplexity2021}.

Combining the results for \( T_1(\theta) \) and \( T_2(\theta) \), we derive the smoothness constants for the square-value function.  
By decomposing the terms in the Lagrangian into the gradients of the value function and the square-value function and carefully bounding the gradient norms, we obtain the smoothness constant \( L \) in \eqref{eq:LforLagrangian} for the Lagrangian.

After establishing the smoothness of the Lagrangian, the remainder of the proof largely follows a standard SGD analysis framework \citep{ghadimiStochasticFirstZerothorder2013, kumarSampleComplexityActorcritic2023}. However, key modifications are necessary to accommodate SPSA-based gradient estimates, particularly in handling and optimizing the perturbation parameter \( p_t \) and the critic batch size \( m \).

As $\nabla L(\theta_{t})$ is $L$-Lipschitz (\Cref{lemma:LipschitzGradL}), we have $$L(\theta_{t+1}) \geq L(\theta_{t}) + \langle \nabla L(\theta_{t}), \theta_{t+1} - \theta_{t} \rangle - \frac{L \alpha_{t}^{2}}{2} \| \nabla \hat{L}(\theta_{t}) \|^{2}$$
In the above, $\nabla \hat{L}(\theta_{t})$ is an SPSA gradient estimate. 

Taking the expectation with respect to the sigma field \(\mathcal{F}_t = \sigma(\theta_k, k \leq t)\), denoted by \(\mathbb{E}_t\), we have
\begin{align*}
 \mathbb{E}_{t} [L(\theta_{t+1})] & \geq \mathbb{E}_{t} [L(\theta_{t})] + \alpha_{t} \mathbb{E}_{t} \left[ \| \nabla L(\theta_{t}) \|^{2} \right] \\ & \quad - \alpha_{t} K_{1} \left( 1 + \frac{2 \lambda R_{\text{max}}}{1 - \disc} \right) \underbrace{\left\| \mathbb{E}_{t} \left[ \nabla \hat{J}(\theta_{t}) - \nabla J(\theta_{t}) \right] \right\|}_{\text{(A)}} \\
& \quad - \lambda \alpha_{t} K_{1} \underbrace{\left\| \mathbb{E}_{t} \left[ \nabla \hat{U}(\theta_{t}) - \nabla U(\theta_{t}) \right] \right\| }_{\text{(B)}} - \frac{L}{2} \alpha_{t}^{2} \underbrace{\mathbb{E}_{t} \left[ \| \nabla \hat{L}(\theta_{t}) \|^{2} \right]}_{\text{(C)}}.
\end{align*}
Now, substituting the bounds for the biased SPSA gradient estimates---(A) from \eqref{eq:Bound(A)}, (B) from \eqref{eq:Bound(B)}, and (C) from \eqref{eq:Bound(C)}---into the above equation, we obtain
\begin{align*}
 \mathbb{E}_{t} [L(\theta_{t+1})] & \geq \mathbb{E}_{t} [L(\theta_t)] + \alpha_t \mathbb{E}_{t} \left[ \|\nabla L(\theta_t) \|^2 \right] \\ & \quad - \alpha_t K_1 \left( 1 + \frac{2 \lambda R_{\text{max}}}{1 - \disc} \right) \left( \frac{d^{\frac 32} L_J p_t}{2} + \frac{d^{\frac 12} \fivmax K_2}{p_t \sqrt{m}} \right) \\
& \quad - \lambda \alpha_t K_1 \left( \frac{d^{\frac 32} L_U p_t}{2} + \frac{d^{\frac 12} \fiumax K_2}{p_t \sqrt{m}} \right) - \frac{L \alpha_t^2}{2} \left( \frac{K_3}{p_t^2} \right).
\end{align*}
\noindent Summing from \( t = 1 \) to \( n \) and dividing both sides by \( n \), and setting \( \alpha_t = \alpha \) and \( p_t = p \), we get
\begin{align*}
\frac{1}{n} \sum_{t=1}^{n} \mathbb{E} \left[ \|\nabla L(\theta_t)\|^2 \right] 
& \leq \frac{C_1}{n \alpha} + C_2 p + \frac{C_3}{\sqrt{m} p} + \frac{C_4 \alpha}{p^2}.
\end{align*}
Setting \(\alpha = n^a, \, p = n^b, \, m = n^c\), we have
\begin{align*}
\mathbb{E} \left[ \|\nabla L(\theta_R)\|^2 \right] 
& \leq C_1 n^{-1-a} + C_2 n^b + C_3 n^{-b-c/2} + C_4 n^{a-2b}.
\end{align*}
Optimizing for \(a, b, c\), we find their values to be \(a = -\frac{3}{4}, \, b = -\frac{1}{4}, \, c = 1\). Substituting these values, we get
\begin{align*}
\mathbb{E} \left[ \|\nabla L(\theta_R)\|^2 \right] 
& \leq C_1 n^{-1/4} + C_2 n^{-1/4} + C_3 n^{-1/4} + C_4 n^{-1/4} \\
& = O(n^{-1/4}).
\end{align*}

\end{proof}
\section{Proofs for the claims in Section \ref{sec:actor}} \label{appendix: spsa-based-actor}

Before we prove the claims, we state a few useful supporting lemmas in the analysis. 

\begin{lemma}[Restatement of Lemma 3 \citep{xuImprovingSampleComplexity2021}]
\label{lemma:xu2021}
    Consider the initialization distribution $\eta(\cdot)$ and the transition kernel $\mathsf{P}(\cdot|s,a)$. Let $\eta(\cdot) = \zeta(\cdot)$ or $\eta(\cdot) = \mathsf{P}(\cdot|\hat{s},\hat{a})$ for any given $(\hat{s},\hat{a}) \in \mathcal{S} \times \mathcal{A}$. Denote $\nu_{\pi_\theta,\eta}(\cdot,\cdot)$ as the state-action visitation distribution of the MDP with policy $\pi_\theta$ and initialization distribution $\eta(\cdot)$. Suppose the Assumption holds. Then, we have
    \begin{flalign*}
        \norm{\nu_{\pi_{\theta_1},\eta}(\cdot,\cdot) - \nu_{\pi_{\theta_2},\eta}(\cdot,\cdot)}_{\text{TV}} 
        &\leq C_\nu \norm{\theta_1 - \theta_2}_{2},
    \end{flalign*}
    for all $\theta_1, \theta_2 \in \mathbb{R}^d$, where $C_\nu = C_\pi \left( 1 + \lceil \log_\rho \kappa^{-1} \rceil + \frac{1}{1-\rho} \right)$.
\end{lemma}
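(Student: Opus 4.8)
The plan is to reduce the claimed Lipschitz bound on the state-action visitation distribution to a Lipschitz bound on the stationary state marginal, and then to control the latter by telescoping the difference of powers of the two policy-induced transition matrices and invoking the geometric mixing property implied by \Cref{asm:stationary}. Write $\mathbf{P}_\theta$ for the state-transition matrix induced by $\pi_\theta$, i.e. $\mathbf{P}_\theta(s,s')=\sum_a\pi_\theta(a|s)\mathbb{P}(s'|s,a)$. The first thing I would do is transfer the policy smoothness in \Cref{asm:scorefnandpolicy}(iii) to the transition matrix: since the MDP kernel $\mathbb{P}(\cdot|s,a)$ does not depend on $\theta$, a direct computation gives, for every $s$, $\|\mathbf{P}_{\theta_1}(s,\cdot)-\mathbf{P}_{\theta_2}(s,\cdot)\|_{\mathrm{TV}}\le\|\pi_{\theta_1}(\cdot|s)-\pi_{\theta_2}(\cdot|s)\|_{\mathrm{TV}}\le C_\pi\|\theta_1-\theta_2\|_2$.

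Next I would compare the time-$t$ state marginals $d_t^{\theta_i}=\eta^\top\mathbf{P}_{\theta_i}^t$ through the telescoping identity
\begin{equation*}
\mathbf{P}_{\theta_1}^t-\mathbf{P}_{\theta_2}^t=\sum_{k=0}^{t-1}\mathbf{P}_{\theta_1}^{k}\,(\mathbf{P}_{\theta_1}-\mathbf{P}_{\theta_2})\,\mathbf{P}_{\theta_2}^{t-1-k}.
\end{equation*}
For each $k$, the row vector $\mu_k:=\eta^\top\mathbf{P}_{\theta_1}^{k}$ is a probability distribution, so $\mu_k(\mathbf{P}_{\theta_1}-\mathbf{P}_{\theta_2})$ is a signed measure of total mass zero whose TV norm is at most $\sum_s\mu_k(s)\|\mathbf{P}_{\theta_1}(s,\cdot)-\mathbf{P}_{\theta_2}(s,\cdot)\|_{\mathrm{TV}}\le C_\pi\|\theta_1-\theta_2\|_2$ by the first step. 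The decisive estimate is then the propagation of this zero-mass perturbation through $\mathbf{P}_{\theta_2}^{m}$ with $m=t-1-k$: decomposing the signed measure into its positive and negative parts (each, after normalization, a probability law that converges to the stationary distribution $\chi_{\theta_2}$ at rate $\kappa\rho^m$) yields a contraction factor of order $\min\{1,2\kappa\rho^m\}$. This gives $\|d_t^{\theta_1}-d_t^{\theta_2}\|_{\mathrm{TV}}\le C_\pi\|\theta_1-\theta_2\|_2\sum_{m=0}^{t-1}\min\{1,2\kappa\rho^m\}$.

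I would then sum the series and let $t\to\infty$ to reach the stationary state distributions: since the summand equals $1$ only while $m\lesssim\log_\rho\kappa^{-1}$ and decays geometrically afterwards, $\sum_{m\ge0}\min\{1,2\kappa\rho^m\}\le\lceil\log_\rho\kappa^{-1}\rceil+(1-\rho)^{-1}$ up to the choice of split point, whence $\|\chi_{\theta_1}-\chi_{\theta_2}\|_{\mathrm{TV}}\le C_\pi\bigl(\lceil\log_\rho\kappa^{-1}\rceil+(1-\rho)^{-1}\bigr)\|\theta_1-\theta_2\|_2$. Finally I would lift from the state marginal to the joint visitation $\nu_\theta(s,a)=\chi_\theta(s)\pi_\theta(a|s)$ via the add-and-subtract splitting
\begin{equation*}
\|\nu_{\theta_1}-\nu_{\theta_2}\|_{\mathrm{TV}}\le\|\chi_{\theta_1}-\chi_{\theta_2}\|_{\mathrm{TV}}+\textstyle\sum_s\chi_{\theta_2}(s)\,\|\pi_{\theta_1}(\cdot|s)-\pi_{\theta_2}(\cdot|s)\|_{\mathrm{TV}},
\end{equation*}
whose second term is at most $C_\pi\|\theta_1-\theta_2\|_2$ by \Cref{asm:scorefnandpolicy}(iii); this contributes the extra ``$1$'' and produces $C_\nu=C_\pi\bigl(1+\lceil\log_\rho\kappa^{-1}\rceil+(1-\rho)^{-1}\bigr)$.

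The hard part will be the propagation estimate in the second step: I must simultaneously use that the undiscounted Markov operator is non-expansive in TV on signed measures (to keep each term bounded by $1$) and extract geometric decay from the mixing assumption (to bound it by $2\kappa\rho^m$), and this tension is exactly what forces the split of the time sum at the mixing time $\lceil\log_\rho\kappa^{-1}\rceil$ and dictates the shape of $C_\nu$. A secondary but necessary point is the bookkeeping that makes the $\eta$-dependence disappear in the $t\to\infty$ limit, i.e. justifying that the relevant visitation measure is the stationary/ergodic one regardless of whether $\eta=\zeta$ or $\eta=\mathbb{P}(\cdot|\hat s,\hat a)$.
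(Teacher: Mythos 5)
This lemma is not proved in the paper at all: it is imported verbatim, with citation, from \citet{xuImprovingSampleComplexity2021} (their Lemma~3) and used as a black box inside the proof of \Cref{lemma:Lipschitz}. Your reconstruction is, in substance, exactly the proof of that cited result (a Mitrophanov-style perturbation bound for uniformly ergodic chains): the kernel-level TV bound via \Cref{asm:scorefnandpolicy}(iii), the telescoping of $\mathbf{P}_{\theta_1}^t-\mathbf{P}_{\theta_2}^t$, the $\min\{1,2\kappa\rho^m\}$ propagation estimate combining TV non-expansiveness with the geometric mixing, the split of the time sum at $\lceil\log_\rho\kappa^{-1}\rceil$, and the add--subtract lifting from the state marginal to the joint state--action law that contributes the extra ``$1$'' in $C_\nu$; the factor-$2$ slack you flag only shifts the split point and is absorbed in how the constant is stated. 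The one point you should tighten is your final reduction to the stationary distribution: as used in this paper, $\nu_{\pi_\theta,\eta}$ is the \emph{discounted} visitation measure (the $\gamma^2$-discounted $\tilde\nu_\theta$ in \eqref{eq:GradUmain1}), i.e. a mixture proportional to $\sum_t \gamma^{2t}\, d_t^{\theta}\otimes\pi_\theta$ that genuinely depends on $\eta$, so letting $t\to\infty$ is not the right endgame. No new idea is needed, however: your intermediate estimate $\|d_t^{\theta_1}-d_t^{\theta_2}\|_{\mathrm{TV}}\le C_\pi\|\theta_1-\theta_2\|_2\sum_{m=0}^{t-1}\min\{1,2\kappa\rho^m\}$ is uniform in $t$, so averaging over $t$ with the discount weights instead of passing to the limit yields the same constant, and the lifting step goes through unchanged.
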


\subsection{Proof of Lemma \ref{lemma:Lipschitz}}
\begin{proof}
The first claim concerning the smoothness of $J(\cdot)$
can be inferred from \citep[Proposition 1]{xuImprovingSampleComplexity2021}.

We prove the smoothness of the square-value function below.

From \citep[Lemma 1]{l.a.VarianceconstrainedActorcriticAlgorithms2016}, we have 
\begin{align}
    \nabla U(\theta) = \frac{1}{1 - \disc^2} \Bigg( & \underbrace{ \sum_{s,a} \tilde{\nu}_{\theta}(s, a) \nabla \log \pi_\theta(a|s) W_\theta(s, a)}_{T_1(\theta)} + 2\disc  \underbrace{\sum_{s,a,s'} \tilde{\nu}_{\theta}(s, a) P(s' | s, a) \nabla V_{\theta}(s')}_{T_2(\theta)} \Bigg), \label{eq:GradU}
\end{align}
where
$$W_{\theta}(s,a) = \mathbb{E} \left[ \left( \sum_{k=0}^{\infty} \disc^{k} r_{t+k} \right)^2 \; \middle| \; s_{t} = s, a_{t} = a \right]$$ 
and $\tilde{\nu}_{\theta}(s, a) = (1 - \disc^2) \sum_{t=0}^{\infty} \disc^{2t} \mathbb{P}(s_t = s, a_t = a)$ is the $\disc^2$-discounted state-action visitation distribution, with
$\mathbb{P}(s_t = s, a_t = a) = \mathbb{P}(s_t = s | s_0 = s) \pi_{\theta}(a | s) $. \begin{align*}
    \|\nabla U(\theta_1)-\nabla U(\theta_2)\|_2 \leq \frac{1}{1-\disc^2} \p{\pnorm{T_1(\theta_1)-T_1(\theta_2)}+ 2\disc \pnorm{T_2(\theta_1)-T_2(\theta_2)}} \numberthis \label{eq:GradU-T1T2}
\end{align*}
We now show that $T_1(\theta)$, defined in \eqref{eq:GradU} is Lipschitz in $\theta$.
\begin{align*}
& \|T_1(\theta_1) - T_1(\theta_2)\|_2 \\ & = \Bigg\| \sum_{s,a} \underbrace{\tilde{\nu}_{\theta_1}(s, a)}_{a_{1}} \underbrace{\nabla \log \pi_{\theta_1}(a|s)}_{b_{1}} \underbrace{W_{\pi_{\theta_1}}(s, a)}_{c_{1}}  - \sum_{s,a} \underbrace{\tilde{\nu}_{\theta_2}(s, a)}_{a_{2}} \underbrace{\nabla \log \pi_{\theta_2}(a|s)}_{b_{2}}\underbrace{ W_{\pi_{\theta_2}}(s, a)}_{c_{2}} \Bigg\|_2 \\
&= \left\|\sum_{s,a} (a_1 b_1 c_1 - a_2 b_2 c_2)\right\| \\
&= \left\|\sum_{s,a} a_1 b_1 c_1 - a_2 b_2 c_2 + a_2 b_2 c_1 - a_2 b_2 c_1\right\|\\
&= \left\|\sum_{s,a} c_1(a_1 b_1 - a_2 b_2) + a_2 b_2(c_1 - c_2)\right\| \\
&= \left\|\sum_{s,a} c_1(a_1 b_1 - a_2 b_2 + a_1 b_2 - a_1 b_2) + a_2 b_2(c_1 - c_2)\right\|\\
&= \left\|\sum_{s,a} c_1(a_1(b_1 - b_2) + b_2(a_1 - a_2)) + a_2 b_2(c_1 - c_2)\right\| \\
&\leq \sum_{s,a} \Big| W_{\theta_1}(s,a) \Big| \Big| \tilde{\nu}_{\theta_1}(s,a) \Big| \|\nabla \log \pi_{\theta_1}(a|s) - \nabla \log \pi_{\theta_2}(a|s)\|_2 \\
&\quad + \sum_{s,a} \Big| W_{\theta_1}(s,a) \Big| \|\nabla \log \pi_{\theta_1}(a|s)\|_2 \Big| \tilde{\nu}_{\theta_1}(s,a) - \tilde{\nu}_{\theta_2}(s,a) \Big| \\
&\quad + \sum_{s,a} \tilde{\nu}_{\theta_2}(s,a) \|\nabla \log \pi_{\theta_2}(a|s)\|_2 \Big| W_{\theta_1}(s,a) - W_{\theta_2}(s,a) \Big| \\ 
& \stackrel{(a)}{\leq} \frac{R_{\text{max}}}{(1 - \disc)^{2}} \sum_{s,a} \| \nabla \log \pi_{\theta_{1}}(a|s) - \nabla \log \pi_{\theta_{2}}(a|s) \|_{2} + \frac{C_{\psi} R_{\text{max}}}{(1-\disc)^2} \sum_{s,a} | \tilde{\nu}_{\theta_{1}}(s,a) - \tilde{\nu}_{\theta_{2}}(s,a) | \\
& \quad + C_{\psi} \sum_{s,a} \Big |W_{\theta_{1}}(s,a) - W_{\theta_{2}}(s,a) \Big| \tilde{\nu}_{\theta_{2}}(s,a) \\
& \stackrel{(b)}{\leq} \frac{R_{\text{max}} L_{\psi}}{(1 - \disc)^{2}} \| \theta_{1} - \theta_{2} \|_{2}+ \frac{2 R_{\text{max}}  C_{\psi}  C_{\nu}}{(1-\disc)^2} \| \theta_{1} - \theta_{2} \|_{2} + C_{\psi} \sum_{s,a} | W_{\theta_{1}}(s,a) - W_{\theta_{2}}(s,a) | \tilde{\nu}_{\theta_{2}}(s,a) \\ 
& \stackrel{(c)}{\leq}  \frac{R_{\text{max}} L_{\psi}}{(1 - \disc)^{2}} \| \theta_{1} - \theta_{2} \|_{2}  +  \frac{2 R_{\text{max}} C_{\psi} C_{v}}{(1 - \disc)^{2}} \| \theta_{1} - \theta_{2} \|_{2} + \frac{2R_{\text{max}} C_{\psi} C_{v}}{(1 - \disc)^{2}} \| \theta_{1} - \theta_{2} \|_{2} \\
& \leq  \frac{R_{\text{max}} L_{\psi}}{(1 - \disc)^{2}} \| \theta_{1} - \theta_{2} \|_{2}  +  \frac{4R_{\text{max}} C_{\psi} C_{v}}{(1 - \disc)^{2}} \| \theta_{1} - \theta_{2} \|_{2}, \numberthis \label{eq:T1-lipschitz}
\end{align*}
where (a) follows by $| W_{\theta}(s,a)||\tilde{\nu}_{\theta_{1}}(s,a)| \leq \frac{R_{\text{max}}}{(1 - \disc)^{2}} \; \text{for any } \theta \in \R^d$ and by the upper bound $C_\psi$ on the score function, see \Cref{asm:scorefnandpolicy}; (b) follows by smoothness of the policy (\Cref{asm:scorefnandpolicy}) and $C_{\nu}$- Lipschitzness of $\tilde{\nu}(s,a)$  \citep[see][Lemma 3]{xuImprovingSampleComplexity2021}; (c) follows by employing similar arguments for the square-value function, in place of the value function in \citep[Lemma 4]{xuImprovingSampleComplexity2021}, as outlined below:
\begin{align*}
C_{\psi} \sum_{s,a} | W^{\pi}_{\theta_{1}}(s,a) - W^{\pi}_{\theta_{2}}(s,a) | \tilde{\nu}_{\theta_{2}}(s,a) & \leq C_{\psi} \frac{R_{\text{max}}}{(1 - \disc)^{2}} \| P^{\pi}_{\theta_{1}}(\cdot, \cdot) - P^{\pi}_{\theta_{2}}(\cdot, \cdot) \|_{TV}  \\
&  \leq \frac{2R_{\text{max}} C_{\psi} C_{v}}{(1 - \disc)^{2}} \| \theta_{1} - \theta_{2} \|_{2}. 
\end{align*}
Next, we obtain the Lipschitz constant for 
$T_{2}(\theta) =\sum_{s,a,s'} \tilde{\nu}_{\theta}(s,a) P(s'|s,a) \nabla V_{\theta}(s')$ below. The Lipschitzness of $T_2(\theta)$ together with that of $T_1(\theta)$ would lead to smoothness of $U(\cdot)$, from  \eqref{eq:GradU}.
\begin{align*}
&  \| T_{2}(\theta_{1}) - T_{2}(\theta_{2}) \|_{2}  \\ & \leq \Bigg\|\sum_{s,a,s'} \tilde{\nu}_{\theta_{1}}(s,a) P(s'|s,a) \nabla V_{\theta_{1}}(s') - \sum_{s,a,s'} \tilde{\nu}_{\theta_{2}}(s,a) P(s'|s,a) \nabla V_{\theta_{2}}(s')\Bigg\| \\
& \leq \Bigg\|\sum_{s,a,s'} \tilde{\nu}_{\theta_{1}}(s,a) P(s'|s,a) \nabla V_{\theta_{1}}(s') - \sum_{s,a,s'} \tilde{\nu}_{\theta_{2}}(s,a) P(s'|s,a) \nabla V_{\theta_{2}}(s') \\
& \qquad + \sum_{s,a,s'} \tilde{\nu}_{\theta_{2}}(s,a) P(s'|s,a) \nabla V_{\theta_{1}}(s') - \sum_{s,a,s'} \tilde{\nu}_{\theta_{2}}(s,a) P(s'|s,a) \nabla V_{\theta_{2}}(s') \Bigg\|  \\
& {\leq} \sum_{s,a,s'} P(s'|s,a) \| \nabla V_{\theta_{1}}(s') \|_{2} \| \tilde{\nu}_{\theta_{1}}(s,a) - \tilde{\nu}_{\theta_{2}}(s,a) \| \\
& \quad + \sum_{s,a,s'} P(s'|s,a) \tilde{\nu}_{\theta_{2}}(s,a) \| \nabla V_{\theta_{1}}(s') - \nabla V_{\theta_{2}}(s') \|_{2} \\
& \stackrel{(a)}{\leq} \frac{2 R_{\text{max}} C_{\psi}}{(1 - \disc)^{2}} \sum_{s,a} \| \tilde{\nu}_{\theta_{1}}(s,a) - \tilde{\nu}_{\theta_{2}}(s,a) \| + \sum_{s,a,s'} P(s'|s,a) \tilde{\nu}_{\theta_{2}}(s,a) \| \nabla V_{\theta_{1}}(s') - \nabla V_{\theta_{2}}(s') \|_{2} \\
& \stackrel{(b)}{\leq} \frac{2 R_{\text{max}} C_{\psi} C_{\nu}}{(1 - \disc)^{2}} \| \theta_{1} - \theta_{2} \|_{2} + 2 L_{J} \| \theta_{1} - \theta_{2} \|_{2} \numberthis \label{eq:T2-lipschitz}
\end{align*}
where (a) follows by $ P(s'|s,a) \| \nabla V_{\theta}(s') \|_{2} \leq \frac{R_{\text{max}} C_{\psi}}{(1 - \disc)^{2}}$; (b) follows from applying \citep[Lemma 3]{xuImprovingSampleComplexity2021}, where $C_\nu=(1/2)C_\pi\left( 1 +  \lceil \log_\rho \kappa^{-1} \rceil + (1-\rho)^{-1} \right)$.

Combining  $T_{1}$ and $T_{2}$ into \eqref{eq:GradU-T1T2},
\begin{align*}
& \| \nabla U(\theta_{1}) - \nabla U(\theta_{2}) \|  \leq L_{U} \| \theta_{1} - \theta_{2} \|_{2}, \text{ where} \\
 & L_{U}  = \frac{1}{1 - \disc^{2}} \left( \frac{R_{\text{max}} L_{\psi}}{(1 - \disc)^{2}} + \frac{4 R_{\text{max}} C_{\psi} C_{v}}{(1 - \disc)^{2}} + \frac{4 \disc R_{\text{max}} C_{\psi} C_{v} + 4 \disc L_{J}}{(1 - \disc)^{2}} \right).
\end{align*}
\end{proof}
\subsection{Proof of Lemma \ref{lemma:LipschitzGradL}}
\begin{proof}
Notice that
\begin{samepage}
    \begin{align*}
    & \pnorm{\nabla L(\theta_1) - \nabla L(\theta_2)} \leq \pnorm{\nabla J(\theta_1) - \nabla J(\theta_2)} + \lambda \pnorm{\nabla U(\theta_1) - \nabla U(\theta_2)} \\
    & \phantom{\hskip 100pt} + 2 \lambda \pnorm{J(\theta_1) \nabla J(\theta_1) - J(\theta_2) \nabla J(\theta_2)} \\
    & \stackrel{(a)}{\leq} L_J\pnorm{\theta_1 - \theta_2} + \lambda L_U\pnorm{\theta_1 -\theta_2} + 2 \lambda \underbrace{\pnorm{J(\theta_1) \nabla J(\theta_1) - J(\theta_2) \nabla J(\theta_2)}}_{\text{(I)}}, \numberthis
    \label{eq:LagragianLipschitz}
    \end{align*}
where (a) follows from \Cref{lemma:Lipschitz}. 
\end{samepage}
We bound (I) as follows: 
\begin{samepage}
\begin{align*}
& \| J(\theta_1) \nabla J(\theta_1) - J(\theta_2) \nabla J(\theta_2)\|_2 \\ & = \| J(\theta_1) \nabla J(\theta_1) - J(\theta_1) \nabla J(\theta_2) + J(\theta_1) \nabla J(\theta_2) - J(\theta_2) \nabla J(\theta_2) \|_2 \\
&\leq |J(\theta_1)| \cdot \| \nabla J(\theta_1) - \nabla J(\theta_2)\|_2 + \|\nabla J(\theta_2)\|_2 \cdot |J(\theta_1) - J(\theta_2)| \\
&\stackrel{(i)}{\leq} \frac{R_{\text{max}} L_J}{1-\disc} \|\theta_1 - \theta_2\|_{2} + \|\nabla J(\theta_2)\|_2 \cdot |J(\theta_1) - J(\theta_2)| \\
&\stackrel{(ii)}{\leq} \frac{R_{\text{max}} L_J}{1-\disc} \|\theta_1 - \theta_2\|_{2} + \frac{R_{\text{max}} C_\psi}{(1-\disc)^2} |J(\theta_1) - J(\theta_2)| \\
&\leq \frac{R_{\text{max}} L_J}{1-\disc} \|\theta_1 - \theta_2\|_{2} +\frac{R_{\text{max}} C_\psi}{(1-\disc)^2} \|\theta_1 - \theta_2\|_2, \label{eq:Term(I)} \numberthis 
\end{align*}
where (i) follows from 
$ \left| J(\theta) \right| \leq \frac{R_{\text{max}}}{1 - \disc}$; (ii) follows from $\|\nabla J(\theta)\|_2 \leq \frac{R_{\text{max}} C_\psi}{(1-\disc)^2} \; \text{ for any } \theta \in~\R^d$, we arrive at this by Policy Gradient Theorem \citep{suttonPolicyGradientMethods1999}, \Cref{asm:scorefnandpolicy} and $ \left| Q_{\pi_\theta}(s,a) \right| \leq~\frac{R_{\text{max}}}{1 - \disc}$;
\eqref{eq:Term(I)} follows from the first order Taylor expansion at $\theta_1$,  mean-value theorem $\exists \; \tilde{\theta} = \lambda \theta_1 \!+\! (1 - \lambda) \theta_2, \ \text{for some } \lambda \in [0, 1]$. \end{samepage}
\[J(\theta_1) = J(\theta_2) + \nabla J(\tilde{\theta})^\top (\theta_1 - \theta_2) \implies |J(\theta_1) - J(\theta_2)| \leq \frac{R_{\text{max}} C_\psi}{(1-\disc)^2} \|\theta_1 - \theta_2\|_2.\]
Now, substituting  \eqref{eq:Term(I)} in \eqref{eq:LagragianLipschitz}, we obtain
\begin{align*}
& \| \nabla L(\theta_{1}) - \nabla L(\theta_{2}) \| \leq \| \nabla J(\theta_{1}) - \nabla J(\theta_{2}) \|+ 2 \lambda \| J(\theta_{1}) \nabla J(\theta_{1}) - J(\theta_{2}) \nabla J(\theta_{2}) \| \\ & \phantom{\hskip 100pt}  + \lambda \| \nabla U(\theta_{1}) - \nabla U(\theta_{2}) \| \\
& \leq L_{J} \| \theta_{1} - \theta_{2} \|_{2} + 2 \lambda \left( \frac{R_{\text{max}} L_{J}}{1 - \disc} + \frac{R_{\text{max}} C_{\psi}}{(1 - \disc)^{2}} \right) \| \theta_{1} - \theta_{2} \|_{2}+ \lambda L_{U} \| \theta_{1} - \theta_{2} \|_{2} \\
& \leq \left( L_{J} + 2 \lambda \left( \frac{R_{\text{max}} L_{J}}{1 - \disc} + \frac{R_{\text{max}} C_{\psi}}{(1 - \disc)^{2}} \right) + \lambda L_{U} \right) \| \theta_{1} - \theta_{2} \|_{2} \\
& \leq L_{o} \| \theta_{1} - \theta_{2} \|_{2}
\end{align*}
Hence, the gradient of the Lagrangian is \( L_{o} \)-Lipschitz with the Lipschitz constant given by  
\begin{equation*}
    L_{o}= L_{J} + 2 \lambda \left( \frac{R_{\text{max}} L_{J}}{1 - \gamma} + \frac{R_{\text{max}} C_{\psi}}{(1 - \gamma)^{2}}\right) + \lambda L_{U}.
\end{equation*}
\end{proof} 
\subsection{Proof of Theorem \ref{thm:actor}}
\begin{proof}
    Notice that as $\nabla L(\theta_{t})$ is $L$-Lipschitz (\Cref{lemma:LipschitzGradL}), we have $$L(\theta_{t+1}) \geq L(\theta_{t}) + \langle \nabla L(\theta_{t}), \theta_{t+1} - \theta_{t} \rangle - \frac{L \alpha_{t}^{2}}{2} \| \nabla \hat{L}(\theta_{t}) \|^{2}$$
Taking expectation w.r.t the sigma field $\F_t= \sigma(\theta_{k},k\le t)$, denoted by $\E_{t}$

\begin{align*}
& \mathbb{E}_{t} [L(\theta_{t+1})] \geq \mathbb{E}_{t} [L(\theta_{t})] + \mathbb{E}_{t} \left[ \left \langle \nabla L(\theta_{t}), \alpha_{t} \nabla L(\theta_{t}) + \alpha_{t} \left( \nabla \hat{L}(\theta_{t}) -\nabla L(\theta_{t}) \right) \right \rangle \right] \\ & \phantom{\hskip 60pt} - \mathbb{E}_{t} \left[ \frac{L}{2} \alpha_{t}^{2} \| \nabla \hat{L}(\theta_{t}) \|^{2} \right] \\
& = \mathbb{E}_{t} [L(\theta_{t})] + \alpha_{t} \mathbb{E}_{t} \left[ \| \nabla L(\theta_{t}) \|^{2} \right] + \alpha_{t} \mathbb{E}_{t} \left[ \nabla L(\theta_{t})^{\top} \left( \nabla \hat{L}(\theta_{t})-\nabla L(\theta_{t}) \right) \right]\\ & \quad - \mathbb{E}_{t} \left[ \frac{L}{2} \alpha_{t}^{2} \| \nabla \hat{L}(\theta_{t}) \|^{2} \right] \\
&\geq \mathbb{E}_{t} [L(\theta_{t})] + \alpha_{t} \mathbb{E}_{t} \left[ \| \nabla L(\theta_{t}) \|^{2} \right] - \alpha_{t} \left| \mathbb{E}_{t} \left[ \nabla L(\theta_{t})^{\top} \left( \nabla \hat{L}(\theta_{t})-\nabla L(\theta_{t}) \right) \right] \right| \\ & \quad - \mathbb{E}_{t} \left[ \frac{L}{2} \alpha_{t}^{2} \| \nabla \hat{L}(\theta_{t}) \|^{2} \right] \\
& \stackrel{(i)}{\geq} \mathbb{E}_{t} [L(\theta_{t})] + \alpha_{t} \mathbb{E}_{t} \left[ \| \nabla L(\theta_{t}) \|^{2} \right] - \alpha_{t}  \norm{\nabla L(\theta_{t})} \norm{ \mathbb{E}_{t} \left[ \nabla \hat{L}(\theta_{t}) - \nabla L(\theta_{t}) \right]}  \\ & \quad - \mathbb{E}_{t} \left[ \frac{L}{2} \alpha_{t}^{2} \| \nabla \hat{L}(\theta_{t}) \|^{2} \right] \\
& \stackrel{(ii)}{\geq} \mathbb{E}_{t} [L(\theta_{t})] + \alpha_{t} \mathbb{E}_{t} \left[ \| \nabla L(\theta_{t}) \|^{2} \right] - \alpha_{t} K_{1} \norm{ \mathbb{E}_{t} \left[ \nabla \hat{L}(\theta_{t}) - \nabla L(\theta_{t}) \right]} \\ & \quad - \frac{L}{2} \alpha_{t}^{2} \mathbb{E}_{t} \left[ \| \nabla \hat{L}(\theta_{t}) \|^{2} \right] \\
& \stackrel{(iii)}{\geq} \mathbb{E}_{t} [L(\theta_{t})] + \alpha_{t} \mathbb{E}_{t} \left[ \| \nabla L(\theta_{t}) \|^{2} \right] - \alpha_{t} K_{1} \norm{\mathbb{E}_{t} \left[  \nabla \hat{J}(\theta_{t}) - \nabla J(\theta_{t})  \right]} \\
& \quad -  \lambda \alpha_{t} K_{1}   \norm{\mathbb{E}_{t} \left[ \nabla \hat{U}(\theta_{t}) - \nabla U(\theta_{t})  \right]}  - 2 \lambda \alpha_{t} K_{1} \norm{\mathbb{E}_{t} \left[  J(\theta_{t}) \nabla J(\theta_{t}) - \hat{{J}}(\theta_{t}) \nabla \hat{J}(\theta_{t}) \right]} \\
& \quad  - \frac{L}{2} \alpha_{t}^{2} \mathbb{E}_{t} \left[ \| \nabla \hat{L}(\theta_{t}) \|^{2} \right] \\ 
& \stackrel{(iv)}{\geq} \mathbb{E}_{t} [L(\theta_{t})] + \alpha_{t} \mathbb{E}_{t} \left[ \| \nabla L(\theta_{t}) \|^{2} \right]  - \alpha_{t} K_{1}  \norm{\mathbb{E}_{t} \left[ \nabla \hat{J}(\theta_{t}) - \nabla J(\theta_{t})  \right]}  \\
& \quad - \lambda \alpha_{t} K_{1}   \left\| \mathbb{E}_{t}\left[ \nabla \hat{U}(\theta_{t}) - \nabla U(\theta_{t})  \right] \right\|  \\ & \quad - 2 \alpha_{t} K_{1} \lambda  \left\| \mathbb{E}_{t} \left[ J(\theta_t) \nabla J(\theta_t)-J(\theta_t) \nabla \hat{J}(\theta_{t})+J(\theta_t) \nabla \hat{J}(\theta_{t})-\hat{J}(\theta_t) \nabla \hat{J}(\theta_{t}) \right] \right\|  \\
& \quad  - \frac{L}{2} \alpha_{t}^{2} \mathbb{E}_{t} \left[ \| \nabla \hat{L}(\theta_{t}) \|^{2} \right] \\
& \stackrel{(v)}{\geq} \mathbb{E}_{t} [L(\theta_{t})] + \alpha_{t} \mathbb{E}_{t} \left[ \| \nabla L(\theta_{t}) \|^{2} \right]  - \alpha_{t} K_{1}   \left\|\mathbb{E}_{t} \left[\nabla \hat{J}(\theta_{t}) - \nabla J(\theta_{t})  \right] \right\| \\
& \quad - \lambda \alpha_{t} K_{1}   \left\| \mathbb{E}_{t} \left[ \nabla \hat{U}(\theta_{t}) - \nabla U(\theta_{t})\right]  \right\|  \\ & \quad - 2 \alpha_{t} K_{1} \lambda  \left\| \mathbb{E}_{t} \left[ J(\theta_t) \p{\nabla \hat{J}(\theta_t)-\nabla J(\theta_t)} \right] \right\| - 2 \alpha_{t} K_{1} \lambda  \left\| \mathbb{E}_{t} \left[ \nabla \hat{J}(\theta_t) \p{J(\theta_t)-\hat{J}(\theta_t)} \right] \right\| \\
& \quad  - \frac{L}{2} \alpha_{t}^{2} \mathbb{E}_{t} \left[ \| \nabla \hat{L}(\theta_{t}) \|^{2} \right] \\
& \stackrel{(vi)}{\geq} \mathbb{E}_{t} [L(\theta_{t})] + \alpha_{t} \mathbb{E}_{t} \left[ \| \nabla L(\theta_{t}) \|^{2} \right] - \alpha_{t} K_{1} \left( 1 + \frac{2 \lambda R_{\text{max}}}{1 - \disc} \right) \underbrace{ \left\|  \mathbb{E} _{t}\left[ \nabla \hat{J}(\theta_{t}) - \nabla J(\theta_{t}) \right] \right\|}_{\text{(A)}} \\
& \quad - \lambda \alpha_{t} K_{1} \underbrace{ \left\|\mathbb{E}_{t}  \left[\nabla \hat{U}(\theta_{t}) - \nabla U(\theta_{t}) \right] \right\| }_{\text{(B)}} - \frac{L}{2} \alpha_{t}^{2} \underbrace{\mathbb{E}_{t} \left[ \| \nabla \hat{L}(\theta_{t}) \|^{2} \right]}_{\text{(C)}} \\ & \quad - \alpha_{t} K_{1} \left( \frac{2 \lambda \sqrt{d} R_{\text{max}}}{(1 - \disc)p_t} \right) \underbrace{  \left| \mathbb{E}_{t} \left[\hat{J}(\theta_{t}) - J(\theta_{t}) \right] \right| }_{\text{(D)}}, \numberthis \label{eq:SGD-ABC}
\end{align*}
where (i) follows by applying the Cauchy–Schwarz inequality to the modulus of the inner product; 
(ii) follows from the uniform upper bound \(\norm{\nabla L(\theta_t)} \leq K_{1}\), which we establish below; 
(iii) follows from substituting  
\[
\nabla L(\theta) = -\nabla J(\theta) + \lambda(\nabla U(\theta) - 2 J(\theta) \nabla J(\theta));
\]  
(iv) follows from adding and subtracting the cross term \(J(\theta_t)\nabla \hat{J}(\theta_t)\);  
(v) follows from the triangle inequality; and  
(vi) follows from the bound \(|J(\theta_t)|\leq \frac{R_{\max}}{1-\gamma}\) and   
\(
\lVert \nabla \hat{J}(\theta_t)\rVert \leq \frac{2 \sqrt{d} R_{\max}}{1-\gamma},
\)
which holds as a consequence of the definition of the SPSA gradient estimate,  
\[
\nabla \hat{J}(\theta) = \frac{\hat{J}(\theta_t + p_t \Delta_t) - \hat{J}(\theta_t)}{p_t \Delta_t}.
\]

Before we derive upper bounds for (A), (B), (C), and (D) in \eqref{eq:SGD-ABC}, 
we first establish the bound \(\norm{\nabla L(\theta_{t})}_{2} \leq K_{1}\), 
which is used in (ii), as follows: 

By Policy Gradient Theorem \citep{suttonPolicyGradientMethods1999}, we have
$$\nabla J(\theta) = \frac{1}{1 - \disc} \mathbb{E}_{(s,a) \sim \chi_\theta(\cdot,\cdot)} \left[ \nabla \log \pi_\theta(a|s) Q_{\pi_\theta}(s,a) \right],
$$ where \[
Q_{\pi_\theta}(s,a) = \mathbb{E} \left[ \sum_{t=0}^{\infty} \disc^t r(s_t,a_t) \mid s_0 = s, a_0 = a \right].
\]
We upper bound the action-value function as  
\(
|Q_{\pi_\theta}(s,a)| \leq \frac{R_{\text{max}}}{1 - \gamma}.
\) 
Furthermore, by \Cref{asm:scorefnandpolicy}, the score function satisfies  
\(
\|\nabla \log \pi_\theta(a | s)\|_2 \leq C_\psi.
\)
Thus, we obtain  
\begin{equation} \label{eq:boundonJ}  
  \| \nabla J(\theta)\|_2 \leq \frac{R_{\text{max}} C_\psi}{(1 - \gamma)^2}, \quad \forall \theta \in \mathbb{R}^d.  
\end{equation}
In the same manner, we use \eqref{eq:GradU}, which is a policy gradient-style theorem for the square-value function from \citep[Lemma~1]{l.a.VarianceconstrainedActorcriticAlgorithms2016}, to upper bound the norm of the square-value function below. \(W_{\pi_{\theta}}(s,a)\) is the action-value function corresponding to the square-value function, 
i.e.,  
\(
U(\theta) = \E_{a\sim\pi_{\theta}}[W_{\pi_{\theta}}(s,a)],
\)  
similar to \(Q_{\pi_{\theta}}(s,a)\). 
\begin{align*}
& \|\nabla U(\theta)\|_2 
\\ & = \frac{1}{1 - \disc^2} \left\| \sum_{s,a} \tilde{\nu}_{\pi_\theta}(s,a) \nabla \log \pi_\theta(a|s) W_{\pi_\theta}(s,a) + 2 \disc \sum_{s,a,s'} \tilde{\nu}_{\pi_\theta}(s,a) P(s'|s,a) \nabla V_{\pi_\theta}(s') \right\| \\
&\leq \frac{1}{1 - \disc^2} \sum_{s,a} \|\tilde{\nu}_{\pi_\theta}(s,a) \nabla \log \pi_\theta(a|s)\| \, |W_{\pi_\theta}(s,a)| \\ & \quad + \frac{2 \disc}{1 - \disc^2} \sum_{s,a,s'} \|\tilde{\nu}_{\pi_\theta}(s,a)\| |P(s'|s,a)| \|\nabla V_{\pi_\theta}(s')\| \\
&\leq \frac{1}{1 - \disc^2} \|\nabla \log \pi_\theta(a|s)\| \sum_{s,a} \tilde{\nu}_{\pi_\theta}(s,a) W_{\pi_\theta}(s,a) \\ & \quad + \frac{2 \disc}{1 - \disc^2} \sum_{s,a,s'} \tilde{\nu}_{\pi_\theta}(s,a) P(s'|s,a) \|\nabla V_{\pi_\theta}(s')\| \\
&\leq \frac{C_\psi}{1 - \disc^2} \sum_{s,a} \tilde{\nu}_{\pi_\theta}(s,a) W_{\pi_\theta}(s,a) + \frac{2 \disc}{1 - \disc^2} \sum_{s,a,s'} \tilde{\nu}_{\pi_\theta}(s,a) P(s'|s,a) \|\nabla V_{\pi_\theta}(s')\| \\
&\leq \frac{C_\psi R_{\text{max}}}{(1 - \disc^2)(1 - \disc)^2} + \frac{2 \disc R_{\text{max}} C_\psi}{(1 - \disc^2)(1 - \disc)^2}  \numberthis \label{eq:NormGradU}
\end{align*}
Combining \eqref{eq:boundonJ} and \eqref{eq:NormGradU}, we obtain $K_1$:
\begin{samepage}
\begin{align*}
\|\nabla L(\theta_t)\| \leq & \|\nabla J(\theta_t)\| + \lambda \|\nabla U(\theta_t)\| + 2 \lambda |J(\theta_t)| \|\nabla J(\theta_t)\| \\
\leq & \frac{R_{\text{max}} C_\psi}{(1 - \disc)^2} + 2 \lambda \frac{R_{\text{max}} C_\psi}{(1 - \disc)^3} + \lambda \|\nabla U(\theta_t)\| \\
\leq & \frac{R_{\text{max}} C_\psi}{(1 - \disc)^2} + 2 \lambda \frac{R_{\text{max}} C_\psi}{(1 - \disc)^3} + \lambda \left( \frac{C_\psi R_{\text{max}}}{(1 - \disc^2)(1 - \disc)^2} + \frac{2 \disc R_{\text{max}} C_\psi}{(1 - \disc^2)(1 - \disc)^2} \right) \\ 
=& K_1
\end{align*}
\end{samepage}
Next, we bound (A) in \eqref{eq:SGD-ABC} as follows:
\begin{align*}
& \left\| \mathbb{E}_{t} \left[   \nabla \hat{J}(\theta_{t})  - \nabla J(\theta_{t}) \right] \right\|  \leq d^{\frac 12}  \left| \mathbb{E}_{t} \left[ \nabla_{i} \hat{J}(\theta_{t})  - \nabla_{i} J(\theta_{t}) \right] \right|  \\
&   \left| \mathbb{E}_{t} \left[ \nabla_{i} \hat{J}(\theta_{t})  - \nabla_{i} J(\theta_{t}) \right] \right|  \stackrel{(a)}{=}  \left| \mathbb{E}_{t} \left[  \frac{\phi_{v}(s_0)^{\top} v_{m}^{+} - \phi_{v}(s_0)^{\top} v_{m}}{p_{t} \Delta_{i}(t)} - \nabla_{i} J(\theta_{t}) \right]\right| \\
& \stackrel{(b)}{=}   \left|\mathbb{E}_{t} \left[\tfrac{\phi_{v}(s_0)^{\top} v_{m}^{+} - \phi_{v}(s_0)^{\top} v_{m} + \phi_{v}(s_0)^{\top} \bar{v}^{+} - \phi_{v}(s_0)^{\top} \bar{v}^{+} -\phi_{v}(s_0)^{\top} \bar{v}+\phi_{v}(s_0)^{\top} \bar{v}}{p_{t} \Delta_{i}(t)} - \nabla_{i} J(\theta_{t}) \right]  \right| \\
& \stackrel{(c)}{=} \left| \mathbb{E}_{t} \left[ \frac{\phi_{v}(s_0)^{\top} (\bar{v}^{+} - \bar{v})}{p_{t} \Delta_{i}(t)} + \frac{\phi_{v}(s_0)^{\top} (v_{m}^{+} - \bar{v}^{+})+\phi_{v}(s_0)^{\top} (\bar{v} - v_{m})}{p_{t} \Delta_{i}(t)}  - \nabla_{i} J(\theta_{t}) \right] \right|  \\
& \leq \underbrace{\left| \mathbb{E}_{t} \left[ \frac{J(\theta_{t} + p_{t} \Delta(t)) - J(\theta_{t})}{p_{t} \Delta_{i}(t)} - \nabla_{i} J(\theta_{t}) \right]\right|}_{\text{(I)}}+ \underbrace{\left| \mathbb{E}_{t} \left[ \frac{\phi_{v}(s_0)^{\top} (v_{m}^{+} - \bar{v}^{+}) + \phi_{v}(s_0)^{\top} (\bar{v} - v_{m})}{p_{t} \Delta_{i}(t)} \right]\right| }_{\text{(II)}}, \label{eq:Bound-A}\numberthis 
\end{align*}
where (a) follows from substituting the value of the SPSA gradient estimate $\nabla_i \hat{J}(\theta_t)$;  
(b) follows from adding and subtracting $\fiv(s_{0})\tr\bar{v}^{+}$ and $\fiv(s_{0})\tr\bar{v}$,  
where $\bar{v}$ and $\bar{v}^{+}$ denote the fixed points for the unperturbed and perturbed policies, respectively;  
(c) follows from rearranging the terms;  
\eqref{eq:Bound-A} follows from \Cref{asm:criticapproxerror} (which states that the critic approximation error at the fixed point is zero).  
Consequently, the first term in (I) equals the actual value function.

We bound (I) in \eqref{eq:Bound-A} as follows: 
\begin{align*}
& \left| \mathbb{E}_{t} \left[\frac{J(\theta_t + p_t \Delta_i(t)) - J(\theta_t)}{p_t \Delta_i(t)} - \nabla_i J(\theta_t) \right]\right|  \\
& \stackrel{(a)}{\leq} \left| \mathbb{E}_{t} \left[  \frac{p_t (\nabla J(\theta_t))^\top \Delta(t) + \frac{L_J}{2} p_t^2 \| \Delta(t) \|^2}{\Delta_i(t) p_t} - \nabla_i J(\theta_t) \right]\right| \\
& \stackrel{(b)}{\leq} \left|\E_{t} \psq{ \sum_{j \neq i} \left( \frac{\Delta_j(t)}{\Delta_i(t)} \right) \nabla_j J(\theta_t)} \right| + \left| \E_t\psq{\frac{L_J p_t \| \Delta(t) \|^2}{2}}\right| \\
& \stackrel{(c)}{\leq} \frac{d L_J p_t}{2}, \label{eq:A1}\numberthis
\end{align*}
where (a) follows from the second-order Taylor expansion of \( J(\theta_t + p_t \Delta_i(t)) \) around \( \theta_t \), leveraging the fact that \( J(\theta) \) has a Lipschitz gradient (with constant \( L_J \)) to bound the quadratic term; 
(b) follows from the triangle inequality and expanding the inner product into a summation over components. Here, the first term has an expectation of zero because \(\Delta(t)\) is a Rademacher vector. Specifically, each component \(\Delta_j(t)\) satisfies \(\mathbb{E}_t[\Delta_j(t)] = 0\), and the independence of \(\Delta_j(t)\) and \(\Delta_i(t)\) ensures that the expectation of the ratio \(\frac{\Delta_j(t)}{\Delta_i(t)}\) is also zero. By the linearity of expectation, the entire summation contributes zero in expectation; 
(c) follows from bounding \( \| \Delta(t) \| \leq \sqrt{d} \).

We bound (II) in \eqref{eq:Bound-A} as follows: 
\begin{align*}
& \bigg{|} \mathbb{E}_{t} \left[\frac{ \phi_v(s_0)^{\top} (v_m^{+} - \bar{v}^{+}) + \phi_v(s_0)^{\top} (\bar{v} - v_m)}{p_t \Delta_i(t)} \right] \bigg{|} \\
& \stackrel{(a)}{\leq} \bigg{|} \mathbb{E}_{t} \left[ \frac{\|\phi_v(s_0)\| \|v_m^{+} - \bar{v}^{+}\| + \|\phi_v(s_0)\| \|\bar{v} - v_m\|}{p_t \Delta_i(t)} \right] \bigg{|} \\
& \stackrel{(b)}{\leq} \frac{\fivmax}{p_t} \left( \mathbb{E}_{t} \left[ \|v_m^{+} - \bar{v}^{+}\| \right] + \mathbb{E}_{t} \left[ \|v_m - \bar{v}\| \right] \right) \\
& \stackrel{(c)}{\leq} \frac{\fivmax}{p_t \sqrt{m}} 
\underbrace{\left( \frac{10^{\frac{1}{2}} e^{\frac{-k \lr \mu}{2}}}{\disc^2 \mu} \left( \max_{\theta_{i=1,\dots,n}} \mathbb{E}[\|w_0 - \bar{w}\|] \right)^{\frac{1}{2}} + \frac{10^{\frac{1}{2}} \sigma}{\mu} \right)}_{K_2} \\
& \stackrel{(d)}{\leq} \frac{\fivmax K_2}{p_t \sqrt{m}},
\label{eq:A2}\numberthis
\end{align*} 
where (a) follows from the Cauchy-Schwarz inequality; 
(b) follows from the upper bound on the norm of the features (\Cref{asm:bddFeatures}) and linearity of expectation; 
(c) follows from bounding the terms using the tail-averaged critic error bound in \eqref{eq:tailavg-expt-bd}; 
(d) follows from defining \( K_2 \) in step (c).

Combining \eqref{eq:A1} and \eqref{eq:A2} in \eqref{eq:Bound-A}, we obtain an upper bound for (A) in \eqref{eq:SGD-ABC} as: 
\begin{align*}
\left\| \mathbb{E}_{t} \left[ \nabla \hat{J}(\theta_t) - \nabla J(\theta_t) \right] \right\|
& \leq \frac{d^{\frac 32} L_J p_t}{2} + \frac{d^{\frac 12} \fivmax K_2}{p_t \sqrt{m}}. \numberthis \label{eq:Bound(A)} 
\end{align*}

We obtain the upper bound for (B) in \eqref{eq:SGD-ABC} using arguments parallel to those used to derive the upper bound for (A). The only difference lies in the feature vector, where \(\fiumax\) replaces \(\fivmax\). 
\begin{align*}
\left\| \mathbb{E}_{t} \left[ \nabla \hat{U}(\theta_t) - \nabla U(\theta_t) \right] \right\|
& \leq \frac{d^{\frac{3}{2}} L_U p_t}{2} + \frac{d^{\frac{1}{2}} \fiumax K_2}{p_t \sqrt{m}}. \numberthis \label{eq:Bound(B)}
\end{align*}

Next, we bound (C) in \eqref{eq:SGD-ABC} as follows:

The SPSA gradient estimate of the Lagrangian is denoted as 
\[
\nabla \hat{L}(\theta_t) = \nabla \hat{J}(\theta_t) - \lambda \left( \nabla \hat{U}(\theta_t) - 2 {\hat{J}}(\theta_t) \nabla \hat{J}(\theta_t) \right).
\]
Taking the expectation with respect to the sigma field \(\mathcal{F}_t = \sigma(\theta_k, k \leq t)\), denoted by \(\mathbb{E}_t\), we have
\begin{align*}
 & \mathbb{E}_{t} [\| \nabla \hat{L}(\theta_t) \|_2^2]  \stackrel{(a)}{\leq} 3 \mathbb{E}_{t}[\| \nabla \hat{J}(\theta_t) \|_2^2] + 3 \lambda^2 \mathbb{E}_{t}[ \| \nabla \hat{U}(\theta_t) \|_2^2] + 12 \lambda^2 \left( \frac{R_{\text{max}}}{1 - \disc} \right)^2 \mathbb{E}_{t}[\| \nabla \hat{J}(\theta_t) \|_2^2] \\
& \stackrel{(b)}{\leq} \max \left\{ 3 + 3 \left( \frac{2 \lambda R_{\text{max}}}{1 - \disc} \right)^2, 3 \lambda^2 \right\} \left( \| \nabla \hat{J}(\theta_t) \|_2^2 + \| \nabla \hat{U}(\theta_t) \|_2^2 \right) \\
& \stackrel{(c)}{\leq} \max \left\{ 3 + 3 \left( \frac{2 \lambda R_{\text{max}}}{1 - \disc} \right)^2, 3 \lambda^2 \right\} \left( d \left( \frac{2 R_{\text{max}}}{1 - \disc} \right)^2 \frac{1}{p_t^2} + d \left( \frac{2 R_{\text{max}}^2}{(1 - \disc)^2} \right)^2 \frac{1}{p_t^2} \right) \\
& \stackrel{(d)}{=} \frac{K_{3}}{p_{t}^2}, \numberthis \label{eq:Bound(C)}
\end{align*}
where (a) follows from \(\|a + b + c\|^2 \leq 3\|a\|^2 + 3\|b\|^2 + 3\|c\|^2\); (b) follows from taking the maximum of all coefficients; (c) follows from bounding the SPSA gradient estimate \(\left\|\frac{J(\theta_t + p_t \Delta_i(t)) - J(\theta_t)}{p_t \Delta_i(t)}\right\|^2 \leq~\left(\frac{2R_{\text{max}}}{(1 - \disc)p_t}\right)^2\) for the first term and similarly bounding the SPSA gradient estimate of the square-value function for the second term; and (d) follows by defining \( K_3 \) as a constant, which is the coefficient of \(\frac{1}{p_t^2}\) in (c).

Now, substituting the bounds obtained for (A) from \eqref{eq:Bound(A)}, (B) from \eqref{eq:Bound(B)}, and (C) from \eqref{eq:Bound(C)} into \eqref{eq:SGD-ABC}, we get
\begin{samepage}
\begin{align*}
& \mathbb{E}_{t} [L(\theta_{t+1})] \geq \mathbb{E}_{t} [L(\theta_{t})] \!+\! \alpha_t \mathbb{E}_{t} \left[ \|\nabla L(\theta_{t}) \|^{2} \right]  \!-\! \alpha_t K_1 \left( 1 + \frac{2 \lambda R_{\text{max}}}{1 - \disc} \right) \underbrace{\left\| \mathbb{E}_{t} \left[ \nabla \hat{J}(\theta_t) - \nabla J(\theta_t) \right] \right\|}_{\text{(A)}} \\
& \phantom{\hskip 60pt} - \lambda \alpha_t K_1 \underbrace{\left\| \mathbb{E}_{t} \left[ \nabla \hat{U}(\theta_t) - \nabla U(\theta_t) \right] \right\|}_{\text{(B)}} - \underbrace{\frac{L}{2} \alpha_t^2 \mathbb{E}_{t} \left[ \|\nabla \hat{L}(\theta_t) \|^2 \right]}_{\text{(C)}} \\
& \geq \mathbb{E}_{t} [L(\theta_t)] + \alpha_t \mathbb{E}_{t} \left[ \|\nabla L(\theta_t) \| \right] - \alpha_t K_1 \left( 1 + \frac{2 \lambda R_{\text{max}}}{1 - \disc} \right) \left( \frac{d^{\frac 32} L_J p_t}{2} + \frac{d^{\frac 12} \fivmax K_2}{p_t \sqrt{m}} \right) \\
& \quad - \lambda \alpha_t K_1 \left( \frac{d^{\frac 32} L_U p_t}{2} + \frac{d^{\frac 12} \fiumax K_2}{p_t \sqrt{m}} \right) - \frac{L \alpha_t^2}{2} \left( \frac{K_3}{p_t^2} \right)
\end{align*}
\end{samepage}
Rearranging the terms, we obtain
\begin{align*}
& \alpha_t \mathbb{E}_{t} \left[ \| \nabla L(\theta_t) \|^2 \right] 
\leq \mathbb{E}_{t} [L(\theta_{t+1})] - \mathbb{E}_{t} [L(\theta_{t})] \\ & \phantom{\hskip 85pt}+ \alpha_t K_1 \left( 1 + \frac{2 \lambda R_{\text{max}}}{1 - \disc} \right) \left( \frac{d^{\frac{3}{2}} L_J p_t}{2} + \frac{d^{\frac{1}{2}} \fivmax K_2}{p_t \sqrt{m}} \right) \\
& \phantom{\hskip 85pt} + \lambda \alpha_t K_1 \left( \frac{d^{\frac{3}{2}} L_U p_t}{2} + \frac{d^{\frac{1}{2}} \fiumax K_2}{p_t \sqrt{m}} \right) + \frac{L_1 \alpha_t^2 K_3}{2 p_t^2} \\
& \stackrel{(a)}{\leq} \mathbb{E} [H_t] - \mathbb{E} [H_{t+1}] + \frac{\alpha_t K_1 d^{\frac{3}{2}}}{2} \left( L_J \left( 1 + \frac{2 \lambda R_{\text{max}}}{1 - \disc} \right) + \lambda L_U \right) p_t \\
& \quad + \alpha_t K_1 K_2 d^{\frac{1}{2}} \left( \left( 1 + \frac{2 \lambda R_{\text{max}}}{1 - \disc} \right) \left( \fivmax + \lambda \fiumax \right) \right) \frac{1}{p_t \sqrt{m}} + \frac{\alpha_t^2 L_1 K_3}{2 p_t^2}, 
\end{align*}
\begin{align*}
\mathbb{E}_{t} \left[ \| \nabla L(\theta_t) \|^2 \right] & \stackrel{(b)}{\leq}
\frac{1}{\alpha_t} \left( \mathbb{E} \left[ H_{t+1} \right] - \mathbb{E} \left[ H_{t} \right] \right) + \frac{K_1 d^{\frac{3}{2}}}{2} \left( L_J \left( 1 + \frac{2 \lambda R_{\text{max}}}{1 - \disc} \right) + \lambda L_U \right) p_t \\
& \quad + K_1 K_2 d^{\frac{1}{2}} \left( \left( 1 + \frac{2 \lambda R_{\text{max}}}{1 - \disc} \right) \left( \fivmax + \lambda \fiumax \right) \right) \frac{1}{p_t \sqrt{m}} + \frac{\alpha_t L_1 K_3}{2 p_t^2},
\end{align*}
where (a) follows from defining \(H_t = L(\theta_t) - L(\theta_{*})\), where \(\theta_{*}\) is the optimal policy, and (b) follows from dividing both sides by \(\alpha_t\).

Summing from \(t = 1\) to \(n\), and taking the total expectation, we get
\begin{align*}
\sum_{t=1}^{n} \mathbb{E} \left[ \|\nabla L(\theta_t)\|^2 \right] 
& \leq \frac{C_1}{\alpha_t} + C_2 \sum_{t=1}^{n} p_t + \frac{C_3}{\sqrt{m}} \sum_{t=1}^{n} \frac{1}{p_t} + C_4 \sum_{t=1}^{n} \frac{\alpha_t}{p_t^2}.
\end{align*}
Here, we obtain \(|L(\theta)| \leq C_{1} = \frac{2R_{\text{max}}}{1-\disc} \left( 1 + \frac{\lambda R_{\text{max}}}{1-\disc} \right)\) after a telescoping sum. 

\noindent Dividing by \(n\) on both sides and setting \(\alpha_t = \alpha, p_t = p\), we get
\begin{align*}
\frac{1}{n} \sum_{t=1}^{n} \mathbb{E} \left[ \|\nabla L(\theta_t)\|^2 \right] 
& \leq \frac{C_1}{n \alpha} + C_2 p + \frac{C_3}{\sqrt{m} p} + \frac{C_4 \alpha}{p^2}.
\end{align*}
Setting \(\alpha = n^a, \, p = n^b, \, m = n^c\), we have
\begin{align*}
\mathbb{E} \left[ \|\nabla L(\theta_R)\|^2 \right] 
& \leq C_1 n^{-1-a} + C_2 n^b + C_3 n^{-b-c/2} + C_4 n^{a-2b}.
\end{align*}
Optimizing for \(a, b, c\), we find their values to be \(a = -\frac{3}{4}, \, b = -\frac{1}{4}, \, c = 1\). Substituting these values, we get
\begin{align*}
\mathbb{E} \left[ \|\nabla L(\theta_R)\|^2 \right] 
& \leq C_1 n^{-1/4} + C_2 n^{-1/4} + C_3 n^{-1/4} + C_4 n^{-1/4} \\
& = O(n^{-1/4}).
\end{align*}
\end{proof}

\end{document}